\colorlet{linkequation}{blue}
\renewcommand{\P}{\mathbb{P}}
\newcommand{\E}{\mathbb{E}}
\newcommand{\cN}{\mathcal{N}}
\newcommand{\R}{\mathbb{R}}
\newcommand{\N}{\mathbb{N}}
\newcommand{\tr}{\text{tr}}
\newcommand{\error}{\mathrm{error}}
\newtheoremstyle{myremark} %
    {\topsep}                    %
    {\topsep}                    %
    {\rm}                        %
    {}                           %
    {\bf}                        %
    {.}                          %
    {.5em}                       %
    {}  %
\newcommand{\TV}{\mathrm{TV}}
\newcommand{\Bin}{\mathrm{Bin}}
\DeclareSymbolFont{rsfs}{U}{rsfs}{m}{n}
\DeclareSymbolFontAlphabet{\mathscrsfs}{rsfs}
\newcommand{\poly}{\mathrm{poly}}
\def\bI{{\boldsymbol I}}
\def\bL{{\boldsymbol L}}
\def\bP{{\boldsymbol P}}
\def\bQ{{\boldsymbol Q}}
\def\bV{{\boldsymbol V}}
\def\bW{{\boldsymbol W}}
\def\be{{\boldsymbol e}}
\def\bu{{\boldsymbol u}}
\def\bv{{\boldsymbol v}}
\def\bw{{\boldsymbol w}}
\def\bx{{\boldsymbol x}}
\def\btheta{{\boldsymbol \theta}}
\def\bPhi{{\boldsymbol \Phi}}
\newcommand{\VC}{\mathrm{VCdim}}
\newcommand{\VCpar}{\mathrm{VCdimPF}}
\def\cG{{\mathcal G}}
\def\cP{{\mathcal P}}
\def\cF{{\mathcal F}}
\def\cS{{\mathcal S}}
\def\cI{{\mathcal I}}
\def\cG{{\mathcal G}}
\def\cP{{\mathcal P}}
\def\cH{{\mathcal H}}
\def\cA{{\mathcal A}}
\def\NNet{{\sf NN}}
\def\cD{{\mathcal D}}
\def\cX{{\mathcal X}}
\def\cY{{\mathcal Y}}
\def\cF{{\mathcal F}}
\def\cS{{\mathcal S}}
\def\cI{{\mathcal I}}
\def\cX{{\mathcal X}}
\def\btheta{{\boldsymbol \theta}}
\def\bP{{\boldsymbol P}}
\def\bL{{\boldsymbol L}}
\def\bsigma{{\boldsymbol \sigma}}
\newcommand{\fNN}{f_{\NNet}}
\DeclarePairedDelimiter{\ceil}{\lceil}{\rceil}
\newenvironment{fminipage}%
  {\begin{Sbox}\begin{minipage}}%
  {\end{minipage}\end{Sbox}\fbox{\TheSbox}}
\newcommand{\NNkjuntas}{\cF_{\NNet, k\mbox{-}\mathrm{juntas}}}
\newcommand{\kjuntas}{\cF_{k\mbox{-}\mathrm{juntas}}}
\newcommand{\NNdectrees}{\cF_{\NNet, \mathrm{size}\mbox{-}s,\mathrm{depth}\mbox{-}r\mbox{ }\mathrm{trees}}}\newcommand{\NNdectreesLRH}{\cF^{\tau\mbox{-}\mathrm{LRH}}_{\NNet, \mathrm{size}\mbox{-}s,\mathrm{depth}\mbox{-}r\mbox{ }\mathrm{trees}}}
\newcommand{\dectrees}{\cF_{\mathrm{size}\mbox{-}s,\mathrm{depth}\mbox{-}r\mbox{ }\mathrm{trees}}}
\newcommand{\ANDfun}{\mathrm{AND}}
\newcommand{\algdepth}{R}
\newcommand{\dstinto}{\lesssim_{p,dst}}
\newcommand{\size}{\mathsf{size}}
\newcommand{\Adst}{\cA_{\mathrm{dist}}}
\newcommand{\Alrn}{\cA_{\mathrm{learn}}}
\newcommand{\Adstf}{\cA_{\mathrm{dist},f}}
\numberwithin{equation}{section}
\newtheorem{theorem}{Theorem}[section]
\newtheorem{proposition}[theorem]{Proposition}
\newtheorem{lemma}[theorem]{Lemma}
\newtheorem{claim}[theorem]{Claim}
\newtheorem{observation}[theorem]{Observation}
\newtheorem{definition}[theorem]{Definition}
\newtheorem{informaldefinition}[theorem]{Informal Definition}
\newtheorem{informaltheorem}[theorem]{Informal Theorem}
\theoremstyle{remark}
\newtheorem{remark}[theorem]{Remark}
\title{Towards a theory of model distillation}
\author{Enric Boix-Adser\`a \\ \texttt{eboix@mit.edu}}
\begin{document}
\maketitle
\vspace{-2em}

\begin{abstract}

Distillation is the task of replacing a complicated machine learning model with a simpler model that approximates the original \cite{buciluǎ2006model,hinton2015distilling}. Despite many practical applications, basic questions about the extent to which models can be distilled, and the runtime and amount of data needed to distill, remain largely open.

To study these questions, we initiate a general theory of distillation, defining  PAC-distillation in an analogous way to PAC-learning \cite{valiant1984theory}. As applications of this theory: (1) we propose new algorithms to extract the knowledge stored in the trained weights of neural networks -- we show how to efficiently distill neural networks into succinct, explicit decision tree representations when possible by using the ``linear representation hypothesis''; and (2) we prove that distillation can be much cheaper than learning from scratch, and make progress on characterizing its complexity.
\end{abstract}

\setcounter{tocdepth}{2}
\tableofcontents
\clearpage

\section{Introduction}\label{sec:introduction}

This paper studies model distillation, which is the task of replacing a complicated machine learning model with a simpler one that approximates it well \cite{buciluǎ2006model,hinton2015distilling}.

Distillation has significant practical applications. First, distillation is an important tool for improving
computational efficiency: large models or ensembles of models can often be distilled to smaller models which are deployable at lower computational cost \cite{buciluǎ2006model,hinton2015distilling,polino2018model,frankle2018lottery,gou2021knowledge}. Second, a growing body of work uses distillation for 
interpretability: a ``black-box'' model such as a neural network can be studied by distilling it to the closest model from a more transparent class of models, such as linear functions or decision trees (see, e.g., \cite{arrieta2020explainable} and references within).

While distillation is widely used in practice, and sometimes succeeds in replacing large, complex models by smaller or simpler models with a minor loss in accuracy, a number of basic questions remain largely open:

\begin{enumerate}
\item \emph{To what extent can a given model be distilled, and on what properties of the model does this depend?}

\item 
\emph{How much data and runtime are needed to distill a given model?}
\end{enumerate}

\subsection{Contributions}
This paper makes progress on these questions. We summarize our contributions below:

\begin{itemize}
\item  \textbf{Formalization of distillation}: First, we formalize the distillation task via a new definition that we call \textit{PAC-distillation}. Our framework allows us to rigorously approach the above questions, letting us prove statements about the amount of data and runtime needed in order to distill a class of models into another class of models; see Section~\ref{sec:def-pac-dist}.

\item \textbf{New tools to distill neural networks}: Using this formalization, we propose novel algorithms to distill neural networks. Our algorithms exploit the structure in trained neural networks' weights to distill the models 
into more succinct and transparent forms. Most notably, we provide a provably efficient algorithm that distills a network that \textit{implicitly computes a decision tree} into an
\textit{explicitly-represented decision tree} which compactly represents the original network; see below for a schematic, and see Section~\ref{sec:case-study} for details.

\begin{figure}[h]
\centering
\includegraphics[trim={4cm 20cm 26cm 6cm},clip, scale=0.25]{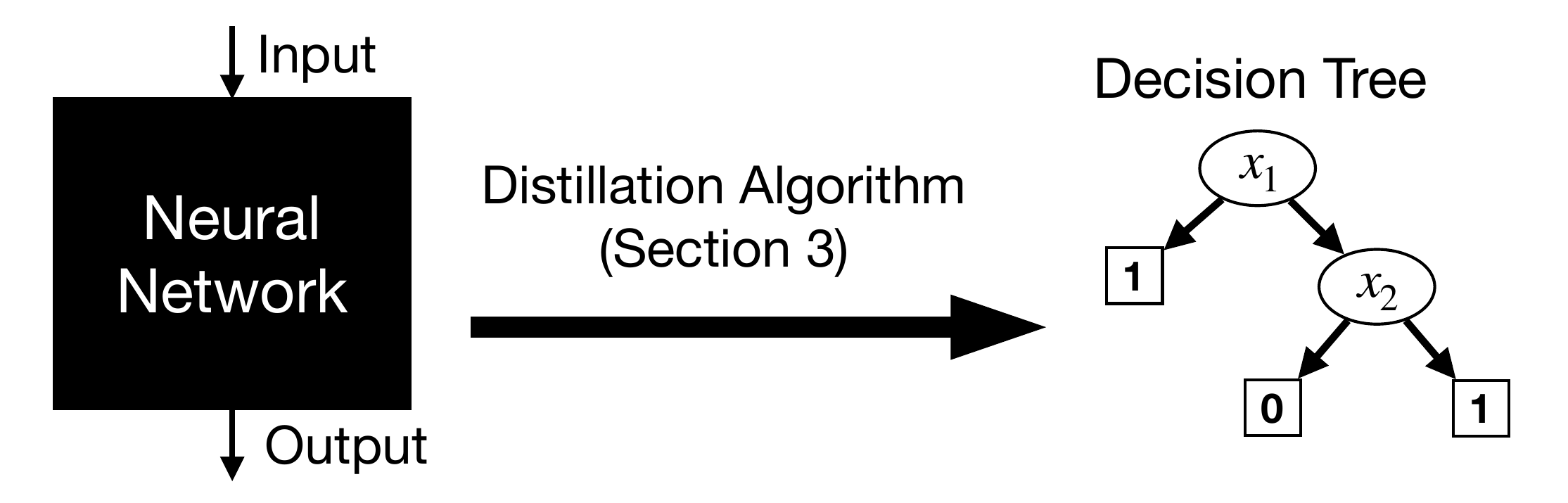}
\end{figure}

\item \textbf{General theory of distillation}: More generally, we initiate a computational and statistical theory of distillation that applies to arbitrary models beyond neural networks. A takeaway of our results is that in many cases distilling a trained model can be a much cheaper task in terms of data and compute than learning a model from scratch. This is an attractive property of distillation, since it means that distillation may be feasible even when few computing and data resources are available; see Sections~\ref{sec:computational} and \ref{sec:statistical}.

\item \textbf{Open problems}: Finally, we discuss extensions and new open directions; see Section~\ref{sec:extensions}.

\end{itemize}
\noindent Let us now delve into finer detail on each point.

\subsubsection{Defining PAC-distillation} We first introduce PAC-distillation, which formalizes the distillation problem from a class of models $\cF$ to a class of models $\cG$. This paper mostly focuses on the classification setting, where a model is a function $h : \cX \to \cY$ from datapoints to labels.
\begin{tcolorbox}[enhanced, frame hidden, sharp corners, boxsep=0pt, before skip=5pt, after skip=5pt, colback=blue!5!white]

\begin{informaldefinition}
Given a source model $f \in \cF$ and i.i.d. samples $x_1,\ldots,x_n$ from an unknown distribution $\cD$, 
{\em PAC-distillation} is the problem of finding a model $g \in \cG$ whose error approximating $f$ over the distribution is small:
$$\mathrm{error}_{f,\cD}(g) = \P_{x \sim \cD}[g(x) \neq f(x)]\,.$$

\end{informaldefinition}

\end{tcolorbox}

The full definition of PAC-distillation is in Section~\ref{sec:def-pac-dist} and variations, such as to the regression setting, are considered in Section~\ref{sec:extensions}.

\begin{remark}[Relation to PAC-learning] 
In analogy to PAC-learning \cite{valiant1984theory}, PAC stands for ``Probably Approximately Correct'', since the distillation process only has to be approximately correct with high probability. In the distillation setting we are given complete access to the source model $f \in \cF$ represented in some manner (e.g., if $\cF$ were a class of neural networks, $f$ would be given as the weights of the neural network). This contrasts with PAC-learning, where only the labels $f(x_1),\ldots,f(x_n)$ of the data samples would be known to the algorithm. 

Since a distillation algorithm gets extra information about the source function $f$ than just the sample labels, it is possible to PAC-distill from class $\cF$ to class $\cG$ whenever it is possible to PAC-learn concepts from class $\cF$ with hypotheses from class $\cG$. In this work, we will be primarily concerned with understanding when there is a separation -- i.e., when PAC-distillation requires fewer samples and/or runtime than PAC-learning.
\end{remark}

\begin{remark}[Relation to the KLCL model of \cite{ben2011learning}]
The closest setting studied in the literature is the Known-Labeling-Classifier-Learning (KLCL) model of \cite{ben2011learning}. The KLCL model corresponds to PAC-distillation, but in the special case where $\cF$ is the set of all possible labeling functions $\cX \to \cY$. In other words, in the KLCL model the labels are known, but they are not required to be instantiated by a classifier in a small class $\cF$. We discuss this highly-relevant setting further in Section~\ref{sec:related-work}.
\end{remark}

With the definition of PAC-distillation in hand, we now return to the questions raised above. Before describing our general results on fundamental computational and statistical limits for distillation, we focus on distilling neural networks, since these are of contemporary significance.

\subsubsection{New algorithmic tools to distill neural networks}

Neural networks are a highly expressive function class. As such, one should not expect that arbitrary neural networks can be usefully distilled into simpler models. The answer to whether a neural network can be distilled (question 1 above) therefore must hinge on \emph{some} structure being present in the weights of the trained neural network.

We present a novel distillation algorithm in this paper that makes use of a structure that may be present in a trained neural network: the \emph{linear representation property}. A network satisfies this property if important high-level features of the input can be expressed as a linear function of the trained neural network's internal representation. 
There is significant empirical evidence \cite{mikolov2013linguistic,alain2016understanding,vargas2020exploring,elhage2022toy,tigges2023linear,shi2016does,conneau2018you,allen2023physics,li2022emergent,nanda2023emergent,marks2023geometry,li2021implicit,dai2021knowledge,zhu2023physicsb,aspillaga2021inspecting,bolukbasi2016man,vargas2020exploring,ravfogel2022linear,ravfogel2023log,belrose2023leace,wang2023concept,hernandez2023inspecting,park2023linear,boix2022gulp} and growing theoretical evidence \cite{arora2016latent,abbe2022merged,damian2022neural,ba2022high,mousavi2022neural,abbe2023sgd,
dandi2023learning,bietti2023learning} supporting the hypothesis that trained neural networks satisfy the linear representation property; see Appendix~\ref{app:lrh-evidence} for a literature review.
\begin{tcolorbox}[enhanced, frame hidden, sharp corners, boxsep=0pt, before skip=5pt, after skip=5pt, colback=blue!5!white]
\begin{informaldefinition}[Linear representation property]
Let $\phi(x) \in \mathbb{R}^m$ be the neural network's {\em representation map}, which takes in an input $x$ and computes a vector embedding.\footnote{The representation map $\phi(x) \in \R^m$ is typically the concatenation of the functions computed at the $m$ internal neurons, which is what we use in our experiments. There are other choices, such as the gradient of the network with respect to its parameters, or the activations of the penultimate layer, that also make sense.}
\vspace{1em}

Let $\cG$ be a collection of real-valued functions of the input $x$. For any $\tau > 0$, the $\tau$-bounded linear representation property with respect to $\cG$ states that for each function $g \in \cG$, there is $w_g \in \R^m$ such that $\|w_g\| \leq \tau$ and for all inputs $x$ we have
\begin{align*}
w_g^{\top} \phi(x) = g(x)\,.
\end{align*}
\end{informaldefinition}
\end{tcolorbox}

In this paper we prove that under the linear representation property a neural network that \textit{implicitly computes a decision tree} can be distilled into an \emph{explicit and succinct} decision tree. The set of functions $\cG$ that we assume that the trained network linearly represents is the set of the intermediate computations of the decision tree, consisting of all of AND statements computed by paths starting from the root and going to any internal node or leaf. Our distillation algorithm is 
efficient (polynomial time). 

\begin{tcolorbox}[enhanced, frame hidden, sharp corners, boxsep=0pt, before skip=5pt, after skip=5pt, colback=blue!5!white]\begin{informaltheorem}[The linear representation property can be used to efficiently distill neural networks into decision trees]
Suppose that we are given:
\begin{itemize}
    \item a neural network $\fNN(x)$ that implicitly computes a decision tree of size $s$ and depth $r$
    \item the network's representation $\phi(x) \in \R^m$, which satisfies the $\tau$-bounded linear representation property with respect to the internal computations of the decision tree
\end{itemize}
Then, if the input distribution is uniform over $\{0,1\}^d$, the network can be distilled to an equivalent, explicitly represented tree of size $s$ in $\mathrm{poly}(d,s,\tau,\max_x \|\phi(x)\|,m)$ time, where each network and representation evaluation takes unit time.

If the input distribution $\cD$ is arbitrary over $\{0,1\}^d$, then distillation is possible in $\mathrm{poly}(d,2^r,\tau,\max_x \|\phi(x)\|, m)$ time and samples from $\cD$.
\end{informaltheorem}
\end{tcolorbox}

\begin{remark}[Comparison to query learning, and learning from random examples]
This constitutes the first polynomial-time algorithm for distillation of neural networks into decision trees. The prior best algorithm for this task uses query access to the neural network model and runs in time $\tilde{O}(d^2) \cdot s^{O(\log\log s)}$ when the input distribution is assumed to be uniform over the binary hypercube \cite{blanc2022properly}.
The caveat in our result is that we additionally require the linear representation property. Nevertheless, we verify experimentally that this property can hold in practice. In Section~\ref{sec:empirical},  we implement our algorithm to successfully extract decision trees from deep neural networks that have been trained to learn a decision tree. 

Our distillation result should also be contrasted with the current best theoretical guarantee of $d^{O(\log(s))}$ time and samples for learning decision trees from random examples \cite{ehrenfeucht1989learning}, and which is not guaranteed to return a $s$-size tree but can potentially return a tree of $d^{\Omega(\log(s))}$ size. Thus, this is a natural setting where distillation can potentially be much faster than learning from random examples.
\end{remark}

\begin{remark}[Beyond decision trees] Of course, neural networks can and do learn functions that are \emph{not} given by small decision trees, so distillation into small decision trees may not be possible. We believe that under the linear representation property it may be possible to efficiently distill neural networks into other simple model classes that are more expressive than decision trees but better capture the internal computations of the network. Indeed, we view leveraging of the linear representation property as the more significant conceptual contribution of this theorem, rather than the specific polynomial-time algorithm for decision tree distillation.
In ongoing work, we are currently exploring distillation to other classes beyond decision trees, as mentioned in Section~\ref{sec:extensions}.

\end{remark}

The above result addresses the questions 1 and 2, by giving specific, nontrivial conditions under which models with hidden structure can be efficiently distilled (and more broadly our result suggests that the linear representation property may be useful for distilling networks beyond the setting of decision trees). We now turn to providing theoretical principles on distillation that address questions 1 and 2 in more generality.

\subsubsection{General computational and statistical theory of distillation} We initiate a general computational and statistical theory of distillation, analogous to the theory of PAC-learning. First, we observe that distillation can be used to define a web of computational reductions. Distillation creates a partial order between model classes, capturing the \emph{computational feasibility} of converting from one model class to another. We are optimistic that this framework can lead to a computational theory of interpretability, formalizing what it means for a given model class to be uniformly more interpretable than another. This is discussed in Section~\ref{sec:computational}.

Towards deriving fundamental statistical limits, 
we also explore the sample complexity of distillation. 
One of our main results is on \textit{perfect distillation},  
which is the setting where for every model in $\cF$ it is possible to distill arbitrarily well to $\cG$ given sufficiently many samples. In this setting we characterize the sample complexity of distillation for finite and countable model classes, and we show that very few samples are needed to distill.
\begin{tcolorbox}[enhanced, frame hidden, sharp corners, boxsep=0pt, before skip=5pt, after skip=5pt, colback=blue!5!white]
\begin{informaltheorem}[Whenever it is possible to distill to arbitrary accuracy, very few samples are needed]\label{informalthm:perfect-distillation}
Let $\cF$ and $\cG$ be model classes of countable size. Whenever it is possible to ``perfectly distill'' from $\cF$ into $\cG$ (see Definition~\ref{def:perfectly-distillable}), it is possible to $\epsilon$-approximately distill with $\delta$ probability of error using only
$O(\log(1/\delta)/\epsilon)$ samples. Furthermore, this bound is tight.
\end{informaltheorem}
\end{tcolorbox}
The conceptual takeaway is that in general far fewer samples are needed to distill from class $\cF$ to class $\cG$ than are needed to learn the model in class $\cG$ from scratch. The number of samples needed to distill here is always bounded independently of $\cF$ and $\cG$, even in cases when $\cG$ is not learnable. This supports a machine learning paradigm in which a large 
institution or company trains models using a huge amount of data, and then smaller entities can distill the trained model using far 
fewer resources. Of course, Informal Theorem~\ref{informalthm:perfect-distillation} does not provide a computationally efficient algorithm, and tradeoffs between the amount of computation and data used to distill are highly interesting to study in future work.

Finally, we consider agnostic distillation, where the goal is to distill to a model that competes with the best model in $\cG$. We provide lower bounds and upper bounds on the sample complexity. Fan Chen and Sasha Rakhlin have pointed out in private communication \cite{chenrakhlin} that the sample complexity of agnostic distillation is equivalent to the sample complexity in the ``General Learning Setting'' of \cite{vapnik1995nature}, and therefore it does not admit a characterization via a VC-dimension-like quantity \cite{ben2017learning}.

\subsection{Related work}\label{sec:related-work}

\paragraph{Distillation applied to efficient inference} 
Model distillation was proposed in \cite{buciluǎ2006model} under the name ``model compression''. The setting in that work was finding a single small model that matched the performance of a large ensemble of models. This was later popularized by \cite{hinton2015distilling}, which proposed the method of training a small neural network model with a loss function that drove it to match the logits of the larger ensemble model. Extensive empirical work has shown that this method can often distill large neural network models to smaller ones with minimal loss in accuracy (see e.g., the surveys \cite{gou2021knowledge,xu2024survey} and references within). There are alternative methods for distilling large models to smaller ones that also have empirical success, such as methods based on pruning \cite{zhu2017prune,frankle2018lottery,michel2019sixteen,wang2019structured,blalock2020state}, methods based on replacing individual layers with lightweight approximations (e.g., \cite{massaroli2023laughing,sharma2023truth}), and methods based on quantizing the weights of the models \cite{gupta2015deep,gholami2022survey}.

\paragraph{Distillation applied to interpretability} Distillation is also an important tool in the model interpretability literature, where it is sometimes known as ``model simplification'' (see e.g., the survey \cite{arrieta2020explainable} and references within). This approach to interpretability seeks to extract rules from complicated models such as neural networks by distilling them, if possible, into small logical formulas \cite{mcmillan1991rule,mcmillan1992rule,thrun1993extracting,andrews1995survey}, decision trees \cite{craven1994using,craven1995extracting,breiman1996born,van2007seeing,johansson2011one,zhou2016interpreting,bastani2017interpretability,vandewiele2017genetic,frosst2017distilling}, generalized additive models \cite{liang2008structure,ribeiro2016should,tan2018distill}, or wavelets \cite{ha2021adaptive}. Those more transparent models can then be understood or interpreted using downstream techniques (such as model purification \cite{lengerich2020purifying} in the case of generalized additive models), yielding insight into the original model.

\textit{Other approaches to interpretability}. It should be noted that there are many other approaches to interpreting black-box models outside of distillation, such as methods based on feature visualization and attribution (see, e.g., \cite{murdoch2019interpretable,arrieta2020explainable} and references within). Another angle is to avoid training black-box models and instead train models that are interpretable by design
(see, e.g., \cite{rudin2022interpretable,chattopadhyay2022interpretable}).

\paragraph{Mathematical analyses of distillation}
As previously discussed, the most related work is \cite{ben2011learning}, which formulates the Known-Labeling-Classifier-Learning (KLCL) model. Under this model, the labels are known, but the distribution over the inputs is not. The goal is to find a model that competes with the best possible model in a class $\cG$. This corresponds to agnostic PAC-distillation, where the source class $\cF$ is the set of all possible labeling functions. \cite{ben2011learning} proves a trichotomy in terms of the dependence on $\epsilon$: either $0$ samples, $O(1/\epsilon)$, or $\Omega(1/\epsilon^2)$ samples are needed, and these three regimes correspond to combinatorial properties of the class $\cG$. Furthermore, \cite{ben2011learning} proves that there are classes for which learning in the KLCL model is impossible with any finite number of samples. This implies that there are settings in which agnostic PAC-distillation is impossible with any finite number of samples.

The work of 
\cite{urner2011access} studies the related setting of semi-supervised learning, where there is a small amount of labeled data and a large amount of unlabeled data. That 
paper gives general sufficient conditions under which a two-stage algorithm involving learning and then distilling can reduce the amount of labelled data needed to learn, and it explores these conditions in the context of learning halfspaces and learning with nearest neighbors.

Another pair of highly-relevant works are \cite{zhou2018approximation} and \cite{zhou2022generic}, which study the stability of distilling models, showing that in practice a significant amount of data is needed to ensure that the same model is consistently outputted by distillation across various trials with independent samples. Of these papers, \cite{zhou2022generic} proposes a general procedure for distillation that treats the process as a multiple-hypothesis testing problem between student hypotheses, and provides upper bounds on the number of samples needed to distill with this procedure, based on certain measures of complexity of the model classes.

In the domain of neural networks, several hypotheses for the practical effectiveness of distillation have been proposed. In \cite{allen2020towards} it is argued that distillation succeeds because of ``multi-view'' structure in data that is learned by an ensemble of models, and is captured by the distilled model. In \cite{pham2022revisiting} it is argued that distillation prefers flatter minima in the loss landscape, which can even enable the distilled model to generalize better than the original. The paper \cite{phuong2019towards} analyzes the distillation of linear and deep linear models by optimizing the cross-entropy loss on the soft labels of the teacher model. There, it is proved that distillation with a small amount of data can obtain much lower risk than learning from the same amount of data, because of the interaction between the optimizer and the geometry of the data.

\subsection{Notation}
Let $[m] = \{1,\ldots,m\}$ denote the set of $m$ elements. Given a vector $v \in \R^m$, and a subset $S \subseteq [m]$ of indices, let $v_S \in \R^S$ denote the subvector indexed by those indices. Let $\mathrm{poly}(a_1,a_2,\ldots,a_k)$ denote a polynomially-bounded function of $a_1,\ldots,a_k$.

\section{PAC-distillation}\label{sec:def-pac-dist}
We formally define \textit{PAC-distillation}, which is the problem of translating a function from source class $\cF$ to an approximately equivalent function in target class $\cG$. The acronym PAC stands for ``Probably Approximately Correct'': the distillation must be $\epsilon$-approximately correct with probability at least $1 - \delta$ for some parameters $\epsilon,\delta$.
    
\begin{figure}
\centering

\begin{tabular}{c|c|c}
& \begin{tabular}{c}\textbf{PAC-learning \cite{valiant1984theory}} \\ concept class $\cF$, hypothesis class $\cG$\end{tabular} & \begin{tabular}{c}\textbf{PAC-distillation [{\color{green!50!black}this work}]} \\ source class $\cF$, target class $\cG$\end{tabular} \\
\hline
Unknown: & \begin{tabular}{c} distribution $\cD$ \\
concept $f \in \cF$\end{tabular} & distribution $\cD$ \\
\hline
Known: & \begin{tabular}{c} 
examples $x_1,\ldots,x_n \stackrel{i.i.d.}{\sim} \cD$ \\ labels $f(x_1),\ldots,f(x_n)$ \end{tabular}& \begin{tabular}{c} 
examples $x_1,\ldots,x_n \stackrel{i.i.d.}{\sim} \cD$ \\ model $f \in \cF$ \end{tabular} \\
\hline
Want: & hypothesis $g \in \cG$ minimizing $\mathrm{error}_{f,\cD}(g)$& model $g \in \cG$ minimizing $\mathrm{error}_{f,\cD}(g)$
\end{tabular}
\caption{Summary of PAC-learning vs. PAC-distillation as defined in Section~\ref{sec:def-pac-dist}. The difference is that in PAC-distillation, the source model $f$ is one of the inputs to the distillation algorithm. For example, if the goal is to distill a neural network, then the neural network's weights are inputted to the distillation algorithm.
}
\end{figure}
    
    \subsection{PAC-learning} Our definition is based on \textit{PAC-learning} \cite{valiant1984theory}, a foundational definition in computational learning theory \cite{kearns1994introduction,mohri2018foundations}. In the PAC-learning formalism, there is:
    \begin{itemize}
        \item an \textit{input space} $\cX$, {\color{brown} \\ (e.g., the set of all possible images)}
        \item a \textit{label space} $\cY$, \\ {\color{brown} (e.g., the set of possible classification labels of those images)}
        \item a \textit{distribution} $\cD$ over the input space $\cX$, \\ {\color{brown} (e.g., the distribution of natural images on the Internet)}
        \item a \textit{concept class} $\cF$, which is a set of functions from $\cX$ to $\cY$, \\ {\color{brown} (e.g., the set of possible ground-truth classifications)}
        \item and a \textit{hypothesis class} $\cG$, which is also a set of functions from $\cX$ to $\cY$. \\ {\color{brown} (e.g., the set of  neural network classifiers)}
    \end{itemize}

The goal of a learning algorithm is to output an approximation $g \in \cG$ of a ground-truth function $f \in \cF$, with respect to the distribution $\cD$. A learning algorithm seeks to minimize the probability that the returned hypothesis makes a mistake on a fresh test data point:
\begin{align}\label{eq:error-of-hypothesis}
\mathrm{error}(g) = \mathrm{error}_{f,\cD}(g) = \P_{x \sim \cD}[g(x) \neq f(x)]\,.
\end{align}
A major difficulty is that the learning algorithm does not have direct access to (i) the ground-truth function $f$, or (ii) the distribution $\cD$. The inputs are a set of random examples $S = (x_1,\ldots,x_n) \in \cX^n$ drawn i.i.d. from $\cD$, and corresponding ground-truth labels $L = (f(x_1),\ldots,f(x_n)) \in \cY^n$. The learner outputs a hypothesis:
$$\Alrn(S,L) \in \cG.$$ 
    
A concept class $\cF$ is efficiently PAC-learnable with hypothesis class $\cG$ if there is a learner that outputs a low-error hypothesis using a small number of samples and small amount of time.\footnote{Typically, time and samples are required to scale polynomially in the input and classifier size for PAC-learnability. The definition here is more convenient to discuss fine-grained runtime and sample complexity bounds as in this paper.}
\begin{tcolorbox}[enhanced, frame hidden, sharp corners, boxsep=0pt, before skip=5pt, after skip=5pt, colback=blue!5!white]
    \begin{definition}[PAC-learning; \cite{valiant1984theory}]
        Concept class $\cF$ is $(\epsilon,\delta)$-learnable by algorithm $\Alrn$ in $n$ samples and $t$ time if for any distribution $\cD$ on $\cX$, and any concept $f \in \cF$, the algorithm runs in $\leq t$ time and
        \begin{align*}
            \P_{S \sim \cD^n}[\mathrm{error}_{f,\cD}(\Alrn(S,L)) \leq \epsilon] \geq 1- \delta\,.
        \end{align*}
    \end{definition}
    \end{tcolorbox}

    \subsection{PAC-distillation} 
    We adapt the definition of PAC-learnability to the distillation setting, where we refer to $\cF$ as the \textit{source class}, and to $\cG$ as the \textit{target class}. The difference is that the distillation algorithm $\Adst$ is also given the source function $f \in \cF$ as one of its inputs:
    $$\Adst(S,f) \in \cG\,.$$

\begin{tcolorbox}[enhanced, frame hidden, sharp corners, boxsep=0pt, before skip=5pt, after skip=5pt, colback=blue!5!white]
    \begin{definition}[PAC-distillation] Source class $\cF$ is $(\epsilon,\delta)$-distillable into target class $\cG$ by algorithm $\Adst$ in $n$ samples and $t$ time if for any distribution $\cD$ on $\cX$, and any source model $f \in \cF$, the algorithm runs in $\leq t$ time and:
        \begin{align*}
            \P_{S \sim \cD^n}[\mathrm{error}_{f,\cD}(\Adst(S,f)) \leq \epsilon] \geq 1- \delta\,.
        \end{align*}

    \end{definition}
    \end{tcolorbox}

PAC-distillation is an easier problem than PAC-learning because given the samples $S = (x_1,\ldots,x_n)$ and the function $f \in \cF$, the labels $L = (f(x_1),\ldots,f(x_n))$ can be computed, so a learning algorithm can be applied. Nevertheless, distillation remains challenging from the statistical angle because the distribution $\cD$ is unknown, and also from the computational angle because one must convert between the representation of functions in function class $\cF$ and those in function class $\cG$.

If the source class $\cF$ and the target class $\cG$ are equal (i.e., $\cF = \cG$), as is often the case for learning, then the distillation problem is trivial -- the algorithm can simply take in $f \in \cF$ and output $f \in \cG$. The distillation problem is challenging and interesting when $\cF$ and $\cG$ are distinct function classes.

\subsubsection{Agnostic PAC-distillation}

We also define an \textit{agnostic} variant of PAC-distillation. The goal is to find $g \in \cG$ that competes with the best possible target function in $\cG$ instead of having small absolute error.
\begin{tcolorbox}[enhanced, frame hidden, sharp corners, boxsep=0pt, before skip=5pt, after skip=5pt, colback=blue!5!white]
\begin{definition}[Agnostic PAC-distillation]
Source class $\cF$ is $(\epsilon,\delta)$-agnostically distillable into target class $\cG$ by algorithm $\Adst$ in $n$ samples and $t$ time if for any distribution $\cD$ on $\cX$, and any source model $f \in \cF$, the algorithm runs in $\leq t$ time and:
\begin{align*}
\P_{S \sim \cD^n}[\mathrm{error}_{f,\cD}(\cA_{dist}(S,f)) - \min_{g \in \cG}\mathrm{error}_{f,\cD}(g) \leq \epsilon] \geq 1 - \delta\,.
\end{align*}
\end{definition}
\end{tcolorbox}
The term \textit{agnostic} is inherited from \textit{agnostic PAC-learning}. See Section~\ref{sec:extensions} for further extensions.

\section{Two case studies in distilling neural networks}\label{sec:case-study}

Since neural networks are the most common class of ``black-box functions''  in practice, we motivate PAC-distillation with two case studies showing how to distill neural networks that have learned simple logical functions. Our algorithms distill the neural networks into more interpretable forms in time faster than required to learn from scratch:
\begin{itemize}
\item In Section~\ref{ssec:juntas}, we warm up with the setting when the neural network has implicitly learned a \textit{junta}, i.e., a function of the input that depends only on a small subset of the features (formal definition below). Here, we use previous results on query learning \cite{bshouty2016exact} to efficiently distill the model into an \textit{explicit representation} of the junta.
\item In Section~\ref{ssec:decisiontrees}, we consider the more general setting when the neural network has implicitly learned a \textit{decision tree}. We provide a new result, showing that under the ``linear representation hypothesis'' (see, e.g., \cite{mikolov2013linguistic,elhage2022toy} and further references in Appendix~\ref{app:lrh-evidence}) we can distill the neural network into a decision tree in polynomial time.

\end{itemize}

\subsection{Warm-up: provably-efficient distillation of networks into juntas}\label{ssec:juntas}

We warm up with a previously-known result on learning juntas with queries \cite{kushilevitz1991learning,damaschke1998adaptive,damaschke1998computational,de2005optimal,bshouty2016exact}; this will set the stage for our new result in Sections~\ref{ssec:decisiontrees}.

\begin{tcolorbox}[enhanced, frame hidden, sharp corners, boxsep=0pt, before skip=5pt, after skip=5pt, colback=blue!5!white]
\begin{definition}[Junta]\label{def:junta}
A $k$-junta is a function $f : \{0,1\}^d \to \{0,1\}$ that depends only on a subset $S \subseteq [d]$ of the coordinates, of size $|S| = k$. Formally, $f(\bx) = h(\bx_S)$ for some $h : \{0,1\}^k \to \{0,1\}$.
\end{definition}
\end{tcolorbox}

\paragraph{Learning juntas from random examples is hard}  Learning $k$-juntas from noisy data is widely held to be a computationally hard problem: it is conjectured to require at least $\Omega(d^k)$ operations as the dimension $d$ grows  \cite{blum1994weakly}. This hardness result has been proved rigorously under restricted models of computation such as the Statistical Query (SQ) framework \cite{blum1994weakly}.

\paragraph{Can we distill networks into juntas efficiently?} Is it possible to distill a neural network into a junta in significantly fewer operations than learning the junta from scratch, avoiding the $\Omega(d^k)$ complexity barrier, and instead running in $d^{O(1)}$ time, where the constant in the exponent does not grow with $k$?

Formally, suppose that we are given a vector of parameters $\theta$ that specify a neural network $f_{\theta} : \{0,1\}^d \to \R$ that implicitly computes a $k$-junta. We seek to extract an explicit representation of the junta in terms of the subset $S \subseteq [d]$ of the coordinates and a truth table of the function on those coordinates. In other words, we seek to distill from the class
\begin{align*}
\NNkjuntas = \{\mbox{neural networks } f_{\theta} \mbox{ which implicitly compute a } k\mbox{-junta}\}
\end{align*}
into the class of $k$-juntas
\begin{align*}
\kjuntas = \{k\mbox{-juntas}, \mbox{ explicitly represented by } S \mbox{ and } h : \{0,1\}^k \to \{0,1\}\}\,.
\end{align*}
This is a nontrivial computational problem, since the neural networks in $\NNkjuntas$ are represented by the vector of weights $\theta$, whereas the functions in $\cF_{k\mbox{-juntas}}$ are represented by the subset of the coordinates on which they depend, and a $2^k$-size truth table on those coordinates.

We have been intentionally vague about the neural network's architecture, because it turns out that the architecture does not matter. Using only evaluation access to the network, it is possible to distill the network into a junta very 
efficiently. This follows from a previously-known result on learning with queries:

\begin{tcolorbox}[enhanced, frame hidden, sharp corners, boxsep=0pt, before skip=5pt, after skip=5pt, colback=blue!5!white]
\begin{proposition}[Efficient distillation of networks into juntas; cf. \cite{bshouty2016exact}]\label{prop:distilling-with-juntas} Suppose that each network evaluation takes $T$ time. Then, for any $0 < \delta < 1$, we can $(\epsilon=0,\delta)$-distill $\NNkjuntas$ into $\kjuntas$ in $0$ samples and $O(T \cdot k \log d + T \cdot 2^k \log\frac{1}{\delta}) +\poly(2^k,d)$ time.
\end{proposition}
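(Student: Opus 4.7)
The plan is to exploit the white-box access the distillation algorithm has to $f_\theta$: since the algorithm can evaluate the network at any input of its choosing, it needs no samples from $\cD$ and can afford to design queries carefully. I build the explicit junta in two stages, first identifying the relevant coordinate set $S \subseteq [d]$, then tabulating $h : \{0,1\}^k \to \{0,1\}$ by querying the $2^k$ assignments to $S$.

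For the discovery stage, I draw a pool of $N = O(2^k \log(1/\delta))$ uniformly random inputs from $\{0,1\}^d$ and evaluate $f_\theta$ on each one (cost $O(T \cdot N)$). I then iteratively build a set $\hat{S}$, starting from $\emptyset$, by repeating the following loop $k$ times. Given the current $\hat{S}$, partition the pool into buckets according to the restriction $x_{\hat{S}}$, and find any bucket containing two samples $x, y$ with $f_\theta(x) \neq f_\theta(y)$. Whenever $\hat{S} \subsetneq S$ there is some assignment $\alpha \in \{0,1\}^{|\hat{S}|}$ on which $f_\theta|_{\hat{S} = \alpha}$ is non-constant in the remaining $k - |\hat{S}|$ variables, so a uniformly random input lands in this slice \emph{and} carries the minority (resp.\ majority) label with probability at least $2^{-k}$; a Chernoff plus union bound over all ``relevant'' slices ($\leq 4^k$ in total, corresponding to choices of $\hat{S}$ and $\alpha$) implies that with pool size $N = O(2^k \log(1/\delta))$ we find a witness pair in every round with probability at least $1 - \delta$. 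Given a witness $(x, y)$, a standard bit-flipping binary search on $\{i \notin \hat{S} : x_i \neq y_i\}$ (flip half the disagreeing bits of $x$ toward $y$, query $f_\theta$ to see which half caused the label change, recurse) isolates a new coordinate of $S \setminus \hat{S}$ in $O(\log d)$ queries, which is then added to $\hat{S}$.

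After $k$ rounds we have $\hat{S} = S$ exactly. The tabulation stage then queries $f_\theta$ on each of the $2^k$ completions of $S$ (with, say, zeros on the irrelevant coordinates, which is harmless because $f_\theta$ is a junta) to populate the truth table $h$. Summing costs: $O(T \cdot 2^k \log(1/\delta))$ for the random pool, $O(T \cdot k \log d)$ for the $k$ binary searches, $O(T \cdot 2^k)$ for tabulation, and $\poly(2^k, d)$ for bookkeeping (bucketing samples, storing the table). Correctness is immediate because the output agrees with $f_\theta$ on every input in $\{0,1\}^d$, so $\epsilon = 0$.

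The main obstacle is the probabilistic argument that a \emph{single} random pool suffices for all $k$ rounds of Phase~2, so that the $\log(1/\delta)$ cost is paid only once rather than $k$ times. I would carry this out by defining the global ``good event'' that for every $T \subseteq S$ and every $\alpha \in \{0,1\}^{|T|}$ on which $f_\theta|_{T=\alpha}$ is non-constant, the pool contains at least one sample of each label in the slice; the union bound over the at most $4^k$ such (set, slice) pairs, combined with $\Pr[\text{label appears in slice}] \geq 2^{-k}$, then pins down the required pool size. If this clean union bound yields an extra $k$ factor, I would absorb it either into the $\poly(2^k, d)$ term or by re-using the binary-search queries as seeds for the next round's witness search, recovering the stated bound.
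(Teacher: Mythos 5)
Your algorithm is sound and is essentially the same one the paper leans on: the paper's proof of this proposition is a one-line citation to Theorem~13 of Bshouty--Costa, and your construction (random queries to find witness pairs with disagreeing labels, bit-flipping binary search to isolate one new relevant coordinate per round, then direct tabulation of the truth table on the recovered coordinate set) is the standard mechanism behind that result. Your correctness reasoning is right: internally generated uniform queries are not samples from $\cD$, so the sample count is $0$; the binary search only ever adds genuinely relevant coordinates, so $\hat{S} \subseteq S$ throughout; and under your good event the output agrees with $f_\theta$ on every input, giving $\epsilon = 0$. (One cosmetic point: since a $k$-junta need not depend on all $k$ coordinates, the loop should terminate when no witness pair exists rather than run exactly $k$ times, and the output can be padded to a $k$-junta.)

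The one place where your argument does not deliver the stated bound is exactly the bookkeeping you flagged. The union bound over the $\sum_{j\le k}\binom{k}{j}2^j \le 3^k$ pairs $(\hat{S},\alpha)$ forces a pool of size $N = \Theta\bigl(2^k(k+\log(1/\delta))\bigr)$, and this is not an artifact of the union bound: even the weaker sufficient condition that the pool hit every slice of the true relevant set is a coupon-collector event over up to $2^k$ slices and already costs $\Theta(k2^k)$ queries. This leaves an extra $O(T\cdot k\cdot 2^k)$ term of network evaluations beyond the claimed $O(T k\log d + T 2^k\log(1/\delta))$. Your first proposed fix --- absorbing it into $\poly(2^k,d)$ --- does not work, because that term carries no factor of $T$ (it accounts for bookkeeping, not network evaluations); your second fix (reusing binary-search queries as seeds) is not developed enough to evaluate. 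So as written you prove the proposition with the slightly weaker time bound $O(T k\log d + T 2^k(k+\log(1/\delta))) + \poly(2^k,d)$. This is fully adequate for everything the paper uses the proposition for (a $\poly(2^k,d)\cdot T$ distillation with $0$ samples, versus the $d^{\Omega(k)}$ barrier for learning), but to get the precise exponent-free-in-$k$ coefficient on $2^k$ you would need the more careful adaptive query strategy of the cited theorem rather than a single static random pool.
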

\end{tcolorbox}
\begin{proof}
Directly implied by Theorem~13 of \cite{bshouty2016exact} on the query complexity of learning juntas. Each query can be simulated by one evaluation of the neural network.
\end{proof}

The key points are that: (a) learning the $k$-junta from noisy data takes at least $\Omega(d^k)$ time under standard hardness assumptions \cite{blum1994weakly}; (b) we can distill an explicit representation of the junta from the trained neural network much faster, in $\poly(d,2^k)$ network evaluations and time, where the exponent on $d$ does not scale in $k$.

\paragraph{Significance of this result}
The distillation phase has vanishing computational cost compared to the learning phase. This supports a ``\textit{Learn first, distill later}'' paradigm of data science, where the bulk of the compute is first dedicated to training a model that fits the data using a flexible and general-purpose learning algorithm such as training a neural network\footnote{In fact, the model training for learning juntas runs in the conjectured optimal run-time \cite{barak2022hidden,abbe2023sgd,edelman2023pareto}, and the runtime adapts to hierarchical structure in the data \cite{abbe2021staircase,abbe2022merged,abbe2022non,abbe2023sgd}.}, and then the knowledge in this network can be efficiently extracted at much lower cost.

\subsection{Provably-efficient distillation of networks into decision trees}\label{ssec:decisiontrees}

Next, we consider the more general setting of distilling neural networks into decision trees, which are popular classifiers that are often touted for their transparency \cite{breiman1984classification,rudin2022interpretable}.

We provide a novel algorithm for distilling neural networks into decision trees in polynomial time, which is faster than the current fastest algorithms for learning decision trees from data or queries \cite{ehrenfeucht1989learning,blanc2022properly}, but our algorithm uses the neural network's structure beyond evaluation access to the network.

Recall the definition of a decision tree; see Figure~\ref{fig:dec-tree} for an example.

\begin{tcolorbox}[enhanced, frame hidden, sharp corners, boxsep=0pt, before skip=5pt, after skip=5pt, colback=blue!5!white]
\begin{definition}[Decision tree]\label{def:dectrees} A decision tree $T$ is a binary tree, where each internal node is labeled by a variable in $\{x_1,\ldots,x_d\}$, and each leaf is labeled by the output $0$ or $1$. In a notational overlaod, the tree represents a function $T : \{0,1\}^d \to \{0,1\}$ where $T(\bx)$ is computed by starting at the root node, and following the path to the leaves determined by $\bx$.
\end{definition}
\end{tcolorbox}

\begin{figure}
\centering
\includegraphics[scale=0.5]{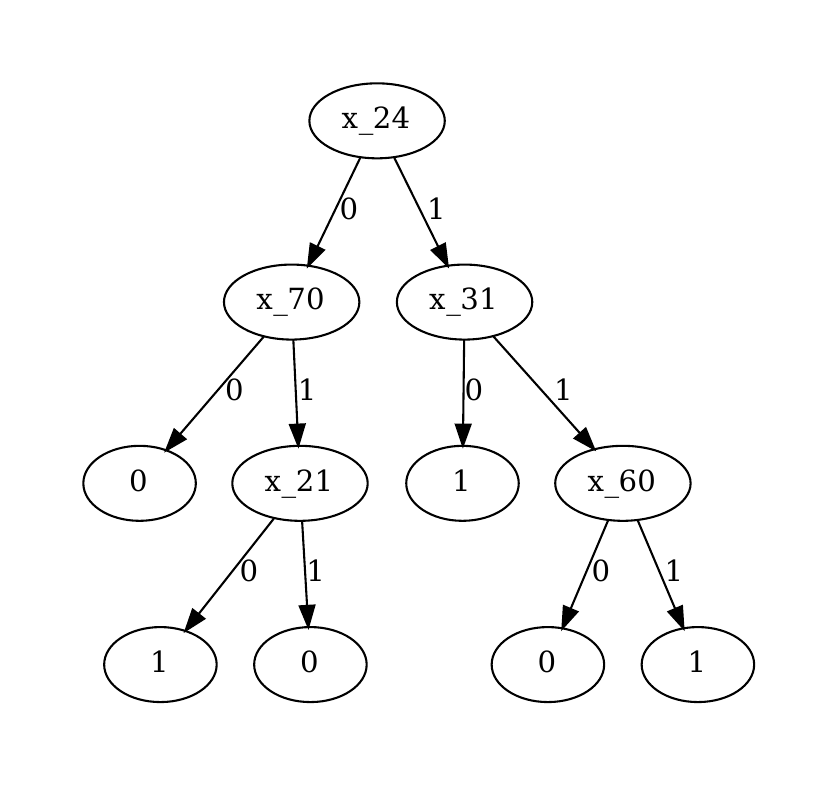}
\caption{Example of a depth-3 decision tree with 11 nodes and $d = 100$.}\label{fig:dec-tree}
\end{figure}

\paragraph{Learning decision trees from random examples is hard} The problem of finding the smallest decision tree from random examples that fits the data exactly or approximately is NP-hard \cite{angluin1976remarks,pitt1988computational,koch2023superpolynomial,koch2023properly}, which suggests that no polynomial time algorithm may be possible. The fastest approximation algorithm for learning decision trees from data runs in $d^{O(\log(s))}$-time and returns a tree that may be of size $d^{\Omega(\log(s))}$ \cite{ehrenfeucht1989learning}, assuming is a ground-truth size-$s$ tree that fits the data perfectly. This algorithm matches corresponding $d^{\Omega(\log(s))}$-complexity lower bounds for learning in the Statistical Query (SQ) model \cite{blum1994weakly}, indicating it might be optimal.

\paragraph{Learning decision trees from queries is also hard} Since learning decision trees in NP-hard, we can ask analogously to the previous section: is it easier to distill a model into a decision tree, than to learn the decision tree from scratch? Formally, is it possible to distill the class
$$\NNdectrees = \{\mbox{neural networks } f_{\theta} \mbox{ which implicitly compute decision trees}\}$$ into the class $$\dectrees = \{\mbox{decision trees, explicitly represented}\}\,,$$
in polynomial time and number of samples?

Unfortunately, if we only use evaluation access to the network this problem is still NP-hard \cite{koch2023properly}. And even under the extra assumption that the data distribution $\cD$ is uniform on $\{0,1\}^d$, the current best algorithm runs in $\tilde{O}(d^2) \cdot (s/\epsilon)^{O(\log((\log s)/\epsilon))}$ time \cite{blanc2022properly}. This is better than the naive $d^{O(\log(s))}$ algorithm learning from random samples, but does not yet achieve polynomial time.

\paragraph{We will use additional neural network structure to distill in polynomial time}

We sidestep the limitations of previous decision tree learning algorithms by using extra structure from the neural network, \textit{beyond evaluation access to the network}. This will ultimately allow us to distill networks into decision trees in polynomial time.\footnote{Our algorithm runs $\poly(d,1/\epsilon,s)$ time when the input distribution is uniform, and $\poly(d,1/\epsilon,s,2^r)$ time for arbitrary non-uniform distributions -- see Theorem~\ref{thm:decision-tree-pac-dst}. For arbitrary non-uniform distributions and $r = O(\log(s))$, this is $\poly(d,1/\epsilon,s)$ time, whereas previously known algorithms require $d^{\Omega(\log(s))}$-time although they do not require the Linear Representation Hypothesis. }

We must first explain the structure of the network that our distillation algorithm uses. We use the vector-valued representation map $\varphi_{\theta} : \cX \to \R^m$ associated with the neural network. The representation $\varphi_{\theta}(\bx) \in \R^m$ is the concatenation of all of the internal neurons' activations under input $\bx$.\footnote{Sometimes $\varphi_{\theta}$ is instead defined as the penultimate layer of activations, or the gradient of the network with respect to the weights.} Surprisingly, this representation $\varphi_{\theta}(\bx)$ can often represent useful high-level features of the data via linear regression. This finding was called the ``linear representation hypothesis'' (LRH) by \cite{elhage2022toy}. Formally:
\begin{tcolorbox}[enhanced, frame hidden, sharp corners, boxsep=0pt, before skip=5pt, after skip=5pt, colback=blue!5!white]
\begin{restatable}
[Linear representation hypothesis]{definition}{lrhdef}
Let $\cG$ be a collection of functions $g : \cX \to \R$ for each $g \in \cG$. For any $\tau > 0$, the $\tau$-LRH with respect to $\cG$ states that for all $g \in \cG$ there is a coefficient vector $\bw_g \in \R^m$ such that $\|\bw_g\| \leq \tau$ and
\begin{align*}
\bw_g \cdot \varphi(\bx) = g(\bx) \quad \mbox{ for all } \bx \in \cX\,.
\end{align*}
\end{restatable}
\end{tcolorbox}

The parameter $\tau$ is a bound on the norm of the coefficients that controls the complexity of the linear combination of the features in $\varphi_{\theta}$. A smaller $\tau$ means that the network can more easily represent the functions in $\cG$. The set $\cG$ varies depending on the setting, but should be thought of as a collection of ``high-level features'' that could include important intermediate steps in the computation of the neural network.

There is rapidly mounting empirical \cite{mikolov2013linguistic,tigges2023linear,shi2016does,conneau2018you,allen2023physics,li2022emergent,nanda2023emergent,marks2023geometry,li2021implicit,dai2021knowledge,zhu2023physicsb,fellbaum1998wordnet,aspillaga2021inspecting,bolukbasi2016man,vargas2020exploring,ravfogel2022linear,ravfogel2023log,belrose2023leace,wang2023concept,hernandez2023inspecting,park2023linear,boix2022gulp} and theoretical \cite{arora2016latent,abbe2022merged,damian2022neural,ba2022high,mousavi2022neural,abbe2023sgd,
dandi2023learning,bietti2023learning} evidence across several domains that trained neural networks satisfy the LRH, for appropriate sets of features $\cG$. To keep the presentation focused on decision tree distillation, we discuss evidence for the LRH in Appendix~\ref{app:lrh-evidence}.

\paragraph{The LRH for decision trees} In decision trees, the intermediate computations of the decision tree correspond to all paths of the tree that start at the root. We postulate that these intermediate computations are linearly encoded by the neural network's learned representation. This is defined formally below.

\begin{tcolorbox}[enhanced, frame hidden, sharp corners, boxsep=0pt, before skip=5pt, after skip=5pt, colback=blue!5!white]
\begin{definition} A {\em clause} is a set of literals $S \subseteq \{x_1,\ldots,x_d,\neg x_1,\dots,\neg x_d\}$. The function $\ANDfun_S(\bx)$ is defined to be 1 if all literals in $S$ are true, and $0$ otherwise. We say that $S$ is a nondegenerate $k$-clause if $|S| = k$ and each variable appears at most once in $S$ (i.e., for all $i$, we have $|S \cap \{x_i,\neg x_i\}| \leq 1$).
\end{definition}
\end{tcolorbox}
Given a decision tree $T$, the set of intermediate computations is:
\begin{align*}
\cG_T = \{\ANDfun_{S}(\bx) \mid \mbox{clauses } S \mbox{ formed by a path starting at the root of } T\}
\end{align*}
See Figure~\ref{fig:dec-tree-inter-comp} for an example.

\begin{figure}[h]
\centering
\begin{tabular}{cc}
\raisebox{-.5\height}{\includegraphics[scale=0.5]{figs/tree_example.pdf}} & \fbox{\begin{tabular}{c}Intermediate computations $\cG_T$: \\
$1$ \\
$\neg x_{24}$ \\
$x_{24}$ \\
$\neg x_{24} \wedge \neg x_{70}$ \\
$\neg x_{24} \wedge x_{70}$ \\
$x_{24} \wedge \neg x_{31}$ \\
$x_{24} \wedge x_{31}$ \\
$\neg x_{24} \wedge x_{70} \wedge x_{21}$ \\

$\neg x_{24} \wedge x_{70} \wedge x_{21}$ \\

$x_{24} \wedge x_{31} \wedge \neg x_{60}$ \\

$x_{24} \wedge x_{31} \wedge x_{60}$
\end{tabular}} \\
(a) & (b)
\end{tabular}
\caption{(a) Example of a depth-3 decision tree with 11 nodes and $d = 100$. (b) List of all intermediate computations in this tree; there is one $\ANDfun$ function per path starting at the root. Note that these paths do not have to end at the leaves. For convenience, we also include the path of length zero, which is why $\ANDfun_{\emptyset} \equiv 1$ is also one of the intermediate computations.}\label{fig:dec-tree-inter-comp}
\end{figure}
Networks that have learned decision trees and satisfy the LRH belong to the set
\begin{align*}
\NNdectreesLRH = \{&\mbox{neural networks } f_{\theta} \mbox{ which implicitly compute a decision tree } T \\
&\mbox{ such that the network's representation } \varphi_{\theta} \mbox{ satisfies } \tau\mbox{-LRH for features } \cG_T\}
\end{align*}

To make sure that these definitions are clear, we restate them in slightly different form. Let $T$ be a decision tree, and let $S \subseteq \{x_1,\ldots,x_d,\neg x_1,\ldots,\neg x_d\}$ be a clause formed by taking a path in the tree starting from the root. Then, a network $f_{\theta} \in \NNdectreesLRH$ which has implicitly learned $T$ will have a representation $\varphi_{\theta}$ such that there is a vector of coefficients $\bw_S \in \R^m$ of bounded norm $\|\bw_S\| \leq \tau$ such that
\begin{align*}
\bw_S \cdot \varphi_{\theta}(\bx) = \ANDfun_S(\bx) \mbox{ for all } \bx\,.
\end{align*}

We are now ready to present our main result in this subsection, which is a polynomial time algorithm for distilling networks into decision trees, under the LRH condition. Each evaluation of the network or the network's representation is assumed to take unit time.
\begin{tcolorbox}[enhanced, frame hidden, sharp corners, boxsep=0pt, before skip=5pt, after skip=5pt, colback=blue!5!white]
\begin{theorem}[Under LRH, networks computing decision trees can be efficiently distilled]\label{thm:decision-tree-pac-dst}
For any $\epsilon,\delta \in (0,1)$, there is an algorithm that $(\epsilon,\delta)$-distills from $\NNdectreesLRH$ to $\dectrees$ and runs in $\poly(d,m,1/\epsilon, s, 2^r,\log(1/\delta), \tau, B)$ time and $\poly(1/\epsilon,s,\log(d /\delta), \log(\tau B))$ samples from $\cD$, where $B \geq \max_{\bx} \|\varphi_{\theta}(\bx)\|$ is an upper bound on the norm of the network representation.
\\

Furthermore, if $\cD$ is uniform over $\{+1,-1\}$, then there is an algorithm that $(\epsilon,\delta)$-distills from $\NNdectreesLRH$ to $\dectrees$ in $\poly(d,m,1/\epsilon,s,\log(1/\delta),\tau,B)$ time.
\end{theorem}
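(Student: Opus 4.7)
My plan is to build the decision tree top-down, using LRH as a primitive to efficiently identify valid paths in the hidden tree $T$. The central subroutine is a norm-constrained linear regression test: for any candidate clause $S'$, solve
\begin{align*}
\min_{\|\bw\| \leq \tau}\, \frac{1}{n}\sum_{i=1}^n \bigl(\bw \cdot \varphi_\theta(\bx_i) - \ANDfun_{S'}(\bx_i)\bigr)^2
\end{align*}
on samples $\bx_1,\ldots,\bx_n \sim \cD$. By LRH, if $S'$ is a path of $T$ then a feasible $\bw$ achieving zero population error exists; a Rademacher-type uniform convergence bound over the norm-$\tau$ linear class with $B$-bounded features yields that $n = \poly(\tau, B, 1/\epsilon, \log(1/\delta))$ samples suffice to reliably certify small empirical error in this case. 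Crucially, LRH holds pointwise on all of $\cX$, so this regression test needs only unconditional samples from $\cD$.

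Given this subroutine, I grow the tree greedily from the root clause $\emptyset$. At an open node with current clause $S$, the algorithm performs (a) a \emph{leaf test} that checks whether $f_\theta$ is (approximately) constant on $\{\bx : \ANDfun_S(\bx) = 1\}$, declaring a leaf with the constant value if so, and otherwise (b) a \emph{split-variable search} that applies the LRH test to each $S \cup \{x_j\}$ and $S \cup \{\neg x_j\}$ for $j \notin \text{vars}(S)$, extending the tree along every passing $j$. Since LRH guarantees the true splitting variable always produces a passing extension, the correct tree is always among the discoverable candidates, and recursion terminates in at most $s$ splits per branch.

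The main obstacle is that LRH is one-sided: clauses outside $\cG_T$ may also admit a small-norm decoder and produce false positives in the split search. I handle this by branching on every passing extension while pruning any partial tree whose total size exceeds $s$, so that the true size-$s$ tree always survives, and finally selecting among the completed candidates the one with smallest empirical error against $f_\theta$ on fresh samples. A union bound over the polynomially many clauses encountered controls the total sample complexity of the LRH tests, matching the $\poly(1/\epsilon, s, \log(d/\delta), \log(\tau B))$ figure stated in the theorem. For arbitrary $\cD$ the $2^r$ runtime factor arises in step (a): estimating the conditional behavior of $f_\theta$ on $\{\ANDfun_S = 1\}$ by sampling can require $\Omega(2^r)$ unconditional samples when this event has worst-case probability $2^{-r}$ under $\cD$. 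For uniform $\cD$ on $\{0,1\}^d$, step (a) can itself be recast as a constrained LRH regression — both $\ANDfun_S$ and $\ANDfun_S \cdot f_\theta$ lie in the span of $\varphi_\theta$, the latter as a sum of leaf AND functions from $\cG_T$ — so leafness reduces to checking whether one is a scalar multiple of the other within the representation's linear span, bypassing conditioning on the rare event and saving the $2^r$ factor.
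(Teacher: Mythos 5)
Your first ingredient matches the paper's Phase~1: a norm-constrained linear-probe test (essentially Lemma~\ref{lem:linear-probe}) used to grow candidate clauses one literal at a time starting from $\emptyset$. But the proposal is missing the single step that makes the whole algorithm polynomial time. You correctly note that the LRH test is one-sided and that clauses outside $\cG_T$ may pass, and you propose to "branch on every passing extension" — yet you never bound \emph{how many} clauses can pass. A priori, if $N_i$ clauses pass at depth $i$, then up to $2d\,N_i$ candidates are tested at depth $i+1$, so the search can blow up to $(2d)^r$ without further argument; your union bound "over the polynomially many clauses encountered" assumes exactly what must be proved. The paper closes this gap with Lemma~\ref{lem:and-packing}: a Fourier-analytic/determinant argument showing that at most $O(2^{3k}\tau^2 B^2)$ nondegenerate $k$-clauses can be $2^{-k-2}$-approximately represented as norm-$\tau$ linear functions of $\varphi_\theta$ \emph{under the uniform distribution}. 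This is also why the paper runs the probe against $\mathrm{Unif}\{0,1\}^d$ rather than $\cD$: under an arbitrary $\cD$ (e.g., one concentrated on few points) essentially every AND function is cheaply representable, so your $\cD$-based regression test gives no pruning whatsoever. Pointwise LRH guarantees the true clauses still pass the uniform-distribution probe, which is all that is needed for completeness.

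There are two further problems downstream. First, even granting a polynomial candidate-clause set, "selecting among the completed candidates the one with smallest empirical error" enumerates candidate \emph{trees}, of which there can be exponentially many in $s$ even over a polynomial clause set; the paper instead estimates the correlations $v_S = \E_{\bx\sim\cD}[\ANDfun_S(\bx)(2f_\theta(\bx)-1)]$ and runs the dynamic program of Guijarro et al.\ and Mehta--Raghavan to extract the optimal tree over $\cS$ in polynomial time. Second, your account of the $2^r$ factor and of how uniformity removes it does not hold up: under arbitrary $\cD$ the event $\{\ANDfun_S=1\}$ can have probability zero (not $2^{-r}$), and your proposed uniform-case fix invokes linear representability of $\ANDfun_S\cdot f_\theta$, which is not among the functions $\cG_T$ covered by the hypothesis (it is a sum of up to $s$ of them, so it is representable only with norm $s\tau$, and "checking whether one is a scalar multiple of the other within the linear span" is not a well-defined test). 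In the paper the $2^r$ arises from the packing bound degrading as $2^{3i}$ with depth $i$, and the uniform case saves it simply by truncating the search at depth $R = O(\log(s/\epsilon))$, since under the uniform distribution the mass below depth $R$ is at most $\epsilon$.
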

\end{tcolorbox}

\paragraph{Significance of this result} Similarly to Proposition~\ref{prop:distilling-with-juntas}, this result supports the ``\textit{Learn first, distill later}'' paradigm. If $\tau,B,m \leq \mathrm{poly}(d)$ are polynomial-size, then our distillation algorithm takes only polynomial $\mathrm{poly}(d,2^r)$ time. On the other hand, learning a decision tree from random examples is conjectured to require the much larger time $d^{\Omega(r)}$, because $r$-juntas can be encoded as decision trees of depth $r$ with size $2^r$. Thus, once again, the bulk of the compute and data is required for the learning phase, and the distillation phase can be much cheaper.

Furthermore, this result illustrates that using extra neural network structure beyond query access can aid the design of distillation algorithms that run faster than the best known query learners.

For arbitrary distributions, our distillation algorithm takes exponential time in the depth, and thus is truly polynomial-time only when $r = O(\log(s))$ It is an interesting open problem whether it is possible to obtain polynomial dependence on the depth under the LRH condition.

\subsubsection{Proof of Theorem~\ref{thm:decision-tree-pac-dst}}

A building block of our distillation procedure is a ``linear probe'' subroutine, which checks whether a function $g : \cX \to \R$ can be approximated by a low-norm linear function of a representation $\varphi : \cX \to \R^m$.

\begin{tcolorbox}[enhanced, frame hidden, sharp corners, boxsep=0pt, before skip=5pt, after skip=5pt, colback=blue!5!white]
\begin{restatable}[Linear probe subroutine]{lemma}{linearprobelemma}\label{lem:linear-probe}
Given a function $g : \cX \to [-1,1]$, a representation map $\varphi : \cX \to \R^m$ of norm bounded by $B \geq \max_{\bx} \|\varphi(\bx)\|$, a coefficient norm-bound $\tau > 0$, error tolerance parameters $\epsilon,\delta > 0$, and a distribution $\cD$, there is a subroutine $\textsc{LinearProbe}(g,\varphi,B,\tau,\epsilon,\delta,\cD)$ that runs in $\poly(1/\epsilon,\log(1/\delta),\tau,B,m)$ time and draws $\poly(1/\epsilon,\log(1/\delta),\tau, B)$ samples from distribution $\cD$
and returns 
$\mathrm{probe} \in \{\mathsf{true}, \mathsf{false}\}$ such that:

\begin{itemize}
\item If there is $\bw \in \R^m$ such that $\|\bw\| \leq \tau$ and $\E_{\bx \sim \cD}[(\bw \cdot \varphi(\bx) - g(\bx))^2] \leq \epsilon$, then $\mathrm{probe} = \mathsf{true}$ with probability $\geq 1-\delta$.

\item If for all $\bw \in \R^m$ such that $\|\bw\| \leq \tau$ we have $\E_{\bx \sim \cD}[(\bw \cdot \varphi(\bx) - g(\bx))^2] \geq 2\epsilon$, then $\mathrm{probe} = \mathsf{false}$ with probability $\geq 1-\delta$.
\end{itemize}
\end{restatable}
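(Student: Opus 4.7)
\textbf{Proof plan for Lemma~\ref{lem:linear-probe}.} The plan is to implement \textsc{LinearProbe} as constrained empirical risk minimization followed by thresholding. Concretely, draw $n = \poly(1/\epsilon, \log(1/\delta), \tau, B)$ i.i.d. samples $\bx_1,\ldots,\bx_n \sim \cD$, solve the convex program
\begin{align*}
\hatQ \;=\; \min_{\bw \in \R^m,\; \|\bw\| \leq \tau}\; \frac{1}{n}\sum_{i=1}^n \bigl(\bw \cdot \varphi(\bx_i) - g(\bx_i)\bigr)^2,
\end{align*}
and return $\mathsf{true}$ if $\hatQ \leq \tfrac{3}{2}\epsilon$, else $\mathsf{false}$. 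This is a convex quadratic program with a single Euclidean-ball constraint, so it can be solved to sufficient accuracy in $\poly(m, n, \log(\tau B/\epsilon))$ time using standard methods (e.g., projected gradient descent on a strongly-convex smooth objective, or an interior-point solver).

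The correctness proof reduces to a uniform convergence statement over the class $\cH = \{\bx \mapsto (\bw\cdot\varphi(\bx) - g(\bx))^2 : \|\bw\|\leq \tau\}$. First I would note that on any input $|\bw\cdot \varphi(\bx)| \leq \tau B$ and $|g(\bx)|\leq 1$, so losses in $\cH$ are bounded in $[0,(\tau B+1)^2]$, and the loss $(u - g(\bx))^2$ is $2(\tau B+1)$-Lipschitz in $u = \bw\cdot\varphi(\bx)$ on the relevant range. Next I would invoke the standard Rademacher bound for linear function classes with norm-bounded weights over features of norm at most $B$: $\Rad_n(\{\bx \mapsto \bw\cdot\varphi(\bx): \|\bw\|\leq\tau\}) \leq \tau B/\sqrt n$. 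Pushing this through the Ledoux--Talagrand contraction inequality and a standard bounded-differences concentration argument yields, with probability $\geq 1-\delta$,
\begin{align*}
\sup_{\|\bw\|\leq \tau} \Bigl| \hat\E\bigl[(\bw\cdot\varphi - g)^2\bigr] - \E_\cD\bigl[(\bw\cdot\varphi - g)^2\bigr] \Bigr| \;\leq\; \frac{\epsilon}{4},
\end{align*}
provided $n = C(\tau B+1)^4\epsilon^{-2}\log(1/\delta)$ for a universal constant $C$, which is $\poly(1/\epsilon,\log(1/\delta),\tau,B)$ as required.

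Given this two-sided uniform bound, the correctness of the thresholding test is immediate: in the \textsf{true} case, the witness $\bw^\star$ with population error $\leq \epsilon$ has empirical error $\leq \tfrac{5}{4}\epsilon < \tfrac{3}{2}\epsilon$, so $\hatQ \leq \tfrac{3}{2}\epsilon$ and the subroutine returns $\mathsf{true}$; in the \textsf{false} case, every $\bw$ in the ball has empirical error $\geq 2\epsilon - \tfrac{\epsilon}{4} > \tfrac{3}{2}\epsilon$, so $\hatQ > \tfrac{3}{2}\epsilon$ and the subroutine returns $\mathsf{false}$.

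\textbf{Main obstacle.} The only real subtlety is chasing the polynomial dependence on $\tau$ and $B$ cleanly through the contraction step, since the Lipschitz constant of the squared loss itself scales with $\tau B$; this is what forces the $(\tau B)^4$ factor in $n$. An alternative that avoids this would be to first rescale so that $\bw\cdot\varphi$ has range $[-1,1]$ (absorbing a $\tau B$ factor into $g$), but one must then verify the comparison of thresholds still goes through; I would do the direct contraction argument above since it stays within the stated polynomial sample budget. Everything else — convexity of the ERM, existence of a poly-time convex solver, and the Rademacher/McDiarmid concentration — is standard.
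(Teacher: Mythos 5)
Your proposal is correct and follows essentially the same route as the paper: constrained empirical risk minimization over the ball $\|\bw\|\leq\tau$ solved by a standard convex optimizer (the paper uses Frank--Wolfe to additive error $\epsilon/4$), followed by uniform convergence via the Rademacher complexity of the norm-bounded linear class, contraction through the Lipschitz squared loss, and McDiarmid, and then a threshold test separating the $\epsilon$ and $2\epsilon$ cases. The only cosmetic difference is that you spell out the explicit threshold $\tfrac{3}{2}\epsilon$ and the $(\tau B+1)^4\epsilon^{-2}\log(1/\delta)$ sample count, which the paper leaves implicit in its $\poly$ bounds.
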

\end{tcolorbox}
\begin{proof}[Proof of Lemma~\ref{lem:linear-probe}]
The proof is a standard application of convex optimization and generalization bounds based on Rademacher complexity. See Appendix~\ref{app:linear-probe-proof}.
\end{proof}

With this subroutine in mind, we present the pseudocode for our distillation procedure in Algorithm~\ref{alg:distill-tree}. This algorithm proceeds in two phases. First, there is a a search phase, where we extract a set of $\ANDfun$ functions that can be efficiently represented by the network using the \textsc{LinearProbe} subroutine. Second, there is a stitching phase, where we use dynamic programming to combine these $\ANDfun$ functions that we have found to construct the best decision tree.

We hope that this two-phase blueprint will be generally useful for distillation algorithms in other settings: these algorithms could first extract a candidate set of high-level features from the network, and then distill to a model that uses these features as the backbone of its intermediate computations.

\begin{tcolorbox}[enhanced, frame hidden, sharp corners, boxsep=0pt, before skip=0pt, after skip=0pt, colback=blue!5!white, top=-8pt, left=2pt, right=2pt, bottom=5pt]
\begin{algorithm}[H]
   \caption{Distilling a neural network into a decision tree; Theorem~\ref{thm:decision-tree-pac-dst}}
   \label{alg:distill-tree}

\begin{algorithmic}[1]

    \State \begin{varwidth}[t]{\linewidth}{\bfseries Inputs:} Neural net $f_{\theta} : \{0,1\}^d \to \{0,1\}$ and its representation $\varphi_{\theta} : \{0,1\}^d \to \R^m$
    \par \hskip 4em Access to random samples from distribution $\cD$
    \par \hskip 4em Hyperparameters: $\algdepth \in \N$, $\epsilon,\delta > 0$
      \end{varwidth}
    
    \State {\bfseries Output:} Decision tree $\hat{T}$

\Statex

   \State {\color{blue} \#\# \textbf{Phase 1: Search for  $\ANDfun \mbox{s}$ that network can efficiently represent.}} 

    \State $\cS_0 \gets \{\emptyset\}$

    \For{$i=0$ \mbox{ to } $i = \algdepth-1$}
    \State \label{line:constructcPi} $\cP_i \gets \{S \in \cS_i \mbox{ s.t. } \textsc{LinearProbe}(\ANDfun_S,\varphi_{\theta},B,\tau,2^{-i-3},\frac{\delta}{2|\cS_i|\algdepth},\mathrm{Unif}\{0,1\}^d) = \mathsf{true}\}$
    \State $\cS_{i+1} \gets \bigcup_{S \in \cP_i} \textsc{Successors}(S)$ {\color{gray} \#  defined in \eqref{eq:successors-def}}
    
    \EndFor

    \State $\cS \gets \cS_0 \cup \cS_1 \cup \dots \cup \cS_{\algdepth}$

   \Statex

   \State {\color{blue}  \#\# \textbf{Phase 2: Construct best decision tree.}}
   \State $\hat{v}_S \gets \E_{\bx \sim \cD}[\ANDfun_S(\bx) (2f_{\theta}(\bx)-1)] \pm \epsilon/s$ \mbox{ \textbf{for each} } $S \in \cS$ 
   \State Return decision tree $\hat{T}$ maximizing $\mathrm{val}(\hat{T}, \hat{\bv})$, over all decision trees with $\cG_{\hat{T}} \subseteq \{\ANDfun_S \mid S \in \cS\}$
    
\end{algorithmic}
\end{algorithm}
\end{tcolorbox}

\paragraph{Details for Phase 1}

In the first phase of the algorithm, we iteratively construct a sequence of sets $\cS_0,\ldots,\cS_{\algdepth}$ of clauses, which will satisfy two properties with high probability:
\begin{itemize}
\item[(P1)] The sets contain intermediate computations of the decision tree: for any $\ANDfun_S \in \cG_T$ which is an AND function on $|S| = k$ variables where $k \leq \algdepth$, we have $S \in \cS_k$.
\item[(P2)] The size of the sets $\cS_i$ is bounded: $|\cS_i| \leq 2^{3\algdepth+2} \tau^2 B^2 d$.
\end{itemize}

In order to ensure property (P1), we iteratively grow the sets $\cS_i$ using the $\textsc{Successors}$ operation, which returns the set of clauses that could possibly be formed by taking one more step down the depth of the decision tree starting at a certain clause. For any clause $S \subseteq \{x_1,\ldots,x_d,\neg x_1,\ldots, \neg x_d\}$, the successors are:
\begin{align}\label{eq:successors-def}
\textsc{Successors}(S) = \{S \cup \ell \mid \ell \in \{x_1,\ldots,x_d,\neg x_1,\ldots, \neg x_d\} \mbox{ and } \ell \not\in S \mbox{ and } \neg\ell \not\in S\}\,.
\end{align}
For example, the successors of $S = \emptyset$ consist of all 1-clauses $\{x_1\},\ldots,\{x_d\}$ and $\{\neg x_1\},\ldots,\{\neg x_d\}$. Similarly, the successors of $S = \{\neg x_1\}$ consist of all 2-clauses where one of the literals is $\neg x_1$ and the other literal is neither $x_1$ nor $x_2$, i.e., $\{\neg x_1,x_j\}$ for all $j \neq 1$ and, $\{\neg x_1, \neg x_j\}$  for all $j \neq 1$.

In order to ensure property (P2), that the sets $\cS_i$ do not grow too large, we prune the search on each iteration by only adding the successors of clauses $S$ such that $\ANDfun_S$ that can be efficiently linearly represented by the network.  Our key lemma controls the number of distinct AND functions that can be approximately represented as low-norm linear combinations of the features in $\varphi_{\theta}$, and this means that the sizes of the sets $\cS_i$ cannot grow too large.

\begin{tcolorbox}[enhanced, frame hidden, sharp corners, boxsep=0pt, before skip=5pt, after skip=5pt, colback=blue!5!white]
\begin{restatable}[Linearly representing many ANDs requires high norm]{lemma}{andpackinglemma}\label{lem:and-packing}
Let $\cS$ be a collection of nondegenerate $k$-clauses. Let $\cG = \{\ANDfun_S \mbox{ for all } S \in \cS\}$. Suppose that $\varphi : \{0,1\}^d \to \R^m$ approximately satisfies the $\tau$-LRH with respect to $\cG$ in the sense that for $g \in \cG$ there is $\bw_g \in \R^m$ with $\|\bw_g\| \leq \tau$ such that
$$\E_{\bx \sim \mathrm{Unif}\{0,1\}^d}[(\bw_g \cdot \varphi(\bx) - g(\bx))^2] \leq 2^{-k-2}\,.$$
Then
$$|\cS| \leq 2^{3k+4} \tau^2 \E_{\bx \sim \mathrm{Unif}\{0,1\}^d}[\|\varphi(\bx)\|^2]  \,.$$
\end{restatable}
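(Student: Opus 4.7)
The plan is to translate the approximation guarantee into an inner-product lower bound that is linear in $|\cS|$, and then use the near-orthogonality of the AND functions on the uniform measure to obtain an opposing Cauchy--Schwarz upper bound that scales only sublinearly in $|\cS|$; combining the two estimates pins down $|\cS|$. Set $N=|\cS|$, $f_S = \ANDfun_S$, and $g_S = \bw_S \cdot \varphi$, so by hypothesis $\|g_S - f_S\|_{L^2}^2 \leq 2^{-k-2}$ while $\|f_S\|_{L^2}^2 = 2^{-k}$.

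First, I would establish a per-clause lower bound on the correlation $\langle g_S, f_S\rangle$. From the polarization identity $2\langle g_S, f_S\rangle = \|g_S\|^2 + \|f_S\|^2 - \|g_S - f_S\|^2$ and from dropping the non-negative term $\|g_S\|^2$, one gets
\[
\langle g_S, f_S\rangle \;\geq\; \tfrac{1}{2}\bigl(\|f_S\|^2 - \|g_S-f_S\|^2\bigr) \;\geq\; \tfrac{1}{2}\bigl(2^{-k}-2^{-k-2}\bigr) \;\geq\; 2^{-k-1}.
\]
Summing over $S \in \cS$ gives $\sum_S \langle g_S, f_S \rangle \geq N \cdot 2^{-k-1}$.

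Next, I would rewrite this aggregate correlation as a single $L^2$ pairing with $\varphi$. Since $g_S = \bw_S \cdot \varphi$ we have
\[
\sum_S \langle g_S, f_S\rangle \;=\; \E_\bx\Bigl[\varphi(\bx) \cdot \underbrace{\sum_{S:\,\bx \models S} \bw_S}_{=:\,\bp(\bx)\,\in\,\R^m}\Bigr],
\]
and Cauchy--Schwarz yields $\bigl(\sum_S \langle g_S, f_S\rangle\bigr)^2 \leq \E[\|\bp\|^2] \cdot \E[\|\varphi\|^2]$. The core of the proof is to control $\E[\|\bp(\bx)\|^2] = \sum_{S,S'} \langle f_S, f_{S'}\rangle\,(\bw_S\cdot \bw_{S'})$ without blowing up as $N^2$. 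The key structural inputs are that (i) $\langle f_S, f_{S'}\rangle = 0$ whenever $S$ and $S'$ contain contradictory literals, and otherwise $\langle f_S, f_{S'}\rangle = 2^{-|S\cup S'|} \leq 2^{-k-1}$ for $S\neq S'$, and (ii) $|\bw_S\cdot\bw_{S'}|\leq\tau^2$. Combining these with a counting argument that bounds the number of consistent pairs---grouping clauses by their variable set $V(S)$, so that clauses within one block are pairwise inconsistent---one obtains a bound of the form $\E[\|\bp\|^2] \leq C \cdot 2^{-3k-2}\,\tau^2\,N$, the crucial point being linear rather than quadratic scaling in $N$.

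Finally, combining the lower bound with Cauchy--Schwarz gives
\[
\bigl(N\cdot 2^{-k-1}\bigr)^2 \;\leq\; C\cdot 2^{-3k-2}\,\tau^2\,N\cdot\E[\|\varphi\|^2],
\]
which rearranges to $N \leq 2^{3k+4}\,\tau^2\,\E[\|\varphi\|^2]$ after tracking constants, giving the claim.

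The main obstacle is Step~(ii): obtaining the linear-in-$N$ bound on $\E[\|\bp\|^2]$. A naive pointwise bound $\|\bp(\bx)\|^2 \leq \tau^2 C(\bx)^2$, where $C(\bx) = \#\{S:\bx\models S\}$, gives a bound that is quadratic in $N$ and only produces the trivial consequence $\tau^2\E[\|\varphi\|^2] \gtrsim 2^{-k}$. The right approach is instead to keep the matrix form $\sum_{S,S'}\langle f_S,f_{S'}\rangle(\bw_S\cdot\bw_{S'})$ and exploit the fact that, conditional on $V(S)=V(S')$, the cross term $\langle f_S,f_{S'}\rangle$ vanishes for $S\neq S'$, so most of the potentially large contributions in the worst case are killed by the AND structure itself. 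I expect this to be the technically delicate step, since it requires careful bookkeeping of how many clauses can share each variable pattern and how their Fourier supports interact with $\hat{\varphi}$.
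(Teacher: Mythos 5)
Your first-moment strategy has a genuine gap at exactly the step you flag as delicate, and I do not believe it can be repaired within this framework. First, the target bound $\E[\|\bp\|^2]\leq C\cdot 2^{-3k-2}\tau^2 N$ is false as stated: the diagonal terms alone contribute $\sum_S \|f_S\|^2\|\bw_S\|^2$, which can equal $2^{-k}\tau^2 N$ (take disjoint clauses and orthogonal $\bw_S$ of norm $\tau$), exceeding your target by a factor of $2^{2k}$. More seriously, the off-diagonal sum cannot be controlled by the counting argument you sketch. Grouping by variable set only kills pairs with $\cI(S)=\cI(S')$; the dangerous pairs are those on \emph{different} (e.g.\ pairwise disjoint) variable sets, for which $\langle f_S,f_{S'}\rangle = 2^{-|S\cup S'|}>0$ for all $\binom{N}{2}$ pairs. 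Nothing in your hypotheses prevents $\bw_S\cdot\bw_{S'}$ from being close to $\tau^2$ for all such pairs (e.g.\ all $\bw_S$ nearly equal), in which case $\E[\|\bp\|^2]=\Theta(N^2 2^{-2k}\tau^2)$ and your final inequality collapses to the vacuous $\tau^2\E[\|\varphi\|^2]\gtrsim 1$ --- the same failure mode as the naive pointwise bound you reject. Ruling out that a single direction $\bw$ correlates well with many near-orthogonal $\ANDfun_S$'s \emph{is} the content of the lemma, so bounding the off-diagonal linearly in $N$ is circular here. (A minor separate slip: $\tfrac12(2^{-k}-2^{-k-2})=3\cdot 2^{-k-3}$, not $2^{-k-1}$.)

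The deeper obstruction is that any argument resting on $\sum_S\langle g_S,f_S\rangle$ being large is a statement about a single linear functional of the tuple $(\bw_S)_S$ and cannot distinguish ``each $\bw_S$ correlates with its own $f_S$'' from ``one $\bw$ correlates with all of them.'' The paper's proof avoids this by a volume argument: it projects each $\ANDfun_S$ onto the degree-$k$ Fourier level, where clauses with distinct index sets become \emph{exactly} orthogonal with squared norm $\Theta(2^{-2k})$, lower-bounds $\det(\bW\bPhi\bPhi^\top\bW^\top)$ via this projection, and upper-bounds the same determinant using an LQ decomposition of $\bW$ together with a trace--determinant (AM--GM) inequality; the approximate case is then reduced to the exact one by augmenting $\varphi$ with the (small) residual functions $h_S/\tau$ as extra coordinates. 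The determinant forces \emph{all} singular values of $\bW\bPhi$ to be large simultaneously, which is precisely the multi-directional information that a single Cauchy--Schwarz application discards. If you want to salvage a correlation-style proof, you would need something of this flavor --- e.g.\ working with $\sum_S\langle \varphi_j, P_k f_S\rangle^2$ and the exact orthogonality of the $\chi_{\cI(S)}$ --- but note that the allowed error $2^{-k-2}$ is too large for the degree-$k$ correlations to survive a per-clause triangle inequality, which is why the paper routes the error through the embedding augmentation instead.
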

\end{tcolorbox}

\begin{proof}[Proof sketch] 
The proof of the lemma is deferred to Appendix~\ref{app:linear-and-packing-proof}. The key ingredient in the proof is to show that there is a subspace $\Omega \subseteq L^2(\{0,1\}^d)$ of function space such that if we project the set of functions in $\cG = \{\ANDfun_S \mbox{ for all } S \in \cS\}$ to $\Omega$, then $P_{\Omega} \cG$ cannot lie in a small-volume ball. We show that if $|\cS|$ is very large, then this contradicts the LRH. The subspace $\Omega$ that we choose is the subspace of degree-$k$ polynomials, and our arguments are Fourier-analytic and rely on writing each $\ANDfun_S$ function as a polynomial.
\end{proof}

Let us apply this lemma to conclude the proof of correctness of Phase 1. For each $i \in \{0,\ldots,\algdepth-1\}$, let $E_i$ be the event that:
\begin{itemize}
    \item the set $\cS_i$ contains all of the $i$-clauses in $\cG_T$ (i.e., it contains all of the intermediate computations of the decision tree at depth $i$), and
    \item all calls to $\textsc{LinearProbe}$ when forming the set $\cP_i$ satisfy the guarantees of Lemma~\ref{lem:linear-probe}.
\end{itemize}
For any $0 \leq i \leq \algdepth-1$, conditioned on events $E_0,E_1,\ldots,E_i$, it follows that:
\begin{itemize}
    \item The set $\cP_i$ contains all $i$-clauses of $\cG_T$, because $\cS_i$ contains all $i$-clause of $\cG_T$, and $\cS_i \cap \cG_T$ satisfies the $\tau$-LRH. Therefore $\cS_{i+1}$ contains all $(i+1)$-clauses of $\cG_T$, because it contains all possible successors to the $i$-clauses of $\cG_T$.
    \item For every clause $S \in \cP_i$, there is $\bw \in \R^m$ such that $\|\bw\| \leq \tau$ and $\E_{\bx \sim \cD}[(\bw \cdot \varphi(\bx) - g(\bx))^2] \leq 2^{-i-2}$. Therefore by Lemma~\ref{lem:and-packing}, we have $|\cP_i| \leq 2^{3i+4} \tau^2 B^2$. For each clause, there are at most two successors, so $|\cS_{i+1}| \leq 2d |\cP_i| \leq 2^{3i+5} \tau^2 B^2 d$.
\end{itemize}
A union bound on the event that each call to \textsc{LinearProbe} has the guarantees of Lemma~\ref{lem:linear-probe} implies:
\begin{align*}
    \P[E_{i+1} \mid E_0,\ldots,E_i] \geq 1 - \delta / (2\algdepth)\,.
\end{align*}
The base case of $\P[E_0] \geq 1 - \delta / (2R)$ is also easy to see since we start with $\cS = \{\emptyset\}$ and $\emptyset$ is the only $0$-clause. Thus, by induction on $i$ and a union bound over the probability of error over all calls of $\textsc{LinearProbe}$, we have proved that
\begin{align*}
\P[E_0,\ldots,E_{\algdepth}] \geq 1 - \delta/2\,.
\end{align*}
These events imply the correctness of Phase 1, which are the statements that $\cS_i$ contains all $i$-clauses of $\cG_T$ and each set $\cS_i$ is of bounded size:
\begin{align*}
\mbox{Properties (P1) and (P2) hold with probability at least } 1 - \delta/2\,.
\end{align*}
Notice also that under events $E_0,\ldots,E_R$ this phase takes time $\poly(2^{\algdepth}, \tau, B, d)$ since the sets are of size $|\cP_i| \leq |\cS_i| \leq 2^{3i+2} \tau^2 B^2 d$.

\paragraph{Details for Phase 2}

In the second phase, we use a well-known dynamic programming technique from \cite{guijarro1999exact,mehta2002decision} to find the best decision tree whose intermediate computations are contained in $\cS$. By property (P1) of Phase 1, we know that restricting to trees with clauses in $\cS$ is without loss of generality, since the ground truth tree's intermediate computations are contained in $\cS$.

Thus, in the discussion below, we only consider decision trees whose internal nodes are contained in the set $\cS$. For a decision tree $\tilde{T}$, each node can be identified with a clause $S$. The leaf clauses are denoted by $\mathrm{Leaves}(\tilde{T})$. For any leaf clause $S$, let $\tilde{T}(S) \in \{0,1\}$ denote the output. If the nodes of $\tilde{T}$ are contained in $\cS$, then for any $\bu \in \R^{\cS}$ we can define
\begin{align*}
\mathrm{val}(\tilde{T}, \bu) = \sum_{S \in \mathrm{Leaves}(\tilde{T})} u_S  (2\tilde{T}(S)-1)\,.
\end{align*}
If we define $\bv \in \R^S$ with $v_S = \E_{\bx \sim \cD}[\ANDfun_S(\bx) (2f_{\theta}(\bx)-1)]$, then
\begin{align*}
\mathrm{val}(\tilde{T},\bv) = \E_{\bx \sim \cD}[\sum_{S \in \mathrm{Leaves}(\tilde{T})} \ANDfun_S(\bx)(2f_{\theta}(\bx)-1)(2\tilde{T}(S)-1)] = 2\P_{\bx \sim \cD}[\tilde{T}(\bx) = f_{\theta}(\bx)] - 1\,.
\end{align*}
Therefore, if we have an approximation $\hat{\bv}$ to $\bv$ such that $|\hat{v}_S - v_S| \leq \epsilon / s$ for all $S$, we have
\begin{align*}
|\mathrm{val}(\tilde{T},\hat{\bv}) - \mathrm{val}(\tilde{T},\bv)| \leq \epsilon\,.
\end{align*}
Therefore, a size-$s$, depth-$r$ tree $\hat{T}$ that maximizes $\mathrm{val}(\hat{T},\hat{\bv})$ is an $\epsilon$-approximately optimal size-$s$ tree fitting the neural network $f_{\theta}$.

It remains to show that $\hat{\bv}$ can be constructed efficiently, and that 
this optimization can be performed efficiently. The estimation of $\hat{\bv}$ can be performed in $\poly(1/\epsilon, \log(|S| / \delta))$ samples and time from $\cD$ by a Hoeffding bound, and we have $\|\bv - \hat\bv\|_{\infty} \leq \epsilon$ with probability $\geq 1 - \delta/2$. The maximization of $\mathrm{val}(\hat{T},\hat{\bv})$ can be performed in $\poly(|S|,d,s)$ time by a dynamic program that computes, for each clause $S \in \cS$ and each tree size $0 \leq s' \leq s$, the subtree $\hat{T}_{S,s'}$ of size $s'$ rooted at $S$ that maximizes $\mathrm{val}(\hat{T}_{S,s'},\hat{\bv})$  \cite{guijarro1999exact,mehta2002decision}.

By guarantee (P2) of Phase 1, we know that $|\cS| \leq \poly(2^{\algdepth},\tau B,d)$. Therefore, putting the guarantees of Phase 1 and Phase 2 together proves the theorem for arbitrary distributions if we choose $\algdepth = r$ to be the depth of the tree -- since then total number of samples is $\poly(1/\epsilon,s,\log(\tau B/\delta))$ and runtime is $\poly(1/\epsilon,2^r,\tau B, \log(1 / \delta))$. For the uniform distribution, we can choose $\algdepth = \min(r,O(\log(s/\epsilon)))$, since there is a subtree of depth $r = O(\log(s/\epsilon))$ that $\epsilon$-approximates the original tree under this distribution. \qed

\subsubsection{Empirical evaluation}
\label{sec:empirical}

We have proved that the distillation procedure from neural networks to decision trees is efficient \textit{under the linear representation hypothesis}. It remains to check that this hypothesis is valid for trained networks that have learned decision trees. We provide experimental support for this in this subsection, by implementing Algorithm~\ref{alg:distill-tree}. We show that it can successfully distill models that have been trained on uniformly random trees with $d = 100$ and a certain depth from $\{2,3,4,5\}$. In Figure~\ref{fig:example-reconstructions} we visualize examples of these random trees on which we train our network.

For practical implementation purposes, we make one modification to our linear probe subroutine: in the construction of the set $\cP_i$ on Line \ref{line:constructcPi} we train our linear probe on 1000 samples with Adam and filter out all but the $k$ $\ANDfun$ functions where the probe has the lowest validation loss on a held-out dataset of 10000 samples. Each linear probe is trained for 100 iterations on 1000 samples.  In Figure~\ref{fig:number-linear-probes} we report the number of linear probes conducted by our algorithm with varying $k$, compared to the reconstruction accuracy of the final tree in terms of how well it approximates the true tree computed by the model. This figure shows that our algorithm only needs to probe a very small fraction of the total number of possible AND functions that what would be checked by a brute-force exhaustive search method. See \url{https://github.com/eboix/theory-of-model-distillation} for the code.

\begin{figure}[h!]

\begin{tabular}{c|c}
depth & example tree \\
\hline
2 & \includegraphics[scale=0.175]{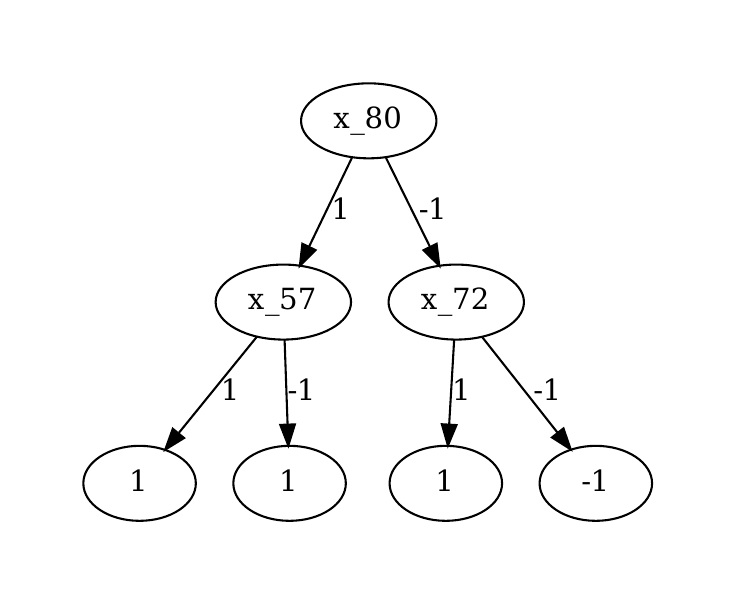} \\
3 & \includegraphics[scale=0.175]{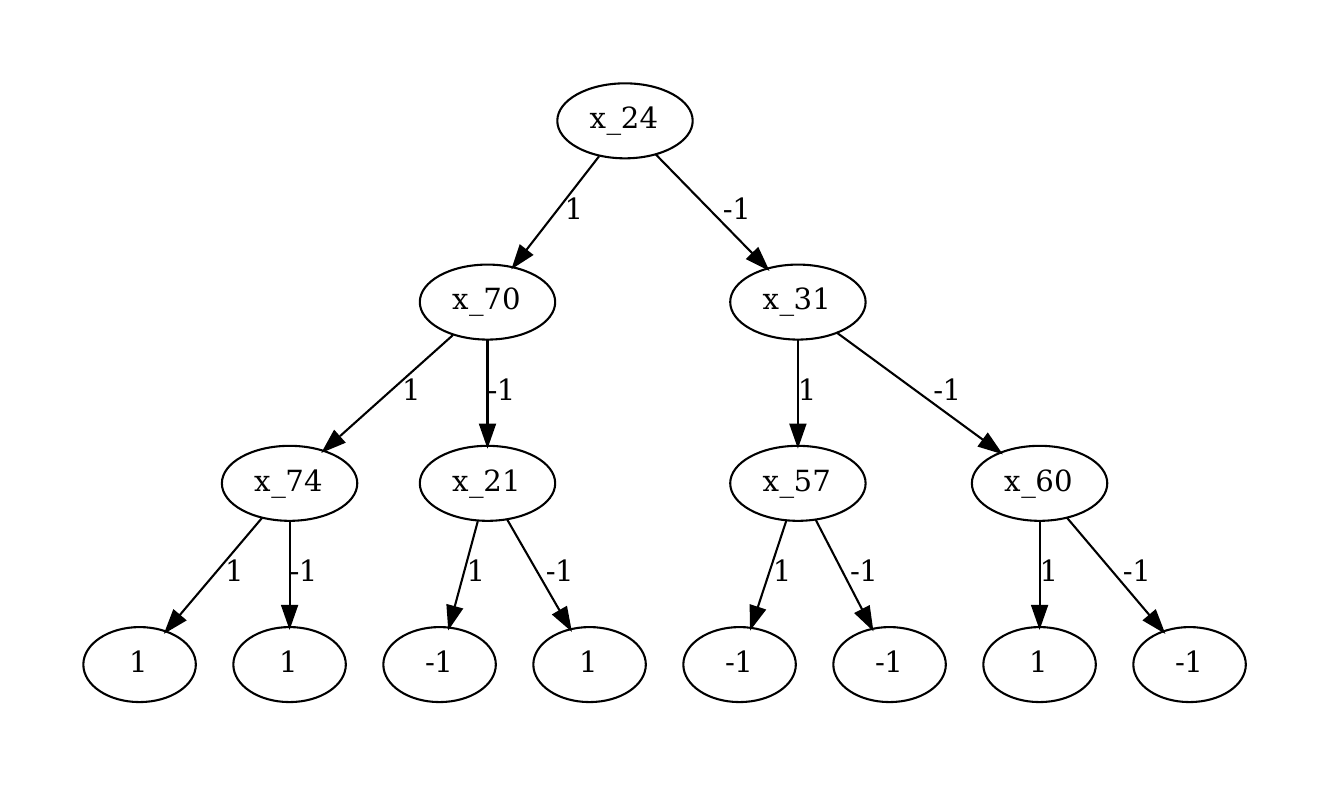} \\
4 & \includegraphics[scale=0.175]{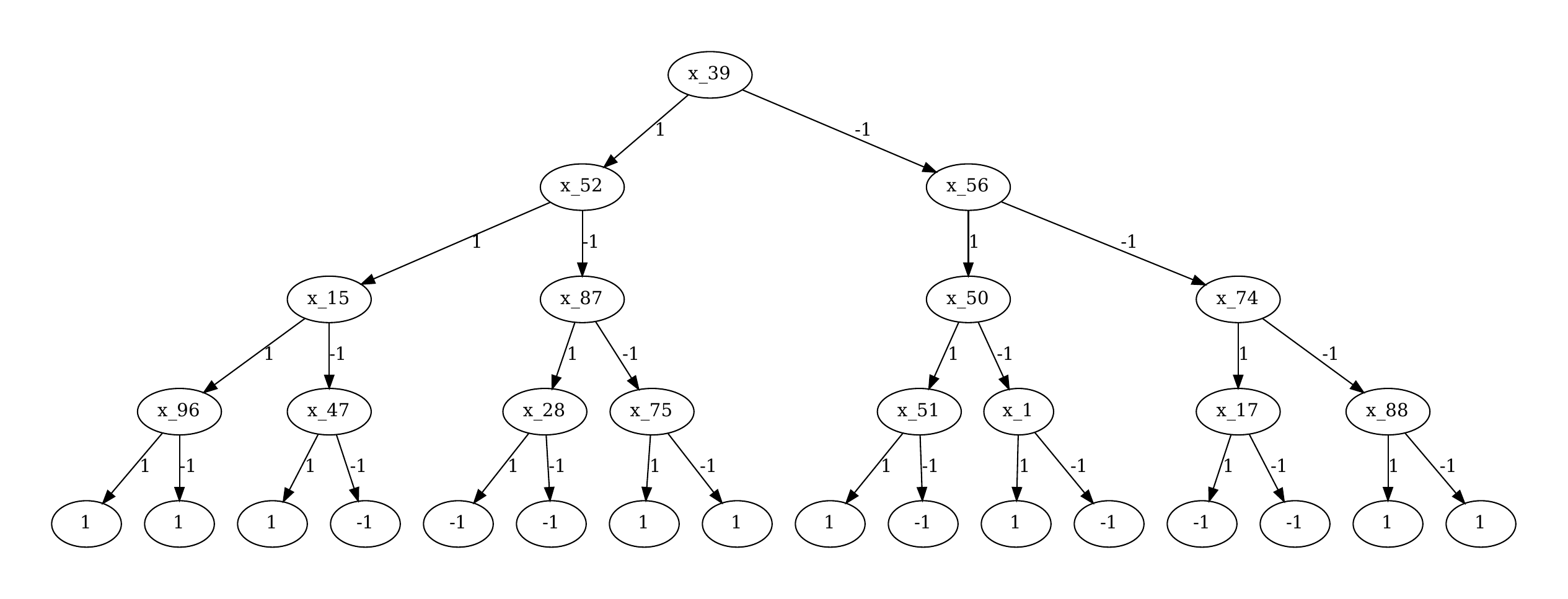} \\
5 & \includegraphics[scale=0.175]{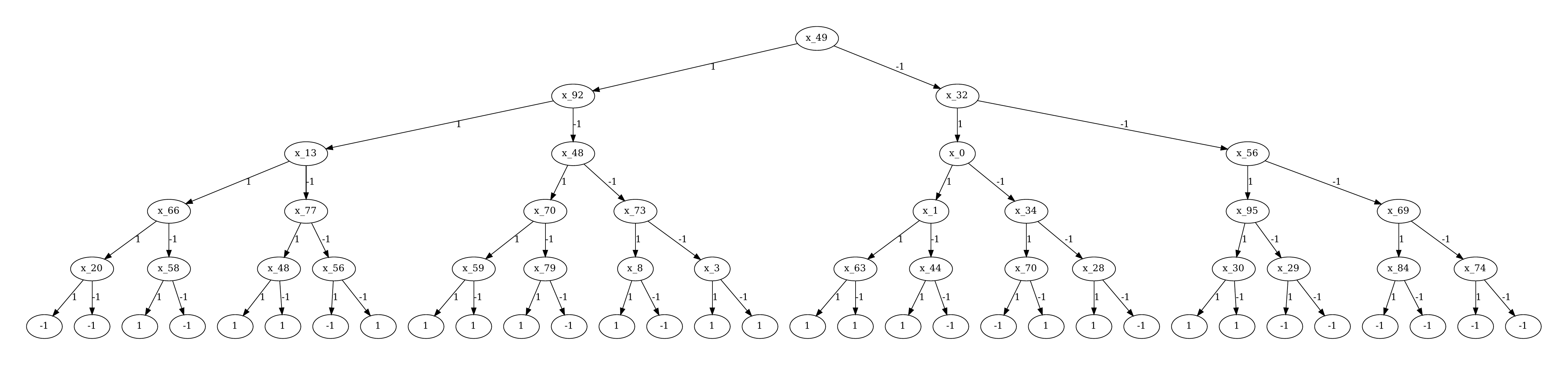} \\
\end{tabular}

\caption{Example random trees of varying depths on which we benchmark our distillation algorithm. We train a 5-layer ResNet network to learn the tree based on 100,000 random samples for the depth-2,3,4 trees and 1,000,000 random samples for the depth-5 trees. Given the trained network, we then recover from the trained network using the distillation procedure. The input space is $\{0,1\}^d$ with input dimension $d = 100$. See Figure~\ref{fig:number-linear-probes} for numerical details.}\label{fig:example-reconstructions}
\end{figure}

\begin{figure}[h!]
\centering
\begin{tabular}{ccccc}
depth & $k$ & \begin{tabular}{c} fraction of inputs \\ distillation is correct \end{tabular} & \begin{tabular}{c} average \\ number of probes \end{tabular} & \begin{tabular}{c} fraction of \\ possible probes \end{tabular} \\ 
\hline
2 & 10 & $\{ 1.00,1.00,1.00,1.00,1.00 \}$ & $2141$ & $0.107045$ \\ 
2 & 50 & $\{ 1.00,1.00,1.00,1.00,1.00 \}$ & $8901$ & $0.445028$ \\ 
2 & 100 & $\{ 1.00,1.00,1.00,1.00,1.00 \}$ & $15101$ & $0.755012$ \\ 
\textbf{2} & \textbf{  200} & \textbf{\{1.00, 1.00, 1.00, 1.00, 1.00\} } & \textbf{20001} & \textbf{ 1.000000} \\ 
\hline
3 & 10 & $\{ 1.00,1.00,1.00,0.87,1.00 \}$ & $4090$ & $0.003113$ \\ 
3 & 50 & $\{ 1.00,1.00,1.00,1.00,1.00 \}$ & $18341$ & $0.013962$ \\ 
3 & 100 & $\{ 1.00,1.00,1.00,1.00,1.00 \}$ & $32373$ & $0.024644$ \\ 
\textbf{3} & \textbf{  200} & \textbf{\{1.00, 1.00, 1.00, 1.00, 1.00\} } & \textbf{48894} & \textbf{ 0.037222} \\ 
\hline
4 & 10 & $\{ 0.87,0.81,0.88,0.74,0.75 \}$ & $6023$ & $0.000094$ \\ 
4 & 50 & $\{ 0.87,1.00,0.94,0.94,0.87 \}$ & $28035$ & $0.000438$ \\ 
4 & 100 & $\{ 1.00,1.00,1.00,1.00,0.94 \}$ & $52943$ & $0.000827$ \\ 
\textbf{4} & \textbf{  200} & \textbf{\{1.00, 1.00, 1.00, 1.00, 1.00\} } & \textbf{93519} & \textbf{ 0.001460} \\ 
\hline
5 & 10 & $\{ 0.81,0.73,0.69,0.72,0.75 \}$ & $7930$ & $0.000003$ \\ 
5 & 50 & $\{ 0.91,0.81,0.84,0.86,0.85 \}$ & $37413$ & $0.000015$ \\ 
5 & 100 & $\{ 0.97,0.94,0.91,0.94,0.87 \}$ & $71525$ & $0.000029$ \\ 
5 & 200 & $\{ 1.00,1.00,0.94,1.00,0.96 \}$ & $131617$ & $0.000053$ \\ 
5 & 500 & $\{ 1.00,1.00,0.97,1.00,0.97 \}$ & $289190$ & $0.000117$ \\ 
\textbf{5} & \textbf{  1000} & \textbf{\{1.00, 1.00, 0.97, 1.00, 1.00\} } & \textbf{539362} & \textbf{ 0.000218} \\ 
\end{tabular}
\caption{For each tree depth in $\{2,3,4,5\}$ we generate 5 random decision trees, train a depth-5 ResNet on each one with the cross-entropy loss, and distill to a decision tree. We report the results when we vary the hyperparameter $k$, which controls the size to which we prune the set of probes as we explore the space of functions efficiently represented by the network. In the third column, we report the accuracy of the distilled decision tree, which increases as the hyperparameter $k$ increases and the number of probes that the algorithm can execute increases. The average number of probes with a given depth and $k$ is reported in the fourth column, and is accurate across runs up to $\pm 0.5\%$ accuracy. The final column compares this average number of probes to the total number of possible probes of AND functions that the algorithm could make if it were brute-forcing. For depth $r$, this total number of possible probes is $\sum_{i=0}^r 2^i \binom{d}{i}$. We see in the final column that our algorithm requires only a very small fraction of the brute-force number of probes to succeed in recovering the true tree, supporting the linear representation hypothesis for networks trained on random decision trees. We leave more in-depth experiments beyond synthetic settings to future work.}\label{fig:number-linear-probes}
\end{figure}

\clearpage

\section{Computational theory: a web of reductions}\label{sec:computational}

In this section, we initiate a general computational theory of PAC-distillation. The core idea is that we should view distillation as a reduction between two hypothesis classes, because it satisfies a transitivity property. In order to define these reductions formally, we must define what it means to distill from class $\cF$ to class $\cG$ in polynomial time. In this section, we assume that each class $\cF$ is equipped with a function $\size : \cF \to \N$ that gives the representation size of a hypothesis.

\begin{tcolorbox}[enhanced, frame hidden, sharp corners, boxsep=0pt, before skip=5pt, after skip=5pt, colback=blue!5!white]
\begin{definition}[Polynomial-time distillation] 
Class $\cF$ can be distilled into class $\cG$ in polynomial  time if there is a polynomial $p(\cdot,\cdot,\cdot)$ such that for any $\epsilon,\delta \in (0,1)$ and $m \in \N$, we can $(\epsilon,\delta)$-distill from the class $\cF_m = \{f \in \cF : \size(f) = m\}$ into the class $\cG$ in less than $p(1/\epsilon,\delta,m)$ time and samples.
\end{definition}
\end{tcolorbox}

For technical reasons, we will need a slightly more general definition: distilling a sequence of classes into another sequence of classes in polynomial time. This is more general than the above definition because we can always take $\cI$ to be a singleton set.

\begin{tcolorbox}[enhanced, frame hidden, sharp corners, boxsep=0pt, before skip=5pt, after skip=5pt, colback=blue!5!white]
\begin{definition}[Polynomial-time distillation between sequences of classes] Let $\cI$ be an index set. Classes $\{\cF_{i}\}_{i \in \cI}$ can be distilled into classes $\{\cG_i\}_{i \in \cI}$ in polynomial time if there is a 
polynomial $p(\cdot,\cdot,\cdot)$ such that for any $i \in \cI$, the class $\cF_i$ can be distilled into $\cG_i$ in polynomial time bounded by $p(\cdot,\cdot,\cdot)$.
\end{definition}
\end{tcolorbox}

We use the shorthand $\{\cF_i\}_{i \in \cI} \dstinto \{\cG_i\}_{i \in \cI}$ to denote that the sequence of classes $\{\cF_i\}_{i \in \cI}$ can be distilled into the sequence of classes $\{\cG_i\}_{i \in \cI}$ in polynomial time.

\begin{tcolorbox}[enhanced, frame hidden, sharp corners, boxsep=0pt, before skip=5pt, after skip=5pt, colback=blue!5!white]
\begin{lemma}[Transitivity of polynomial-time distillation]\label{lem:distillation-transitivity} If $\{\cF_i\}_i \dstinto \{\cG_i\}_i$ and $\{\cG_i\}_i \dstinto \{\cH_i\}_i$, then $\{\cF_i\}_i \dstinto \{\cH_i\}_i$.
\end{lemma}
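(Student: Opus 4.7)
The plan is to compose the two polynomial-time distillation algorithms. Let $\Adst^{(1)}$ witness $\{\cF_i\}_i \dstinto \{\cG_i\}_i$ with polynomial bound $p_1$, and let $\Adst^{(2)}$ witness $\{\cG_i\}_i \dstinto \{\cH_i\}_i$ with polynomial bound $p_2$. Given target accuracy parameters $\epsilon,\delta \in (0,1)$ and a source $f \in \cF_i$ with fresh samples from an unknown $\cD$, I would split the sample budget in half, run $\Adst^{(1)}$ on the first half with parameters $(\epsilon/2,\delta/2)$ to produce some $g \in \cG_i$, and then run $\Adst^{(2)}$ on $g$ together with the second half of the samples, again with parameters $(\epsilon/2,\delta/2)$, to produce $h \in \cH_i$. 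Output $h$.

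Correctness follows from a triangle inequality on disagreement probabilities together with a union bound. On the good event (failure probability at most $\delta/2 + \delta/2 = \delta$), we have $\P_{x \sim \cD}[g(x) \neq f(x)] \leq \epsilon/2$ and $\P_{x \sim \cD}[h(x) \neq g(x)] \leq \epsilon/2$, and hence
\begin{align*}
\P_{x \sim \cD}[h(x) \neq f(x)] \leq \P_{x \sim \cD}[h(x) \neq g(x)] + \P_{x \sim \cD}[g(x) \neq f(x)] \leq \epsilon.
\end{align*}
Independence of the two halves of the sample ensures that when we condition on $g$, the second half is still i.i.d.\ from $\cD$, so $\Adst^{(2)}$'s guarantee applies with $g$ in the role of its source model.

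For the runtime bound, observe that since $\Adst^{(1)}$ runs in time at most $p_1(2/\epsilon,2/\delta,\size(f))$, the output $g$ has representation size at most this same quantity (the algorithm cannot produce a longer output than its runtime). Therefore $\Adst^{(2)}$ runs in time at most $p_2(2/\epsilon,2/\delta,\size(g)) \leq p_2(2/\epsilon,2/\delta,p_1(2/\epsilon,2/\delta,\size(f)))$, a fixed polynomial in $(1/\epsilon,1/\delta,\size(f))$. The total samples are likewise bounded by the sum of the two polynomial sample bounds. Because $p_1$ and $p_2$ are fixed independently of $i \in \cI$, the composed polynomial works uniformly over $i$, establishing $\{\cF_i\}_i \dstinto \{\cH_i\}_i$.

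I do not expect any serious obstacle: the argument is essentially a reduction composition, and the only subtleties are (i) making sure to use \emph{fresh} samples for the second stage so that independence is preserved, and (ii) noting that the intermediate representation size is automatically bounded by the first algorithm's runtime, which is what allows the composition of polynomials to remain polynomial in the original parameters.
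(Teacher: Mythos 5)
Your proposal is correct and is exactly the composition argument the paper gives (the paper's proof is a one-line sketch invoking composition, a union bound on failure probability, and a triangle inequality on the error). Your write-up simply fills in the details the paper leaves implicit, notably the sample-splitting for independence and the observation that the intermediate model's size is bounded by the first algorithm's runtime, which is what keeps the composed bound polynomial.
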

\end{tcolorbox}
\begin{proof}
Immediate by composing the distillation algorithms for each $i \in \cI$, and using a union bound on the probability of error and a triangle bound on the quantity of error.
\end{proof}

\begin{remark}\label{rem:agnostic-distillation-composition}
We remark that for agnostic distillation, the transitivity property in Lemma~\ref{lem:distillation-transitivity} does not hold. However, if $\cF$ can be distilled into $\cG$ and $\cG$ can be agnostically distilled into $\cH$, then $\cF$ can be agnostically distilled into $\cH$; see further discussion in Appendix~\ref{app:agnostic-distillation-composition}.
\end{remark}

Lemma~\ref{lem:distillation-transitivity} allows us to begin to build a web of reductions between hypothesis classes. This web can be viewed as giving a computational theory for which model classes are uniformly more interpretable than others -- if we can efficiently distill from class $\cF$ to class $\cG$, then we should consider the classifiers in $\cF$ to be uniformly at least as interpretable as those in $\cG$. Otherwise, given the classifier in $\cF$ we could always distill it to a corresponding classifier from $\cG$ and use that instead.\footnote{Of course, there are situations in which two model classes may be incomparable in terms of distillation, but one of the model classes may in practice be considered by some users as more interpretable than the other. However, it is less clear in these situations how to develop a mathematical theory of interpretability. Please refer back to the related work in Section~\ref{sec:related-work} for more discussion on the literature on interpretability.}

We present a very preliminary web of reductions in Figure~\ref{fig:reductions} below, which is based off of the case studies from Section~\ref{sec:case-study}. The classes involved are:
\begin{itemize}
\item Juntas, as defined in Definition~\ref{def:junta}. This is the sequence of hypothesis classes $\{\kjuntas\}_{k \in \N}$, indexed by the size of the junta $k$. Each $k$-junta is represented by a subset $S \subseteq [d]$ of the input indices of size $|S| = k$, and a truthtable $h : \{0,1\}^k \to \{0,1\}$ on those indices. We define the representation size of a junta to be $\size((S,h)) = 2^k + d$.
\item Decision trees as defined in Definition~\ref{def:dectrees}. We consider the sequence of hypothesis classes $\{\dectrees\}_{s \in \N,r \in \N}$ indexed by the number of nodes $s$ and depth $r$ of the decision tree. We define the representation complexity of a tree $T$ to be $\size(T) = s+d$, where $s$ is the number of nodes and $d$ is the input dimension.
\item Neural networks implicitly computing juntas. This is the sequence of hypothesis classes $\{\NNkjuntas\}_{k \in \N}$, indexed by the size of the junta $k$. Each neural network $f_{\theta}$ has a representation size given by $\mathrm{size}(f_{\theta}) = d + 2^k + \mbox{number of network parameters}$. Here we add the $2^k$ so that our distillation algorithms are allowed to run in exponential time in $k$ because otherwise it would be impossible to output the truthtable of the $k$-junta in some cases.

\item Neural networks implicitly computing decision trees, under the LRH. This is the sequence of hypothesis classes $\{\cup_{\tau > 0} \NNdectreesLRH\}_{s \in \N, r \in \N}$. Each neural network $f_{\theta}$ with representation $\varphi_{\theta}$ has representation size given by $\mathrm{size}((f_{\theta},\varphi_{\theta})) = d + s + \tau \cdot \max_{\bx} \|\varphi_{\theta}(\theta)\| + \mbox{number of network parameters}$. Here we add the $s$ and the dependence on $\tau$ and the maximum representation size because our distillation algorithm from Section~\ref{ssec:decisiontrees} runs in polynomial time in these parameters.
\end{itemize}

\begin{figure}[h]\label{fig:reductions}
\centering
\begin{tikzpicture}[
    node distance=2cm,
    box/.style={draw, text width=3cm, minimum height=1.5cm, align=center},
    arrow/.style={-Stealth, shorten >=2pt, shorten <=2pt},
    dashedarrow/.style={-Stealth, dashed, draw=red},
    cross/.style={path picture={\draw[red, -Stealth] (path picture bounding box.south west) -- (path picture bounding box.north east) (path picture bounding box.south east) -- (path picture bounding box.north west);}}
]

\node[box] (trees) {size-$s$, depth-$r$ decision trees};
\node[box, below=of trees] (juntas) {$k$-juntas};
\node[box, left=4cm of trees] (nn_trees) {neural networks implicitly computing size-$s$, depth-$r$ decision trees, under LRH};
\node[box, left=4cm of juntas] (nn_juntas) {neural networks implicitly computing $k$-juntas};

\draw[arrow] (nn_juntas) -- node[above] {Prop.~\ref{prop:distilling-with-juntas}} (juntas) ;
\draw[arrow] (juntas) -- node[midway, left,align=center] {Prop.~\ref{prop:juntas-to-trees}}(trees) ;
\draw[arrow] (nn_trees) -- node[midway, above,align=center]{Thm.~\ref{thm:decision-tree-pac-dst} \\ (for uniform dist.)} (trees);

\end{tikzpicture}

\caption{Simple web of reductions based on the distillation algorithms from Section~\ref{sec:case-study}. In the distillation from $k$-juntas to size-$s$, depth-$r$ trees, we have $s = 2^k$ and $r = k$. By the transitivity property of reductions, this web implies a distillation algorithm from the class of neural networks implicitly computing $k$-juntas, to the class of size-$2^k$, depth-$k$ decision trees.}

\end{figure}

The only missing element in the web is the distillation algorithm from juntas to decision trees, which is folklore:
\begin{tcolorbox}[enhanced, frame hidden, sharp corners, boxsep=0pt, before skip=5pt, after skip=5pt, colback=blue!5!white]
\begin{proposition}[Distilling trees into juntas and vice-versa]\label{prop:juntas-to-trees}
$\{\kjuntas\}_{k}$ can be $(\epsilon=0,\delta=0)$-distilled into $\{\dectrees\}_{s=2^k,r=k}$ in $\poly(2^k,d)$ time and 0 samples.
\end{proposition}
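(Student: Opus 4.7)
The proof plan is a direct constructive one and involves no sampling or probabilistic argument at all, so I expect no real obstacle; the entire content is setting up the obvious gadget.

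Given a $k$-junta specified by $(S, h)$ with $S = \{i_1, \ldots, i_k\} \subseteq [d]$ and truthtable $h : \{0,1\}^k \to \{0,1\}$, I would output the complete (full) binary decision tree $T$ of depth $k$ defined as follows: every internal node at depth $j \in \{0,1,\ldots,k-1\}$ (regardless of the path taken to reach it) is labeled by the variable $x_{i_{j+1}}$, and every leaf $\ell$, which is in bijection with an assignment $\bb \in \{0,1\}^k$ to $(x_{i_1},\ldots,x_{i_k})$ via the root-to-leaf path, is labeled by $h(\bb)$. This $T$ has $2^k$ leaves, depth $k$, and therefore lies in $\dectrees$ with $s = 2^k$ and $r = k$.

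Correctness follows because on any input $\bx \in \{0,1\}^d$, the root-to-leaf path determined by $\bx$ queries exactly the coordinates in $S$ and terminates at the unique leaf corresponding to $\bb = \bx_S$, whose label is $h(\bx_S) = f(\bx)$. Thus $T(\bx) = f(\bx)$ for every $\bx$, so $\mathrm{error}_{f,\cD}(T) = 0$ for every distribution $\cD$, giving $\epsilon = 0$ and $\delta = 0$. Since the algorithm is deterministic and uses no examples, $n = 0$ samples suffice.

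For the runtime, the construction writes down an explicit tree of $O(2^k)$ nodes; each internal node is annotated with one index from $[d]$ (a label of size $O(\log d)$) and each of the $2^k$ leaves requires one evaluation of $h$, which takes $\poly(2^k)$ time given the input representation of $h$. The total runtime is therefore $\poly(2^k, d)$, matching the bound claimed in the proposition. The only minor bookkeeping is to confirm that the chosen $\size$ conventions for $\dectrees$ (counting leaves, or internal nodes, up to a factor of two) are compatible with the stated $s = 2^k$; under either convention the construction lands in $\cF_{\mathrm{size}\mbox{-}s,\mathrm{depth}\mbox{-}r\mbox{ }\mathrm{trees}}$ with $s = \Theta(2^k)$ and $r = k$, which suffices for the polynomial-time reduction.
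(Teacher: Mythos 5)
Your construction is exactly the paper's: the complete depth-$k$ tree whose level-$j$ nodes all query $x_{i_{j+1}}$ and whose $2^k$ leaves carry the truth-table entries, yielding an exact ($\epsilon=0$, $\delta=0$), sample-free, $\poly(2^k,d)$-time distillation. Your write-up just spells out the correctness and runtime bookkeeping that the paper leaves implicit, so there is nothing to add.
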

\end{tcolorbox}
\begin{proof}
Let $(S,h)$ be a $k$-junta, where $S = \{i_1,\ldots,i_k\} \subseteq [d]$ is the set of coordinates on which the $k$-junta depends, and $h : \{0,1\}^k \to \{0,1\}$ is the truth table. The junta is represented by a decision tree of depth $k$ whose $j$th layer nodes are all labeled by $i_j$, and whose $2^k$ leaf nodes are labeled by the entries of the truth table.
\end{proof}

\paragraph{Open direction} It is an open direction for future work to expand this web of reductions to include other popular classification algorithms such as (sparse) linear threshold functions, polynomial threshold functions, nearest neighbors, and logical circuits with varying sizes, depths, and gate types -- and also to explore whether these classifiers can be extracted from trained neural networks under structural assumptions such as the LRH. It is also of interest to prove that in some cases distillation is not possible, e.g., to pinpoint which of these hypothesis classes are incomparable.
\section{Statistical theory: bounds on sample complexity}\label{sec:statistical}

We now study the sample complexity of PAC-distillation, without imposing any computational constraints on the algorithms. Our main results in this section are:
\begin{itemize}
\item Distillation has a low sample complexity -- whenever $\cF$ is \textit{perfectly distillable} into $\cG$ (defined below), very few samples are needed to do so. In particular, distilling typically requires fewer samples than learning; see Section~\ref{sec:perfect-distillation-easy}.

\item Nevertheless, agnostic distillation may have a high sample complexity. This follows from results of \cite{ben2011learning} in the KLCL model of learning, and we give a new proof here; see Section~\ref{sec:agnostic-distillation-hard}.

\item Next, we give an upper bound on the sample complexity of agnostic distillation, based on the \textit{Pareto Frontier} of the classes, which shows that distilling can be much easier than learning in some settings; see Section~\ref{sec:pareto-frontier-bound}.

\item Finally, in personal communication Fan Chen and Sasha Rakhlin \cite{chenrakhlin} have kindly pointed out an equivalence between agnostic 
distillation and the General Learning Setting of Vapnik \cite{vapnik1995nature}. This implies that there is no simple analogue of the VC-dimension that characterizes the sample complexity of agnostic distillation; see Section~\ref{sec:impossibility-characterizing-sample-agnostic-distillation}.
\end{itemize}

Throughout this section, we view $f \in \cF$ and $g \in \cG$ simply as functions, ignoring how they are represented (e.g., neural network, decision tree, or truthtable), which only mattered in previous sections due to the computational constraints on the distillation algorithms.

\subsection{Whenever perfect distillation is possible, very few samples are needed}\label{sec:perfect-distillation-easy}

In this section, we prove that distillation has a very low sample complexity, \textit{whenever it is possible to distill}. The qualifier is important, since in some cases distillation might not be possible, even with unboundedly many samples.

\begin{tcolorbox}[enhanced, frame hidden, sharp corners, boxsep=0pt, before skip=5pt, after skip=5pt, colback=blue!5!white]
\begin{observation}[Distillation may be impossible, even with unbounded samples]\label{obs:distilling-may-be-impossible}
There are classes $\cF, \cG$ such that $\cF$ cannot be $(\epsilon,\delta)$-distilled into $\cG$ for any $0 \leq \epsilon < 1$ and $0 \leq \delta < 1$.
\end{observation}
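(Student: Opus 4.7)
The plan is to exhibit a trivially simple pair of classes $\cF, \cG$ where the unique source function and the unique target function disagree everywhere, forcing the distillation error to be identically $1$ regardless of the distribution, the samples, or the algorithm. Because the observation demands impossibility for \emph{every} $\epsilon < 1$ and $\delta < 1$ (not just for small $\epsilon$), the construction has to actually achieve error $1$ on the nose, not merely a constant bounded away from $0$.

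Concretely, I would take any nonempty input space $\cX$ with label space $\cY = \{0,1\}$, and set $\cF = \{f\}$ and $\cG = \{g\}$ where $f(x) = 1$ and $g(x) = 0$ for all $x \in \cX$. Any distillation algorithm $\Adst$ is forced to return an element of $\cG$, i.e., it must return $g$, so for every distribution $\cD$ on $\cX$ and every sample $S$,
\begin{align*}
\mathrm{error}_{f,\cD}(\Adst(S,f)) \;=\; \mathrm{error}_{f,\cD}(g) \;=\; \P_{x \sim \cD}[g(x) \neq f(x)] \;=\; 1.
\end{align*}
Hence for any $0 \leq \epsilon < 1$ and $0 \leq \delta < 1$ the probability that $\mathrm{error}_{f,\cD}(\Adst(S,f)) \leq \epsilon$ is $0$, which is strictly less than $1 - \delta$. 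Thus $\cF$ cannot be $(\epsilon,\delta)$-distilled into $\cG$ for any such pair.

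There is essentially no obstacle here; the observation's content is conceptual, namely that PAC-distillation implicitly requires $\cG$ to contain an approximator for every $f \in \cF$ under every distribution, which is not automatic. If one wants a slightly less degenerate witness, one can take $\cF$ and $\cG$ to be any two finite classes of Boolean functions such that for some $f \in \cF$ and some distribution $\cD$, every $g \in \cG$ satisfies $\P_{x\sim \cD}[g(x) \neq f(x)] = 1$; the same one-line argument applies, since no algorithm can do better than the best $g \in \cG$ on that adversarial $\cD$.
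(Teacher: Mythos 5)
Your construction is correct and is essentially identical to the paper's: the paper takes $\cX = \{x\}$ a singleton with $\cF = \{h_0\}$, $\cG = \{h_1\}$ disagreeing on the single point, which forces error exactly $1$ for the only possible output. Your version merely generalizes the input space while keeping the same one-line argument, so there is nothing further to add.
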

\end{tcolorbox}
\begin{proof}
Let the input space be a singleton set with one element, $\cX = \{x\}$, and let $\cY = \{0,1\}$. Let $\cF = \{h_0\}$ and $\cG = \{h_1\}$, where $h_i : \cX \to \cY$ is the function defined by $h_i(x) = i$. Then the distillation algorithm must output $h_1$, but $h_1(x) \neq h_0(x)$, so $\mathrm{error}(h_1) = 1$.
\end{proof}

Therefore, we restrict our attention to the cases in which it is possible to \textit{perfectly distill}, i.e., when it is possible to carry out distillation to any given level of accuracy. We define perfect distillation below.

\begin{tcolorbox}[enhanced, frame hidden, sharp corners, boxsep=0pt, before skip=5pt, after skip=5pt, colback=blue!5!white]
\begin{definition}
The class $\cF$ is {\em perfectly distillable} into the class $\cG$ if for all $0 < \epsilon < 1$ and $0 < \delta < 1$, there is a $(\epsilon,\delta)$-distillation algorithm from $\cF$ to $\cG$.\label{def:perfectly-distillable}
\end{definition}
\end{tcolorbox}

\begin{remark}
We remark that the definition of perfect distillation does not allow the class $\cG$ to grow as we vary $\epsilon,\delta$. This means that the results in this section do not apply if, e.g., our scenario is distilling neural networks into logical circuits, and we require larger circuits as $\epsilon$ and $\delta$ decrease. Further research is required to understand the sample complexity in this case.
\end{remark}

If perfect distillation is possible, we prove that distillation requires very few samples. In the case of finite input spaces, distillation can be achieved with 0 samples.
\begin{tcolorbox}[enhanced, frame hidden, sharp corners, boxsep=0pt, before skip=5pt, after skip=5pt, colback=blue!5!white]
\begin{theorem}\label{thm:finite-perfect-distillation}
Suppose that $\cF$ is perfectly distillable into $\cG$, and the input space $\cX$ is finite. Then $\cF$ is $(\epsilon=0,\delta=0)$-distillable into $\cG$ with $0$ samples.
\end{theorem}
\end{tcolorbox}
\begin{proof}
Let $\cD$ be uniform over $\cX$. Then since $\cF$ is $(\epsilon=1/(2|\cX|), \delta=1/2)$-distillable into $\cG$, for each $f \in \cF$ there is $g \in \cG$ such that
\begin{align*}
\P_{x \sim \cD}[f(x) \neq g(x)] < 1/(2|\cX|)\,,
\end{align*}
which is possible only if $f(x) = g(x)$ for all $x$. Therefore $\cF \subseteq \cG$, and so there is $(0,0)$-distillation algorithm that simply always outputs $f \in \cG$ given $f \in \cF$.
\end{proof}

In the case of countable (potentially infinite) input spaces $\cX$, the number of samples required might be nonzero. Nevertheless, distillation still requires only $O(\log(1/\delta)/\epsilon)$ samples, which scales mildly in $\epsilon,\delta$, and does not depend on any extra structure of $\cF$ and $\cG$.

\begin{tcolorbox}[enhanced, frame hidden, sharp corners, boxsep=0pt, before skip=5pt, after skip=5pt, colback=blue!5!white]
\begin{theorem}
Suppose that $\cF$ is perfectly distillable into $\cG$, and the input space $\cX$ is countable. Then for any $0 < \epsilon < 1$ and $0 < \delta < 1$, the class $\cF$ is $(\epsilon,\delta)$-distillable into $\cG$ with $n(\epsilon,\delta) = \ceil{\log(1/\delta)/\epsilon}$ samples. Furthermore, there is a pair of classes $\cF$ and $\cG$ such that this is tight up to a constant.
\end{theorem}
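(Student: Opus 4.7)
I plan to prove both bounds, starting with the observation that perfect distillability implies a useful structural property: for every $f \in \cF$ and every finite $T \subseteq \cX$, there exists $g \in \cG$ with $g|_T = f|_T$. This follows by taking $\cD$ uniform on $T$ and invoking perfect distillation at accuracy $\epsilon < 1/|T|$, since any output with error less than $1/|T|$ under this $\cD$ must agree with $f$ on every point of $T$. The main difficulty in proving the upper bound is that $\cG$ may be uncountable with arbitrary structure, so simply outputting ``any consistent $g$'' can misbehave; my key insight is to exploit the countability of $\cX$ by fixing an enumeration and letting the samples determine a data-dependent prefix on which $g$ must agree with $f$.

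Concretely, fix any enumeration $\cX = \{z_1, z_2, \ldots\}$ and consider the algorithm: given $f$ and samples $S = (x_1, \ldots, x_n)$, set $k = \max\{j : z_j \in S\}$ and output any $g \in \cG$ with $g(z_j) = f(z_j)$ for $j = 1, \ldots, k$ (such a $g$ exists by the structural property applied to $\{z_1, \ldots, z_k\} \supseteq S$). To analyze, set $k^* = \min\{k \geq 0 : \cD(\{z_{k+1}, z_{k+2}, \ldots\}) \leq \epsilon\}$, which is finite since the tail mass of any distribution on a countable set tends to $0$. When $k \geq k^*$, the output $g$ disagrees with $f$ only on $\{z_{k+1}, z_{k+2}, \ldots\}$ and so has error at most $\cD(\{z_{k+1}, \ldots\}) \leq \epsilon$. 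Otherwise $\{k < k^*\}$ forces every sample into $\{z_1, \ldots, z_{k^*-1}\}$, and by the minimality of $k^*$ we have $\cD(\{z_{k^*}, z_{k^*+1}, \ldots\}) > \epsilon$, so this event has probability $(1 - \cD(\{z_{k^*}, \ldots\}))^n < (1-\epsilon)^n \leq e^{-\epsilon n} \leq \delta$ whenever $n \geq \log(1/\delta)/\epsilon$.

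For the matching lower bound I construct $\cX = \N$, $\cF = \{f \equiv 0\}$, and $\cG = \{g_i : i \geq 1\}$ with $g_i(j) = \ind[j = i]$. Perfect distillability holds because $\err_{f, \cD}(g_i) = \cD(\{i\})$, which is $0$ for $i$ outside $\supp(\cD)$. Given a deterministic algorithm $\Adst$, let $\hat{i}_0 \neq 0$ denote its output on the all-zero sample sequence (the case $\hat{i}_0 = 0$ gives trivial failure for $\epsilon < 1/3$, since then $\err(g_0) = 1 - 2\epsilon$ on the adversarial distribution below), and set $\cD^*(\{0\}) = 1 - 2\epsilon$ and $\cD^*(\{\hat{i}_0\}) = 2\epsilon$. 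Under $\cD^*$, with probability $(1-2\epsilon)^n$ all samples equal $0$, forcing $\Adst$ to output $g_{\hat{i}_0}$ with error $\cD^*(\{\hat{i}_0\}) = 2\epsilon > \epsilon$. The failure probability is therefore at least $(1-2\epsilon)^n$, exceeding $\delta$ whenever $n$ is a sufficiently small constant multiple of $\log(1/\delta)/\epsilon$, matching the upper bound up to constants.
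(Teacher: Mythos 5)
Your upper bound is correct and is essentially the paper's own argument: your prefix sets $\{z_1,\ldots,z_k\}$ play exactly the role of the paper's exhaustion $\cX=\bigcup_m \cX_m$ by nested finite sets, your structural property (agreement on any finite set) is derived the same way from perfect distillability, and your quantile $k^*$ and the bound $(1-\epsilon)^n\le\delta$ mirror the paper's $\tilde m_\epsilon$ analysis.

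The lower bound, however, has a genuine gap. You fix a \emph{deterministic} $\Adst$ and build $\cD^*$ around the single index $\hat i_0$ it outputs on the all-zero sample sequence; this does not extend to randomized algorithms, and in fact your pair $(\cF,\cG)$ is \emph{not} a tight example: it is $(\epsilon,\delta)$-distillable with $0$ samples. Consider the algorithm that outputs $g_i$ for $i$ drawn uniformly from $\{1,\ldots,\ceil{1/(\epsilon\delta)}\}$. Since $\mathrm{error}_{\cD}(g_i)=\cD(\{i\})$ and these masses sum to at most $1$, we get $\E_i[\mathrm{error}_{\cD}(g_i)]\le \epsilon\delta$, so by Markov the algorithm fails with probability at most $\delta$ for every $\cD$, using no samples. (The paper uses exactly this randomized trick in the proof that $\VCpar$ does not characterize the sample complexity.) The root cause is that under your $\cD^*$ only the single hypothesis $g_{\hat i_0}$ is bad, so the adversary must predict the algorithm's output exactly, which randomization defeats. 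The paper's construction avoids this by using nested thresholds $g_i(x)=\ind[x>i]$, so that under $\cD_{m,\epsilon}$ \emph{every} $g_i$ with $i<m$ has error $\epsilon$; the adversary then picks $m^*$ as a $\delta/2$-quantile of the algorithm's output distribution under the indistinguishable null distribution $\cD_{1,\epsilon}$, and transfers the failure event via $\TV(\cD_{1,\epsilon}^{\otimes n},\cD_{m^*,\epsilon}^{\otimes n})\le 1-(1-\epsilon)^n$. To close your lower bound you need this kind of structure -- a family of distributions under each of which a large, nested set of hypotheses is simultaneously bad -- together with an argument that applies to randomized algorithms.
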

\end{tcolorbox}
\begin{proof}
$\implies$. Write $\cX = \cup_{m=1}^{\infty} \cX_m$ for finite sets $\cX_1 \subseteq \cX_2 \subseteq \cdots$. A similar argument to the previous theorem shows that for any $f \in \cF$ and $m \in \cN$, there is $g_{m} \in \cG$ such that $f(x) = g_m(x)$ for all $x \in \cX_m$. Consider the distillation algorithm that takes in $\epsilon,\delta \in (0,1)$ and draws $n(\epsilon,\delta) = \ceil{\log(1/\delta)/\epsilon}$ samples $x_1,\ldots,x_n \sim \cD$, and lets $m^* = \min \{m : \{x_1,\ldots,x_n\} \subseteq \cX_m\}$, and outputs $g_{m^*}$.

We prove correctness. Define $\tilde{m}_{\epsilon} = \max \{m : \P_{x \sim \cD}[x \not\in \cX_{m_{\epsilon}}] > \epsilon \}$. Then the error of the hypothesis returned by the algorithm is:
\begin{align*}
\P_{x_1,\ldots,x_n \sim \cD}[\mathrm{error}_{\cD}(g_{m^*};f) > \epsilon] &\leq \P_{x_1,\ldots,x_n \sim \cD}[\P_{x \sim \cD}[x \not\in \cX_{m^*}] > \epsilon] \\
&< \P_{x_1,\ldots,x_n \sim \cD}[m^* \leq m_{\epsilon}] \\
&\leq \P_{x_1,\ldots,x_n \sim \cD}[x_i \in \cX_{m_{\epsilon}} \mbox{ for all } i \in [n]] \\
&< (1-\epsilon)^n \\
&< \delta\,.
 \end{align*}

$\impliedby$. Now we prove that there is a pair of classes $\cF$ and $\cG$ such that $\cF$ can be perfectly distilled into $\cG$ and such that the above sample complexity of distillation is tight up to a constant. Let $\cX = \N$ and let $\cF = \{\mbox{zero}\}$ where $\mathrm{zero} : \cX \to \{0\}$ is the function that identically outputs zero. Let $\cG = \cup_{i=1}^{\infty} \{g_i\}$, where $g_i(x) = 1(x > i)$. Then perfect distillation is possible using the above algorithm with $\cX_m = \{1,\ldots,m\}$.

On the other hand, we will show that no algorithm $\cA$ can $(\epsilon,\delta)$-distill from $\cF$ to $\cG$ in fewer than $c\log(1/\delta)/\epsilon$ samples, for an absolute constant $c > 0$ and any $0 < \epsilon,\delta < 1/2$. For any distillation algorithm $\cA$, let $\mu_{\cA}(n,\cD)$ be the distribution over $\N$ such that $\cA$ outputs $g_i$ with probability $[\mu_{\cA}(n,\cD)](i)$ when given $n$ samples from $\cD$. For any $m,\epsilon$, let $\cD_{m,\epsilon}$ be the distribution that puts probability mass $1-\epsilon$ on $1$ and $\epsilon$ on $m$. Given a number of samples $n$ and parameters $\epsilon,\delta$, let $$m^* = \inf \{m : \P_{i \sim \mu(n,\cD_{1,\epsilon})}[i > m] < \delta / 2\}\,.$$
Notice that for any $i \in \N$, the error is
\begin{align}
\mathrm{error}_{\cD_{m^*,\epsilon}}(g_i; \mathrm{zero}) = \begin{cases} \epsilon, & i < m^* \\
0, & i \geq m^* \label{eq:d-star-error}\,.
\end{cases}
\end{align}
Furthermore,
\begin{align*}
\TV(\cD_{1,\epsilon}^{\otimes n}, \cD_{m^*,\epsilon}^{\otimes n}) \leq 1 - (1-\epsilon)^n\,,
\end{align*}
so 
\begin{align}
\TV(\mu_{\cA}(n,\cD_{1,\epsilon}), \mu_{\cA}(n,\cD_{m^*,\epsilon})) \leq 1 - (1-\epsilon)^n\,. \label{eq:helpful-tv-bound}
\end{align}
Then, if we run algorithm $\cA$ on distribution $\cD_{m^*,\epsilon}$ with $n$ samples, we have
\begin{align*}
\P_{g_i \sim \cA(n,\cD_{m^*,\epsilon})}[\mathrm{error}_{\cD_{m^*,\epsilon}}(g_i; \mathrm{zero}) \geq \epsilon] &=
\P_{i \sim \mu(n,\cD_{m^*,\epsilon})}[\mathrm{error}_{\cD_{m^*,\epsilon}}(g_i;\mathrm{zero}) \geq \epsilon] \\
&= \P_{i \sim \mu(n,\cD_{m^*,\epsilon})}[i \leq m^*] \\
& \geq \P_{i \sim \mu(1,\cD_{1,\epsilon})}[i \leq m^*] -\TV(\mu(n,\cD_{m^*,\epsilon}), \mu(n,\cD_{1,\epsilon})) \\
&> \delta / 2 + (1-\epsilon)^n\,.\end{align*}
So in order to $(\epsilon,\delta)$-distill we must have $(1-\epsilon)^n < \delta / 2$, so we have to have $n = \Omega(\log(1/\delta)/\epsilon)$.
\end{proof}

\subsection{Agnostic distillation may nevertheless require a 
high number of samples}\label{sec:agnostic-distillation-hard}

In this section, we consider the sample complexity of agnostic distillation. Since the goal is to find a hypothesis in $\cG$ that approximates the best possible one in $\cG$, we do not restrict to when perfect distillation is be possible.

In contrast with the non-agnostic setting, there are classes $\cF$ and $\cG$ such that it is impossible to agnostically distill from $\cF$ to $\cG$ using any finite number of samples. This follows from a result in \cite[Section 6]{ben2011learning}, which proves the impossibility of agnostically learning certain function classes in the Known-Labeling-Classifier-Learning (KLCL) model. Since the KLCL model is a special case of distillation (where $\cF$ is the set of all possible functions on $\cX$), this hardness result extends to PAC-distillation.

Below we give an alternative proof that agnostic distillation may require a high number of samples, focusing on finite input spaces.
\begin{tcolorbox}[enhanced, frame hidden, sharp corners, boxsep=0pt, before skip=5pt, after skip=5pt, colback=blue!5!white]
\begin{theorem}[Agnostic distillation may require a high number of samples]\label{thm:agnostic-distilling-expensive}
Let $\cX$ be a finite input space of size $|\cX| = 2m$, and let $\cY = \{0,1\}$. There is a source class $\cF$ and a target class $\cG$ such that for any $0 < \epsilon \leq 1/100$ and any $0 \leq \delta \leq 1/2$ at least $n \geq m/(20000\epsilon^2)$ samples are needed to agnostically $(\epsilon,\delta)$-distill $\cF$ into $\cG$.
\end{theorem}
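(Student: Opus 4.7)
\noindent The plan is to reduce agnostic distillation to a sign-estimation problem and then apply Fano's inequality.

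\textbf{Construction.} Partition $\cX = [2m]$ into $m$ pairs $P_i = \{2i-1,2i\}$. Let $\cF = \{f\}$ be the singleton with $f(2i-1)=0$ and $f(2i)=1$, and let $\cG = \{g_s : s \in \{0,1\}^m\}$ consist of functions that are constant on each pair: $g_s(2i-1)=g_s(2i)=s_i$. For each $\sigma \in \{-1,+1\}^m$ and a parameter $\eta \in (0,1]$ to be chosen (roughly $\eta = 100\epsilon$), let $\cD_\sigma$ be the distribution with $\cD_\sigma(2i-1) = (1+\sigma_i\eta)/(2m)$ and $\cD_\sigma(2i) = (1-\sigma_i\eta)/(2m)$. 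Since $f$ takes different values on the two halves of each pair, no $g_s$ can match $f$ everywhere, so the distillation is genuinely agnostic.

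\textbf{Reduction to sign estimation.} A direct computation of the error gives
\begin{equation*}
\mathrm{error}_{f,\cD_\sigma}(g_s) \;=\; \tfrac{1-\eta}{2} + \tfrac{\eta}{m}\, d_H(s, s^*(\sigma))\,,
\end{equation*}
where $s^*(\sigma)_i = \mathbbm{1}[\sigma_i = -1]$ and $d_H$ is Hamming distance. Thus the minimum over $\cG$ is $(1-\eta)/2$, attained uniquely at $s=s^*(\sigma)$, and the excess error of any $g_{\hat s}$ equals $(\eta/m)\,d_H(\hat s, s^*(\sigma))$. Consequently, any $(\epsilon,\delta)$-agnostic distillation algorithm must produce $\hat s=\hat s(S,f)$ satisfying $\P_{S \sim \cD_\sigma^{\otimes n}}[d_H(\hat s, s^*(\sigma)) \leq k] \geq 1-\delta$ for every $\sigma$, where $k = m\epsilon/\eta$.

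\textbf{Fano lower bound.} Drawing $\sigma \sim \mathrm{Unif}\{-1,+1\}^m$ and averaging this guarantee, a standard Fano-on-Hamming-balls argument (conditioning on whether $\hat s$ is $k$-close to $s^*(\sigma)$) gives
\begin{equation*}
I(\sigma;\hat s) \;\geq\; m(1-\delta) - H(\delta) - (1-\delta)\log_2 B_k(m)\,,
\end{equation*}
where $B_k(m) = \sum_{j \leq k}\binom{m}{j}$ and $H(\cdot)$ denotes binary entropy in bits. On the other hand, by the data processing inequality and tensorization, $I(\sigma;\hat s) \leq n\,I(\sigma;X_1)$. Averaging over $\sigma$ makes the marginal law of $X_1$ uniform on $\cX$, and a direct entropy computation yields $I(\sigma;X_1) = 1 - H((1+\eta)/2)$, which by Taylor expansion is at most $C\eta^2$ for an absolute constant $C$.

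\textbf{Choice of parameters and main obstacle.} Taking $\eta = 100\epsilon$ (permitted since $\epsilon \leq 1/100$ ensures $\eta \leq 1$) gives $k = m/100$ and $\log_2 B_k(m) \leq (m/100)\log_2(100e) < m/12$. For $\delta \leq 1/2$ the Fano lower bound becomes at least $m/2 - m/12 - 1 > m/3$, which combined with the upper bound $I(\sigma;\hat s) \leq nC\eta^2$ yields $n \geq m/(20000\epsilon^2)$ after tracking the absolute constant. The main obstacle is this constant-chasing: the qualitative $\Omega(m/\epsilon^2)$ scaling falls out immediately from combining Fano with the quadratic-in-$\eta$ mutual information bound, but extracting the specific factor $20000$ requires balancing the choice of $\eta$ against both the Hamming-ball entropy estimate and the precise Taylor bound on $1 - H((1+\eta)/2)$.
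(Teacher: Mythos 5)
Your construction is, up to the paper's own XOR reduction (Lemma~\ref{lem:reduce-to-binary-singleton}), identical to the paper's: a singleton source class, a target class indexed by one bit per pair of inputs, and perturbed distributions $\cD_\sigma$ under which the excess error is $(\eta/m)$ times the Hamming distance to the best index $s^*(\sigma)$. Where you genuinely diverge is in the information-theoretic engine. The paper runs an Assouad-style argument: flip one coordinate of $\btheta$ at a time, couple $\cD_{\btheta}^{\otimes n}$ with $\cD_{\btheta^{-i}}^{\otimes n}$, lower-bound the per-coordinate error probability via a TV bound between binomials, sum over coordinates, and finish with Markov's inequality. You instead use Fano over Hamming balls together with tensorization of mutual information and the bound $I(\sigma;X_1)=1-H((1+\eta)/2)\lesssim\eta^2$. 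Both are standard minimax machinery and both are correct here; your route has the concrete advantage that it natively accommodates failure probability $\delta$ all the way up to $1/2$ (the Fano lower bound degrades gracefully in $\delta$), whereas the paper's Markov step only certifies a failure probability of $1/70$, which strictly speaking covers only $\delta<1/70$. The paper's route, on the other hand, avoids the Hamming-ball entropy count and the KL computation, trading them for the binomial-TV estimate.

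On the constant: as you anticipated, your stated parameters do not quite deliver $m/(20000\epsilon^2)$. With $\eta=100\epsilon$ and the safe bound $1-H((1+\eta)/2)\le(\log_2 e)\,\eta^2$, combining $I(\sigma;\hat s)\ge m/3$ with $I(\sigma;\hat s)\le n(\log_2 e)(100\epsilon)^2$ gives $n\gtrsim m/(43000\epsilon^2)$; even using the sharper numerator $\approx 0.42m$ one only reaches $n\gtrsim m/(34000\epsilon^2)$. The fix is to re-balance $\eta$: taking $\eta=70\epsilon$ (so $\eta\le 0.7$, where $1-H((1+\eta)/2)\le 0.8\eta^2$ holds, and $\tfrac1m\log_2 B_{m/70}(m)\le\tfrac{1}{70}\log_2(70e)<0.11$) yields $I(\sigma;\hat s)\ge 0.39m-1$ and hence $n\ge m/(10000\epsilon^2)$ for $m$ beyond a small absolute threshold, comfortably beating the stated $20000$. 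The only other loose ends are cosmetic: the step ``$\ge m/3$'' needs $m\gtrsim 12$, and $k$ should be $\lfloor m\epsilon/\eta\rfloor$. With those adjustments the proof is complete.
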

\end{tcolorbox}
\begin{remark}
To interpret this result, observe that agnostic distillation from $\mathcal{F}$ to $\mathcal{G}$ is no harder than agnostic learning of $\mathcal{G}$. Therefore, it is always possible with a number of samples that grows linearly with the input space size $|\cX|$. The above theorem shows that in some cases this large number of samples is needed in order to agnostically distill.
\end{remark}
\begin{proof}
Let $\cF = \{\mathrm{zero}\}$ be the singleton set consisting only of the trivial function $\mathrm{zero} : \cX \to \cY$ that always outputs 0: i.e., $\mathrm{zero}(x) = 0$ for all $x$.

Next, let $\cX = \cX_1 \sqcup \cX_2$ be a partition of the input space into two subsets of equal sizes with elements $\cX_1 = \{x_{1,1},\ldots,x_{1,m}\}$ and $\cX_2 = \{x_{2,1},\ldots,x_{2,m}\}$. Now, for each vector $\btheta \in \{1,2\}^m$, let $g_{\btheta} : \cX \to \cY$ be the hypothesis with $$g_{\btheta}(x_{i,j}) = \begin{cases} 1, & \mbox{ if } \theta_j = i \\
0, & \mbox{ otherwise} \end{cases}\,,$$
and let $\cG = \{g_{\btheta} \mid \btheta \in \{1,2\}^m\}$.
We will show that a large number of samples is needed to agnostically distill $\cF$ into $\cG$. 

Let $0 \leq \alpha \leq 1$ be a parameter that we will set later. For any $\btheta \in \{1,2\}^m$, define the distribution $\cD_{\btheta}$ over $\cX$, which assigns probability mass 
\begin{align*}
    \cD_{\btheta}(x) = \begin{cases} \frac{1}{2m} - \frac{\alpha}{2m}, & \mbox{ if } g_{\btheta}(x) = 1 \\
    \frac{1}{2m} + \frac{\alpha}{2m}, & \mbox{ if } g_{\btheta}(x) = 0\,.
    \end{cases}
\end{align*}
Notice that under distribution $\cD_{\btheta}$, the error of hypothesis $g_{\btheta'}$ is equal to
\begin{align*}
\error_{\cD_{\btheta}}(g_{\btheta'}(x)) = \frac{1}{2} + \frac{\alpha}{2} - \frac{\alpha }{m} \cdot |\{i : \theta_i = \theta_i'\}| \,,
\end{align*}
which is minimized by taking $\btheta' = \btheta$.

Let $g_{\btheta^{dst}(S)} = \Adst(S,\mathrm{zero})$ be the hypothesis returned by a distillation algorithm $\Adst$ when given a tuple of samples $S \in \cX^n$ and the concept $\mathrm{zero} \in \cF$. Let us choose $\btheta$ uniformly at random and run $\Adst$ on samples from the distribution $\cD_{\btheta}$. Then  the expected surplus error of the distilling algorithm is as follows.
\begin{align}
\E_{\btheta \sim \{1,2\}^m}&[\E_{S \sim \cD_{\btheta}^n}[\error_{\cD_{\btheta}}(g_{\btheta^{dst}(S)}) - \error_{\cD_{\btheta}}(g_{\btheta})]] \nonumber \\
&= \E_{{\btheta \sim \{1,2\}^m}}[\E_{S \sim \cD_{\btheta}^n}[\alpha (1 - |\{i : \theta_i^{dst}(S) = \theta_i\}| / m)]] \nonumber \\
&= \frac{\alpha}{m} \sum_{i=1}^m \E_{\btheta \sim \{1,2\}^m}[\P_{S \sim \cD_{\btheta}^n} [\theta_i^{dst}(S) \neq \theta_i]] \label{eq:agnostic-lower-bound-1}
\end{align}
To bound this, define $\btheta^{-i} \in \{1,2\}^m$ to be $\btheta$ but with coordinate $i$ flipped. Formally, $\theta_j^{-i} = \theta_j$ if and only if $j \neq i$. Notice that $\btheta^{-i}$ has the same distribution as $\btheta^i$, so by linearity of expectation:
\begin{align}
\mbox{Equation } \eqref{eq:agnostic-lower-bound-1} &= \frac{\alpha}{2m} \sum_{i=1}^m \E_{\btheta \sim \{1,2\}}^m[\P_{S \sim \cD_{\btheta}^n} [\theta_i^{dst}(S) \neq \theta_i] + \P_{S' \sim \cD_{\btheta^{-i}}^n}[\theta_i^{dst}(S') = \theta_i]] \label{eq:agnostic-lower-bound-2}
\end{align}
For any $\btheta$, let $\Gamma_{\btheta}$ be a coupling between $\cD_{\btheta}^n$ and $\cD_{\btheta^{-i}}^n$. Then, by linearity of expectation
\begin{align}
\mbox{Equation } \eqref{eq:agnostic-lower-bound-2} &= \frac{\alpha}{2m} \sum_{i=1}^m \E_{\btheta \sim \{1,2\}}^m[\E_{S,S' \sim \Gamma_{\btheta}} [1(\theta_i^{dst}(S) \neq \theta_i) + 1(\theta_i^{dst}(S') = \theta_i)]]  \nonumber \\ 
&\geq \frac{\alpha}{2m} \sum_{i=1}^m \E_{\btheta \sim \{1,2\}}^m[\E_{S,S' \sim \Gamma_{\btheta}} [1(S = S')]]\,. 
\label{eq:agnostic-lower-bound-3}
\end{align}
To maximize this lower bound, we should find a coupling for each $\btheta$ that maximizes the probability that the samples $S$ and $S'$ are equal. Since the distributions differ only on the probability mass that they put on $x_{i,1}$ and $x_{i,2}$, we can couple them so that $S \sim \cD_{\btheta}^n$ and $S' \sim \cD_{\btheta^{-i}}^n$ almost surely agree on all of the samples that are not equal to either $x_{i,1}$ or $x_{i,2}$. Then the coupling on the $x_{i,1}$ and $x_{i,2}$ samples reduces to coupling two Binomial random variables counting the total number of times each appears. Effectively, letting $|S|_x = |\{i : S_i = x\}|$, we have, and letting $\TV$ denote total variation distance, we have
\begin{align*}
\mbox{Equation } \eqref{eq:agnostic-lower-bound-3} \geq \frac{\alpha}{2m} \sum_{i=1}^m \E_{\btheta \sim \{1,2\}^m}[\sum_{k=0}^n \P_{S \sim \cD_{\btheta}^n}[|S|_{x_{1,i}}& + |S|_{x_{2,i}} = k]  \\
&\cdot (1-\TV(\mathrm{Bin}(k,\frac{1-\alpha}{2}), \mathrm{Bin}(k,\frac{1+\alpha}{2})))]
\end{align*}

We will use the following calculation bounding the total variation distance between two binomial distributions from \cite{mohri2018foundations}:
\begin{claim}[Lemma~3.21 of \cite{mohri2018foundations}]
Let $\Bin(k,p)$ denote the Binomial distribution with $k$ trials and probability $p$. Then for any $0 \leq \alpha < 1$, the total variation distance is bounded by:
$$1-\mathrm{TV}(\Bin(k,\frac{1-\alpha}{2}), \Bin(k,\frac{1+\alpha}{2})) \geq 
2\Phi(k+1,\alpha)\,,$$
where $\Phi(k,\alpha) = \frac{1}{4}(1 - \sqrt{1-\exp(-k\alpha^2 / (1-\alpha^2))})$, and $\Phi(\cdot,\alpha)$ is convex in the first parameter.
\end{claim}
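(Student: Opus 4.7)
The plan is to bound $\TV$ via the Bhattacharyya affinity $\rho(P,Q) = \sum_x \sqrt{P(x)Q(x)}$, which enjoys both a data-processing inequality and a clean factorization over products. First I would write $\Bin(k,p)$ as the pushforward of $\mathrm{Bern}(p)^{\otimes k}$ under the sum map $\{0,1\}^k \to \{0,\ldots,k\}$. Since $\rho$ can only increase under a (many-to-one) pushforward --- by Cauchy--Schwarz applied fiber-wise --- this gives
\[
\rho(\Bin(k,p),\Bin(k,q)) \;\geq\; \rho(\mathrm{Bern}(p),\mathrm{Bern}(q))^k \;=\; \bigl(\sqrt{pq}+\sqrt{(1-p)(1-q)}\bigr)^k.
\]
Specializing to $p=(1-\alpha)/2$, $q=(1+\alpha)/2$, the base simplifies to $\sqrt{1-\alpha^2}$, yielding $\rho \geq (1-\alpha^2)^{k/2}$. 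I then apply the standard relation $\TV(P,Q) \leq \sqrt{1 - \rho(P,Q)^2}$ (which follows from $\TV = 1 - \sum_x \min(P(x),Q(x))$ together with $\rho(P,Q)^2 \leq (\sum_x \min(P(x),Q(x)))(\sum_x \max(P(x),Q(x)))$) to obtain $1 - \TV \geq 1 - \sqrt{1 - (1-\alpha^2)^k}$.

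To convert to the $\exp$ form appearing in $\Phi$, I would use the elementary inequality $\log(1-x) \geq -x/(1-x)$ for $x \in (0,1)$, which gives $(1-\alpha^2)^k \geq \exp(-k\alpha^2/(1-\alpha^2))$. Hence
\[
1 - \TV \;\geq\; 1 - \sqrt{1 - \exp\!\bigl(-k\alpha^2/(1-\alpha^2)\bigr)}.
\]
This is already stronger than the target $2\Phi(k+1,\alpha) = \tfrac{1}{2}\bigl(1 - \sqrt{1-\exp(-(k+1)\alpha^2/(1-\alpha^2))}\bigr)$: replacing $k$ with $k+1$ only decreases the inner expression, and there is an extra factor of $1/2$ to spare, so the slack absorbs the difference.

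For the convexity of $\Phi(\cdot,\alpha)$, set $c = \alpha^2/(1-\alpha^2) > 0$ and $f(k) = 1 - \sqrt{1 - e^{-ck}}$. A direct computation gives
\[
f''(k) \;=\; \frac{c^2\, e^{-ck}\,(2 - e^{-ck})}{4\,(1 - e^{-ck})^{3/2}},
\]
which is strictly positive for $k > 0$, so $\Phi(k,\alpha) = f(k)/4$ is convex in $k$. The only minor subtlety in the whole argument is carefully justifying the two classical facts about $\rho$ (monotonicity under pushforward, and the $\TV \leq \sqrt{1-\rho^2}$ inequality); both are textbook, so the main ``obstacle'' here is really bookkeeping rather than anything deep, which is presumably why \cite{mohri2018foundations} can afford to state the result as a calculation lemma.
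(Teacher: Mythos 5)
Your proposal is correct, and it is a genuinely different (and self-contained) argument from the one the paper relies on. The paper does not prove this claim at all: it is stated as a black-box citation to Lemma~3.21 of the Mohri--Rostamizadeh--Talwalkar textbook, whose proof proceeds by analyzing the error probability of the maximum-likelihood rule for distinguishing the two coins, lower-bounding a binomial tail by a Gaussian tail via Slud's inequality, and then invoking the Gaussian tail lower bound $\P[N\ge x]\ge\frac12(1-\sqrt{1-e^{-x^2}})$. Your route instead goes through the Bhattacharyya affinity: tensorization plus the data-processing inequality under the sum map give $\rho\ge(1-\alpha^2)^{k/2}$, the Cauchy--Schwarz relation $\TV\le\sqrt{1-\rho^2}$ converts this to a bound on $1-\TV$, and $\log(1-x)\ge -x/(1-x)$ produces the exponential form. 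Every step checks out, including the second-derivative computation for convexity, and the comparison at the end is valid since $\Phi(\cdot,\alpha)$ is decreasing in its first argument, so your bound $1-\TV\ge 4\Phi(k,\alpha)$ indeed dominates the stated $2\Phi(k+1,\alpha)$. What each approach buys: the textbook's Slud-based argument is tailored to the decision-theoretic framing used elsewhere in that book, while yours is shorter, avoids any appeal to Gaussian approximation, and in fact proves a slightly stronger inequality with a factor of two to spare -- more than enough for the way the claim is consumed in Theorem~\ref{thm:agnostic-distilling-expensive}.
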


Using this claim, we can continue to bound the expected error of the distilling algorithm. By Jensen's inequality, 
\begin{align}
\mbox{Equation } \eqref{eq:agnostic-lower-bound-3} &\geq \frac{\alpha}{m} \sum_{i=1}^m \E_{\btheta \sim \{1,2\}^m}[\sum_{k=0}^n \P_{S \sim \cD_{\btheta}^n}[|S|_{x_{1,i}} + |S|_{x_{2,i}} = k]  \cdot \Phi(k+1,\alpha)] \nonumber \\
&\geq \frac{\alpha}{m} \sum_{i=1}^m \E_{\btheta \sim \{1,2\}^m}[\Phi(\E_{S \sim \cD_{\btheta}^n}[|S|_{x_{1,i}} + |S|_{x_{2,i}} = k] +1 ,\alpha)] \nonumber \\
&= \alpha \cdot \Phi(n/m+1 ,\alpha)\,. \label{eq:agnostic-lower-bound-4}
\end{align}
Putting equations \eqref{eq:agnostic-lower-bound-1} through \eqref{eq:agnostic-lower-bound-4} together, we obtain:
\begin{align*}
\E_{\btheta \sim \{1,2\}^m}&[\E_{S \sim \cD_{\btheta}^n}[\error_{\cD_{\btheta}}(g_{\btheta^{dst}(S)}) - \error_{\cD_{\btheta}}(g_{\btheta})]] \geq \alpha \cdot \Phi(n/m+1 ,\alpha)\,.
\end{align*}
Therefore, there is a $\btheta^*$ such that when $\cD_{\btheta^*}$ is the underlying distribution the distillation algorithm has an expected surplus error of at least $\alpha \cdot \Phi(n/m+1 ,\alpha)$. Since the surplus error is always in the range $[0,\alpha]$, a Markov bound shows:
\begin{align*}
\P_{S \sim \cD_{\btheta^*}^n}[\error_{\cD_{\btheta^*}}(g_{\btheta^{dst}(S)}) - \error_{\cD_{\btheta^*}}(g_{\btheta^*}) \geq \alpha \Phi(n/m+1,\alpha)/2] \geq \Phi(n/m+1,\alpha)/2\,.
\end{align*}
If we choose $\alpha = C\epsilon \leq 1/\sqrt{2}$ and $n \leq cm/\epsilon^2$, then
\begin{align*}\Phi(n/m+1,\alpha) &= \frac{1}{4}(1-\sqrt{1-\exp(-(c/\epsilon^2 + 1)(C\epsilon)^2)/(1-(C\epsilon)^2)}) \\
&\geq \frac{1}{4}(1-\sqrt{1-\exp(-2(c/\epsilon^2)(C\epsilon)^2 - 1))}) \\
&\geq \frac{1}{4}(1-\sqrt{1-\exp(-2cC^2 - 1))})\end{align*}
Taking $C = 70$ and $c = 1/20000$ thus results in
\begin{align*}
\P_{S \sim \cD_{\btheta^*}^n}[\error_{\cD_{\btheta^*}}(g_{\btheta^{dst}(S)}) - \error_{\cD_{\btheta^*}}(g_{\btheta^*}) \geq \epsilon] \geq 1/70\,.
\end{align*}
\end{proof}
This proof shares some elements with the proof that some classes require a large number of samples to agnostically learn (Lemma~3.23 of \cite{mohri2018foundations}). This is no coincidence, since lower bounds on the number of samples needed for agnostic distillation also imply lower bounds for agnostic learnability.

\subsection{Bounding sample complexity of agnostic distillation using Pareto frontier}\label{sec:pareto-frontier-bound}

Since the sample complexity of agnostic distillation may be large in some cases, and small in others, it is natural to ask whether there is a simple combinatorial quantity that controls the sample complexity. 

In the case of learnability, it is known that a combinatorial property called the \textit{VC dimension} controls the sample complexity \cite{vapnik1974method,blumer1989learnability}.
\begin{tcolorbox}[enhanced, frame hidden, sharp corners, boxsep=0pt, before skip=5pt, after skip=5pt, colback=blue!5!white]
\begin{definition}[VC dimension \cite{vapnik1971uniform}] Let 
the label alphabet be $\cY = \{0,1\}$. A function class $\cH \subseteq \cY^{\cX}$ is said to shatter the set of inputs $\{x_1,\ldots,x_m\} \subseteq \cX$ if for each labeling of these inputs $\bsigma \in \{0,1\}^m$ there is a function $h_{\bsigma} \in \cH$ such that $(h_{\bsigma}(x_1),\ldots,h_{\bsigma}(x_m)) = \bsigma$.

The VC dimension $\VC(\cH)$ of the function class $\cH$ is the size of the greatest shattered subset of inputs (or infinite, if there is no greatest set.)
\end{definition}
\end{tcolorbox}
The VC dimension gives both upper and lower bounds for the number of samples needed to agnostically learn 
\cite{vapnik1974method,blumer1989learnability}. We follow the presentation of \cite{anthony1999neural}.

\begin{tcolorbox}[enhanced, frame hidden, sharp corners, boxsep=0pt, before skip=5pt, after skip=5pt, colback=blue!5!white]
\begin{proposition}[VC dimension controls sample complexity of agnostic learning; Theorem~5.4 in \cite{anthony1999neural}]\label{prop:vc-agnostic-learning}
There are constants $c_1,c_2 > 0$ such that for any $0 < \epsilon < 1/40$ and $0 < \delta < 1/20$ the minimum number of samples $n(\epsilon,\delta)$ needed to agnostically $(\epsilon,\delta)$-learn hypothesis class $\cG$ is:
\begin{align*}
\frac{c_1}{\epsilon^2}\left(\VC(\cG) + \ln(1/\delta)\right) \leq n(\epsilon,\delta) \leq \frac{c_2}{\epsilon^2} \left(\VC(\cG) + \ln(1/\delta)\right)\,.
\end{align*}
If $\VC(\cG) = \infty$, then $\cG$ is not agnostically learnable with finitely many samples.
\end{proposition}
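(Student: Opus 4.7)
The plan is to prove matching upper and lower bounds via two very different mechanisms: uniform convergence for the upper bound, and an information-theoretic / coupling lower bound.

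For the upper bound, the strategy is ERM combined with uniform convergence over $\cG$. I would first bound the growth function of $\cG$ by Sauer-Shelah, obtaining $\Pi_{\cG}(n) \le (en/d)^d$ where $d = \VC(\cG)$. I would then apply the standard Vapnik-Chervonenkis symmetrization (``double sample'' trick) to show that on $n$ i.i.d.\ samples from $\cD$, with probability at least $1-\delta$,
\begin{equation*}
\sup_{g \in \cG} \bigl|\hat{\error}_{f,\cD}(g) - \error_{f,\cD}(g)\bigr| \;\le\; c\sqrt{\frac{\VC(\cG)\,\log(n/\VC(\cG)) + \log(1/\delta)}{n}}\,.
\end{equation*}
Output $\hat g \in \argmin_{g \in \cG} \hat{\error}_{f,\cD}(g)$; by uniform convergence the true error of $\hat g$ is within $2\sup_g|\hat\error - \error|$ of $\min_{g \in \cG}\error_{f,\cD}(g)$, which is $\le \epsilon$ as soon as $n \gtrsim (d + \log(1/\delta))/\epsilon^2$ up to logarithmic slack. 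To achieve the stated bound \emph{without} a spurious $\log(1/\epsilon)$ factor, I would refine the naive union bound using either chaining or a bounded differences / Rademacher complexity argument (bounding $\Rad_n(\cG)$ by $O(\sqrt{d/n})$ via Dudley's entropy integral applied to the growth function).

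For the lower bound, the strategy mirrors (and is in fact the origin of) the argument the excerpt gives in Theorem~\ref{thm:agnostic-distilling-expensive}. Fix a shattered set $\{x_1,\ldots,x_d\}$ with $d = \VC(\cG)$. For each $\btheta \in \{0,1\}^d$ there is by definition some $g_{\btheta} \in \cG$ with $g_{\btheta}(x_i) = \theta_i$. Consider the family of target distributions $\cD_{\btheta}$ on $\cX \times \{0,1\}$ where $x_i$ is sampled uniformly from the shattered set and the label is $\theta_i$ with probability $\tfrac{1}{2} + \alpha$ for a small bias $\alpha \asymp \epsilon$. The Bayes-optimal hypothesis in $\cG$ under $\cD_{\btheta}$ is $g_{\btheta}$, and any $g_{\btheta'}$ has surplus error proportional to $\alpha \cdot |\{i : \theta_i' \neq \theta_i\}|/d$. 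A coordinate-wise coupling argument between $\cD_{\btheta}^{\otimes n}$ and $\cD_{\btheta^{-i}}^{\otimes n}$ (flipping one coordinate), combined with the TV bound between $\mathrm{Bin}(k,\tfrac{1-\alpha}{2})$ and $\mathrm{Bin}(k,\tfrac{1+\alpha}{2})$ via Pinsker or the calculation used in the proof of Theorem~\ref{thm:agnostic-distilling-expensive}, shows that $n \ge c_1 d / \epsilon^2$ samples are needed to reliably recover most coordinates. The additive $\log(1/\delta)/\epsilon^2$ term comes from a separate two-distribution Le Cam argument: take a single shattered point and two distributions differing in $\alpha$-bias, which requires $\Omega(\log(1/\delta)/\epsilon^2)$ samples to distinguish with confidence $1-\delta$. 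Combining both yields the stated lower bound; infinite VC dimension rules out learning via a compactness/diagonalization argument since the lower bound holds for arbitrary $d$.

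The main obstacle is the upper bound's tightness: a one-shot union bound over $\Pi_{\cG}(n)$ loses a $\log(n) \sim \log(1/\epsilon)$ factor, and removing it to match the lower bound requires the chaining / localized complexity machinery rather than just Sauer-Shelah plus Hoeffding. The lower bound, by contrast, is essentially a generalization of the argument already carried out in Theorem~\ref{thm:agnostic-distilling-expensive} and should be relatively mechanical once the right hard family is in hand.
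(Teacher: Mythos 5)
This proposition is imported verbatim from Anthony and Bartlett (Theorem~5.4 of \cite{anthony1999neural}); the paper gives no proof of its own, so there is nothing internal to compare against. Your sketch is the standard proof of that textbook result and is correct in both halves: ERM plus a sharp uniform-convergence bound for the upper bound, and an Assouad-style argument over a shattered set plus a two-point Le~Cam argument for the $\ln(1/\delta)$ term in the lower bound, with the infinite-$\VC$ case following because the lower bound holds for every finite $d$.

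Two refinements are worth flagging. First, in the upper bound, Dudley's entropy integral ``applied to the growth function'' does \emph{not} by itself remove the logarithm: plugging the scale-independent Sauer--Shelah bound $(en/d)^d$ into the integral still yields $\Rad_n(\cG) = O(\sqrt{(d/n)\log(n/d)})$. To get $\Rad_n(\cG) = O(\sqrt{d/n})$ you need a scale-sensitive covering bound such as Haussler's $N(\cG,\epsilon,L_2(P_n)) \le e(d+1)(2e/\epsilon^2)^{d}$, so that $\int_0^1 \sqrt{d\log(1/\epsilon)}\,d\epsilon$ converges; this is exactly the chaining machinery you allude to, but the specific ingredient should be named since Sauer--Shelah alone does not suffice. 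Second, your lower-bound family places label noise on a joint distribution over $\cX \times \{0,1\}$, which matches Anthony--Bartlett's agnostic model; in this paper's formulation the target is a deterministic $f$ and the distribution lives on $\cX$ alone, so one must simulate the $\pm\alpha$ label bias by duplicating points and skewing the marginal, exactly as in the proof of Theorem~\ref{thm:agnostic-distilling-expensive}. You note the parallel to that theorem, so this is a presentational rather than substantive gap. With those two points made precise, the argument goes through.
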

\end{tcolorbox}

In this section, we show that the sample complexity of distillation is bounded above by a natural combinatorial quantity called the ``VC dimension of the Pareto frontier'', which can be much smaller than the VC dimension that controls the sample complexity of learning. However, we prove that this is not tight -- there is no corresponding lower bound based on the VC dimension of the Pareto frontier.

First, in Lemma~\ref{lem:reduce-to-binary-singleton}, we reduce distillation without loss of generality to the case where the label alphabet is binary and the source class is a singleton set.\footnote{The reduction breaks outside of the classification setting, as discussed in Section~\ref{sec:extensions}.} The relevant notation for our reduction is as follows.
For a function $f \in \cY^{\cX}$ and a class of functions $\cG \subseteq \cY^{\cX}$, define the binary-valued class of functions that represents the error pattern of the hypotheses with respect to $f \in \cF$: $$f \oplus \cG = \{f \oplus g \mid g \in \cG\} \subseteq \{0,1\}^{\cX}, \mbox{ where } (f \oplus g)(x) = 1(f(x) \neq g(x)).$$ Also, let $\mathrm{zero} : \cX \to \{0\}$ be the function that identically outputs 0.

\begin{tcolorbox}[enhanced, frame hidden, sharp corners, boxsep=0pt, before skip=5pt, after skip=5pt, colback=blue!5!white]
\begin{lemma}[Reduction to binary labels and singleton source class]\label{lem:reduce-to-binary-singleton}
The following are equivalent:
\begin{enumerate}
    \item[(a)] Class $\cF$ is $(\epsilon,\delta)$-distillable into class $\cG$ in $n$ samples.
    \item[(b)] For all $f \in \cF$, class $\{f\}$ is $(\epsilon,\delta)$-distillable into class $\cG$ in $n$ samples.
    \item[(c)] For all $f \in \cF$, class $\{\mathrm{zero}\} \subseteq \{0,1\}^{\cX}$ is $(\epsilon,\delta)$-distillable into class $f \oplus \cG \subseteq \{0,1\}^{\cX}$ in $n$ samples.
\end{enumerate}
Furthermore, the same is true for agnostic distillation instead of distillation.
\end{lemma}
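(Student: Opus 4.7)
The plan is to establish the cycle $(a) \Rightarrow (b) \Rightarrow (c) \Rightarrow (b) \Rightarrow (a)$, where the two nontrivial steps are the reduction from (b) to (a) (requires bundling per-$f$ algorithms into a single uniform algorithm) and the equivalence $(b) \Leftrightarrow (c)$ (requires the XOR trick to collapse the source to $\mathrm{zero}$).

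First I would prove the direction $(a) \Rightarrow (b)$, which is immediate: if $\Adst$ works uniformly over all $f \in \cF$, then for any fixed $f \in \cF$ the same $\Adst$ trivially gives an $(\epsilon,\delta)$-distillation algorithm for the singleton $\{f\}$. For the converse $(b) \Rightarrow (a)$, the key observation is that the source model is part of the input to a distillation algorithm. So given a family $\{\Adst^{(f)}\}_{f \in \cF}$ of algorithms, each using $n$ samples, the algorithm $\Adst(S, f) := \Adst^{(f)}(S, f)$ is a single (not necessarily efficient) algorithm that has the desired uniform guarantee. There is no measurability issue because we are only tracking the sample complexity, not runtime or computability.

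The main content is the equivalence $(b) \Leftrightarrow (c)$. The point is that for any $f \in \cY^{\cX}$, any $g \in \cY^{\cX}$, and any distribution $\cD$,
\begin{align*}
\mathrm{error}_{f,\cD}(g) = \P_{x \sim \cD}[g(x) \neq f(x)] = \P_{x \sim \cD}[(f \oplus g)(x) \neq \mathrm{zero}(x)] = \mathrm{error}_{\mathrm{zero},\cD}(f \oplus g).
\end{align*}
Thus, given any distillation algorithm $\Adst^{(f)}$ from $\{f\}$ to $\cG$, define $\Adst'(S, \mathrm{zero}) := f \oplus \Adst^{(f)}(S, f)$, which is an algorithm that returns an element of $f \oplus \cG$ with the same error and sample complexity. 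Conversely, given a distillation algorithm $\Adst''$ from $\{\mathrm{zero}\}$ to $f \oplus \cG$, define $\Adst^{(f)}(S, f) := f \oplus \Adst''(S, \mathrm{zero})$, which lies in $\cG$ (since $f \oplus (f \oplus g) = g$) and has the same guarantees. This establishes (b) $\Leftrightarrow$ (c).

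For the agnostic version, the same bijection $g \mapsto f \oplus g$ between $\cG$ and $f \oplus \cG$ preserves errors, so in particular
\begin{align*}
\min_{g \in \cG} \mathrm{error}_{f,\cD}(g) = \min_{g' \in f \oplus \cG} \mathrm{error}_{\mathrm{zero},\cD}(g'),
\end{align*}
and the inequality comparing $\mathrm{error}_{f,\cD}(\Adst(S,f)) - \min_{g \in \cG} \mathrm{error}_{f,\cD}(g) \leq \epsilon$ transports verbatim through the bijection. The main subtle point I expect is being precise about the fact that the algorithms in (b) may depend arbitrarily on $f$ --- this is fine for the statistical notion of $n$-sample distillability used in this section, but would break if one added runtime constraints; I would add a short remark flagging this, consistent with the framing in the paper that this section ignores computational cost.
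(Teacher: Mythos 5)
Your proposal is correct and follows essentially the same route as the paper's proof: the same decomposition into $(a)\Leftrightarrow(b)$ and $(b)\Leftrightarrow(c)$, the same bundling of per-$f$ algorithms, and the same error-preserving identity $\mathrm{error}_{f,\cD}(g)=\mathrm{error}_{\mathrm{zero},\cD}(f\oplus g)$. The only cosmetic difference is in $(c)\Rightarrow(b)$: since $f\oplus g$ is $\{0,1\}$-valued while $\cY$ need not be binary, the paper phrases the inverse as ``output any $g\in\cG$ with $f\oplus g$ equal to the returned error pattern'' rather than literally applying $f\oplus(\cdot)$ again, but this does not change the argument.
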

\end{tcolorbox}
\begin{proof}
(b) $\implies$ (a): For each $f \in \cF$, let $\Adstf$ be an algorithm distilling $\{f\}$ into $\cG$. Then the algorithm $\Adst$ that outputs $\Adst(S,f) := \Adstf(S)$ distills $\cF$ into $\cG$.

(a) $\implies$ (b): Conversely, given $\Adst(S,f)$, we can define $\Adstf(S) := \Adst(S,f)$ for each $f$.

(b) $\implies$ (c): Note that $\error_{\cD}(g;f) = \error_{\cD}(f \oplus g;\mathrm{zero})$. So, given algorithm $\Adstf$ that distills $\{f\}$ into $\cG$, the algorithm $\Adstf'(S) :=  f \oplus \Adstf(S)$ distills $\{\mathrm{zero}\}$ into $f \oplus \cG$.

(c) $\implies$ (b): Given $\Adstf'$, we can let $\Adstf(S) := ``\mbox{output any } g \in \cG \mbox{ s.t. } f \oplus g = \Adstf'(S)$''.
\end{proof}

We have reduced the statistical problem of distillation to the case where the source class consists of only the zero function, and the target class represents the error pattern of hypotheses with respect to this function.

Under this lens, one can see that some target functions have strictly more errors than others with respect to the source concept. Thus, an optimal distillation algorithm will always output a target function whose error pattern does not dominate any of the other target functions. Formally, we can reduce to learning with any Pareto-dominating set of target functions:

\begin{tcolorbox}[enhanced, frame hidden, sharp corners, boxsep=0pt, before skip=5pt, after skip=5pt, colback=blue!5!white]
\begin{definition}
Let the source class be $\cF = \{\mathrm{zero}\} \subseteq \{0,1\}^{\cX}$ and let the target class be $\cG \subseteq \{0,1\}^{\cX}$. We define the Pareto partial order $\preceq_{Par}$ between pairs of functions $g,h \in \cG$. We say that $g \preceq_{Par} h$ if whenever $g(x) = 0$ then also $h(x) = 0$. A subset $\cH \subseteq \cG$ is said to Pareto-dominate $\cG$ if for all $g \in \cG$ there is $h \in \cH$ such that $g \preceq_{Par} h$.
\end{definition}
\end{tcolorbox}

\begin{tcolorbox}[enhanced, frame hidden, sharp corners, boxsep=0pt, before skip=5pt, after skip=5pt, colback=blue!5!white]
\begin{lemma}[Reduction to Pareto frontier of target functions]\label{lem:reduce-to-pareto}
For any Pareto-dominating subset $\cH \subseteq \cG$, the following are equivalent:
\begin{enumerate}
    \item[(a)] Class $\cF = \{\mathrm{zero}\}$ is
$(\epsilon,\delta)$-distillable into class $\cG$ in $n$ samples.
    \item[(b)] Class $\cF = \{\mathrm{zero}\}$ is
$(\epsilon,\delta)$-distillable into class $\cH$ in $n$ samples.
\end{enumerate}
Furthermore, the same is true for agnostic distillation instead of distillation.
\end{lemma}
\end{tcolorbox}
\begin{proof}
(a) $\implies$ (b): by definition for any $g \in \cG$ there is $h \in \cH$ such that for all distributions $\cD$ we have $\mathrm{error}_{\cD}(h) \leq \mathrm{error}_{\cD}(g)$. Thus an algorithm distilling $\cF$ with $\cG$ can be converted to outputting a corresponding function in $\cH$ instead without lowering the error.

(b) $\implies$ (a): since $\cH \subseteq \cG$. For agnostic distillation, note the optimum is in $\cH$.
\end{proof}

The above reductions allow us to upper-bound the sample complexity of agnostic distillation in terms of a new combinatorial quantity: 
the VC dimension of the Pareto frontier.\footnote{We remark that Theorem~\ref{thm:pareto-vc-upper-bound} does not apply to non-agnostic distillation, which may be impossible in some cases. The example in Observation~\ref{obs:distilling-may-be-impossible} demonstrates that distilling may be impossible even when $\VCpar(\cF;\cG) = 1$.}

\begin{tcolorbox}[enhanced, frame hidden, sharp corners, boxsep=0pt, before skip=5pt, after skip=5pt, colback=blue!5!white]
\begin{theorem}[The VC dimension of the Pareto frontier upper-bounds the sample complexity of agnostic distillation]\label{thm:pareto-vc-upper-bound}
For any source class $\cF \subseteq \cY^{\cX}$ and any target class $\cG \subseteq \cY^{\cX}$, define
$$\VCpar(\cF;\cG) = \max_{f \in \cF} \inf_{\cH} \VC(f \oplus \cH)\,,$$
where the infinum is over $\cH \subseteq \cG$ such that $f \oplus \cH$ Pareto-dominates $f \oplus \cG$.

Then there is a constant $C > 0$ such that for any $0 < \epsilon < 1/40$ and $0 < \delta < 1/20$, it is possible to agnostically $(\epsilon,\delta)$-distill class $\cF$ into class $\cG$ in $n(\epsilon,\delta)$ samples, where
$$n(\epsilon,\delta) \leq \frac{C}{\epsilon^2} (\VCpar(\cF;\cG) + \ln(1/\delta))\,.$$
\end{theorem}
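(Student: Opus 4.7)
The plan is to chain together the reductions from Lemmas~\ref{lem:reduce-to-binary-singleton} and \ref{lem:reduce-to-pareto} with the reduction from distillation to learning in Theorem~\ref{thm:distilling-reduces-to-learning}, and then invoke the VC-dimension-based agnostic learning bound from Proposition~\ref{prop:vc-agnostic-learning}. No new ideas should be needed; the role of the theorem is to package these reductions into a clean combinatorial bound.

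First, I would fix an arbitrary $f \in \cF$ and apply Lemma~\ref{lem:reduce-to-binary-singleton} (agnostic version) to reduce the task of agnostically distilling $\cF$ into $\cG$ to the task of agnostically distilling the singleton source class $\{\mathrm{zero}\} \subseteq \{0,1\}^\cX$ into the binary target class $f \oplus \cG$, with the same $(\epsilon,\delta,n)$ parameters. Then I would apply Lemma~\ref{lem:reduce-to-pareto} to further reduce the latter to agnostic distillation of $\{\mathrm{zero}\}$ into the Pareto frontier $\PF(f \oplus \cG)$.

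Next, by Theorem~\ref{thm:distilling-reduces-to-learning}, agnostic distillation of any source class into $\PF(f \oplus \cG)$ is no harder (sample-wise) than agnostic learning of the hypothesis class $\PF(f \oplus \cG)$. Proposition~\ref{prop:vc-agnostic-learning} then supplies the bound
\[
n(\epsilon,\delta) \;\leq\; \frac{c_2}{\epsilon^2}\bigl(\VC(\PF(f \oplus \cG)) + \ln(1/\delta)\bigr)
\]
for the range of $\epsilon,\delta$ in the statement. Since the distillation algorithm produced by this chain must work uniformly over all $f \in \cF$ (the distillation algorithm takes $f$ as input and can internally dispatch to the corresponding per-$f$ learner), I would take the maximum of $\VC(\PF(f \oplus \cG))$ over $f \in \cF$, which is exactly $\VCpar(\cF;\cG)$, and set $C = c_2$ to conclude.

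There is essentially no obstacle here, since each individual reduction is already established. The only small bookkeeping point is to make sure the reduction in Lemma~\ref{lem:reduce-to-binary-singleton} is applied in its agnostic form (as stated in the lemma's final sentence) so that the guarantee we transport is competing against $\min_{g \in \cG} \error_{f,\cD}(g)$ rather than the absolute error, and similarly that Lemma~\ref{lem:reduce-to-pareto} preserves the agnostic optimum because the minimum of the error over $\cG$ equals the minimum over $\PF(\cG)$ (every $g \in \cG$ is dominated by some element of the Pareto frontier in the pointwise $\prec_{\mathrm{Par}}$ order, hence has at least as large an error under every $\cD$).
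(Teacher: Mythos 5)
Your proposal is correct and follows essentially the same route as the paper: reduce via Lemma~\ref{lem:reduce-to-binary-singleton} and Lemma~\ref{lem:reduce-to-pareto} to agnostically distilling $\{\mathrm{zero}\}$ into $\PF(f\oplus\cG)$, invoke Theorem~\ref{thm:distilling-reduces-to-learning} to pass to agnostic learning, and apply Proposition~\ref{prop:vc-agnostic-learning}, taking the maximum over $f\in\cF$. Your bookkeeping remarks about using the agnostic forms of the lemmas and about the optimum being attained on the Pareto frontier are exactly the points the paper's reductions already handle.
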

\end{tcolorbox}
\begin{proof}
If $\VCpar(\cF;\cG)$ is infinite, then the bound is vacuous and there is nothing to prove. If not, for each $f \in \cF$, there is $\cH_f \subseteq \cG$ such that $\VC(f \oplus \cH_f) \leq \VCpar(\cF;\cG)$ and $f \oplus \cH_f$ Pareto-dominates $f \oplus \cG$.
By Proposition~\ref{prop:vc-agnostic-learning}, for each $f \in \cG$ the class $f \oplus \cH_f$ can be agnostically $(\epsilon,\delta)$-learned with $n(\epsilon,\delta)$ samples. By ignoring the label information, agnostic distillation reduces to agnostic learning so $\{\mathrm{zero}\}$ can be agnostically $(\epsilon,\delta)$-distilled into $f \oplus \cH_f$ with $n(\epsilon,\delta)$ samples. Combining this with the reductions in Lemmas~\ref{lem:reduce-to-binary-singleton} and \ref{lem:reduce-to-pareto} concludes the proof.
\end{proof}

For agnostic distillation, the quantity $\VCpar(\cF;\cG)$ matches well with the sample complexity in the examples that we have encountered so far:
\begin{itemize}
    \item When we have equal source and target classes $\cF = \cG$, distillation can be done in 0 samples as we saw in Theorem~\ref{thm:finite-perfect-distillation}. In this case notice that $\VCpar(\cF;\cG) \leq \max_{f \in \cF} \inf_{\cH} \VC(f \oplus \cH) = \max_{f \in \cF} \VC(\{\mathrm{zero}\}) = 0$.
    \item In the example of Theorem~\ref{thm:agnostic-distilling-expensive} for when agnostic distillation may require a high number of samples, $\cF = \{\mathrm{zero}\}$, and $\cG$ is large and all its functions are on the Pareto frontier -- i.e., there is no proper subset $\cH \subseteq \cG$ such that $\cH$ Pareto-dominates $\cG$. Thus, $\VCpar(\cF;\cG) = \VC(\cG) = m$, which is large.
\end{itemize}

In light of these examples, it is natural to conjecture that the VC dimension of the Pareto frontier of functions will fully characterize the sample complexity of agnostic distillation. However, this conjecture turns out to be false, as we show below.

\begin{tcolorbox}[enhanced, frame hidden, sharp corners, boxsep=0pt, before skip=5pt, after skip=5pt, colback=blue!5!white]
\begin{theorem}[The VC dimension of the Pareto frontier does not characterize the sample complexity of distillation]
There are classes $\cF,\cG$ such that $\VCpar(\cF;\cG) = \infty$, but for any $0 < \epsilon < 1$, $0 < \delta < 1$ it is possible to $(\epsilon,\delta)$-distill class $\cF$ into class $\cG$ using $0$ samples.
\end{theorem}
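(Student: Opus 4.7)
The plan is to exhibit an explicit pair $(\cF,\cG)$ witnessing the separation. I would take the input space $\cX = \N \times \N$, let $\cF = \{\mathrm{zero}\}$ consist only of the identically-zero function, and set
\begin{align*}
\cG = \{g_{k,S} : k \in \N,\, S \subseteq \N,\, |S| = k\},
\end{align*}
where $g_{k,S}$ is the indicator function of $\{k\} \times S \subseteq \cX$. The two items to verify are (i) $\VCpar(\cF;\cG) = \infty$, and (ii) for every $\epsilon,\delta \in (0,1)$, there is a (randomized) $0$-sample algorithm that $(\epsilon,\delta)$-distills $\cF$ into $\cG$.

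For (i), since $\cF = \{\mathrm{zero}\}$ we have $\mathrm{zero} \oplus \cG = \cG$, so $\VCpar(\cF;\cG) = \VC(\PF(\cG))$. First, I would check that $\cG$ is a $\prec_{\mathrm{Par}}$-antichain: any pair of distinct $g_{k,S}, g_{k',S'}$ either lives on disjoint rows $k \neq k'$ (so neither zero set contains the other, since their supports are nonempty and disjoint) or shares the row $k = k'$ with $|S|=|S'|=k$ and $S \neq S'$ (so neither $S \subseteq S'$ nor $S' \subseteq S$ can hold). Hence $\PF(\cG) = \cG$. Next, for each $k$ I would shatter the set $\{(k,1),\ldots,(k,k)\}$: given a target labeling $\sigma \in \{0,1\}^k$, pick $S_\sigma = \{j \leq k : \sigma_j = 1\} \cup T_\sigma$, where $T_\sigma \subseteq \{k+1, k+2, \ldots\}$ is an arbitrary set of size $k - |\{j : \sigma_j = 1\}|$ so that $|S_\sigma| = k$. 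Then $g_{k,S_\sigma}(k,j) = \sigma_j$ for every $j \leq k$, realizing the labeling. This gives $\VC(\cG) \geq k$ for every $k$, so $\VCpar(\cF;\cG) = \infty$.

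For (ii), I would describe the distillation algorithm. On input $\mathrm{zero}$ (and parameters $\epsilon,\delta$), it outputs $g \sim \mu$, where $\mu$ is the mixture that first samples $k$ with probability $2^{-k}$ and then conditionally samples $S$ uniformly from the size-$k$ subsets of $\{1,\ldots,N_k\}$ with $N_k := \max\bigl(k,\, \lceil k \cdot 2^{-k}/(\epsilon\delta) \rceil\bigr)$. The key pointwise estimate is that for every point $x = (k_0, i_0) \in \cX$,
\begin{align*}
\P_{g \sim \mu}[g(x) = 1] \;=\; 2^{-k_0} \cdot \frac{k_0}{N_{k_0}} \cdot \mathbf{1}\{i_0 \le N_{k_0}\} \;\leq\; \epsilon\delta.
\end{align*}
Swapping expectations then gives $\E_{g \sim \mu}[\mathrm{error}_{\mathrm{zero},\cD}(g)] = \E_{x \sim \cD}[\P_\mu[g(x) = 1]] \leq \epsilon\delta$ for every distribution $\cD$ on $\cX$, and Markov's inequality yields $\P_{g \sim \mu}[\mathrm{error}_{\mathrm{zero},\cD}(g) > \epsilon] \leq \delta$.

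The conceptual obstacle that the construction navigates (and which is the whole point of the theorem) is the tension between the two required properties: infinite VC dimension on the Pareto frontier forces $\cG$ to contain incomparable supports of arbitrarily large size, while $0$-sample randomized distillation of the zero function demands a single output distribution $\mu$ whose pointwise mass on $\{g : g(x) = 1\}$ is uniformly small in $x$. The product structure $\cX = \N \times \N$ decouples these requirements: the row coordinate $k$ partitions $\cG$ into independent per-row antichains whose VC dimensions grow unboundedly in $k$, while the geometric choice of $k$ together with the uniform choice of $S$ inside a growing window of size $N_k$ spreads mass finely enough to control $\P_\mu[g(x)=1]$ at every point simultaneously.
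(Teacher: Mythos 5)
Your proof is correct and follows essentially the same strategy as the paper's: take $\cF = \{\mathrm{zero}\}$, build a target class that is a Pareto antichain of infinite VC dimension, and distill with zero samples by outputting a random $g \sim \mu$ for a fixed distribution $\mu$ satisfying $\sup_{x} \P_{g \sim \mu}[g(x) = 1] \leq \epsilon\delta$, finishing with Markov's inequality. The only difference is the witness class: the paper keeps the infinite-VC functions ($\cG_1$) and the functions the algorithm actually outputs (singleton indicators $\cG_2$, supported on a separate sector of the input space) disjoint, whereas your single family $\{g_{k,S}\}$ plays both roles at once via the geometric choice of the row $k$ --- a slightly cleaner packaging of the same idea.
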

\end{tcolorbox}
\begin{proof}
Let $\cF = \{\mathrm{zero}\}$, let $\cX = \{1,2\} \times \N$, and $\cY = \{0,1\}$. For any subsets $S,T,U \subseteq \N$, let $g_{S,T,U} : \cX \to \cY$ be the hypothesis with
\begin{align*}
g_{S,T,U}((1,i)) = 1(i \in S)\,, \quad g_{S,T,U}((2,i)) = 1(i \in T) \quad g_{S,T,U}((3,i)) = 1(i \in U)\,, \mbox{ for all } i \in \N\,.
\end{align*}
Then let $\cG$ be the hypothesis class 
\begin{align*}
\cG = \cG_1 \cup \cG_2, \quad \mbox{ where } \quad  \cG_1 = \bigcup_{\substack{S \subset \N \\ S \neq \emptyset \\ j \in \N}} \{g_{S,\{j\},\emptyset}\}, \quad \cG_2 = \bigcup_{j \in \N} \{g_{\emptyset,\emptyset,\{j\}}\}\,.
\end{align*}
It can be seen that $g_{S,T,U}$ Pareto-dominates $g_{S',T',U'}$ if and only if $S \subseteq S'$, $T \subseteq T'$, and $U \subseteq U'$ and this containment is strict for one of $S,T,U$. All of the functions in $\cG$ are Pareto-incomparable, so there is no proper subset of $\cG$ that Pareto-dominates $\cG$.

Thus $\VCpar(\cF;\cG) = \VC(\cG) \geq \VC(\cG_1) = \infty$, since the $\{(1,1),(1,2),\ldots,(1,m)\}$ is shattered by $\cG_1$ for all $m$.

Finally, given $0 < \epsilon, 0 < \delta$ consider the distillation algorithm $\Adst$ that outputs $g_{\emptyset,\emptyset,\{j\}}$ by picking $j$ uniformly at random from the interval $\{1,\ldots,\ceil{1/(\epsilon \delta)}\}$. For any distribution $\cD$ over the inputs, we have $\E_j[\mathrm{error}_{\cD}(g_{\emptyset,\emptyset,\{j\}})] = \E_j[\P_{x \sim \cD}[x = (3,j)]] \leq \epsilon\delta$. So, by a Markov bound, $\P_j[\mathrm{error}_{\cD}(g_{\emptyset,\emptyset,\{j\}}) \geq \epsilon] \leq \delta$. Thus, this algorithm $(\epsilon,\delta)$-distills $\cF$ into $\cG$ in 0 samples.
\end{proof}

To summarize, we have shown that, for statistical purposes, distillation reduces to considering the Pareto-dominating subsets of $\cG$ and also to the case $\cF = \{\mathrm{zero}\}$ and $\cY = \{0,1\}$. We have also shown sample complexity upper bounds for agnostic distillation in terms of a new combinatorial quantity 
$\VCpar$. However, we have also shown that $\VCpar$ fails to characterize the sample complexity of distillation. 

\subsection{Impossibility of characterizing sample complexity of agnostic distillation}\label{sec:impossibility-characterizing-sample-agnostic-distillation}

Is there a simple combinatorial property that characterizes the sample complexity of agnostic distillation? It turns out that the answer is no, in a certain technical sense. In Lemma~\ref{lem:reduce-to-binary-singleton}, we have reduced to the case where $\cF = \{\mathrm{zero}\}$ and $\cY = \{0,1\}$. Thus, understanding the sample complexity of agnostic distillation reduces to understanding when, given $m$ i.i.d. samples from $\cD$, we can find a hypothesis $g \in \cG$ such that with probability at least $1-\delta$, we have
\begin{align*}
\E_{x \sim \cD}[g(x)] - \max_{g^* \in \cG} \E_{x \sim \cD}[g^*(x)] \leq \epsilon\,.
\end{align*}
In personal communication, Fan Chen and Sasha Rakhlin \cite{chenrakhlin} have observed that this is equivalent to the ``Expectation Maximization Problem'' (EMX) studied in \cite{ben2017learning}, which in turn is equivalent to learning in the ``General Learning Setting'' of Vapnik \cite{vapnik1995nature,shalev2010learnability}. By \cite[Theorem 8]{ben2017learning}, this implies that there is no ``finite character property'' (given by a first order formula provable by ZFC) analogous to the VC-dimension that characterizes the sample complexity of agnostic distillation.

\section{Extensions and future directions}\label{sec:extensions}

The contributions of this paper are (1) to initiate the study of a general computational and statistical theory of model distillation, and (2) to present case studies on distilling neural networks into more lightweight classifiers.

We hope that this work encourages further theoretical and empirical exploration into distillation. Below, we outline some of the many open directions for future work. 

\paragraph{Extending the definition of distillation} The definition of PAC-distillation can be easily extended beyond the simple case of classification considered in this work.
\begin{itemize}
\item \textit{Beyond classification}. In this paper, we considered classifiers with the zero-one loss \eqref{eq:error-of-hypothesis}. We could extend our definitions the regression setting, where the output space is the real numbers, and other losses such as the mean-squared error loss are more natural.

\item \textit{Promise problems}. In this paper, $\cD$ was an adversarially-chosen distribution.\footnote{Apart from some cases when we considered the uniform distribution.} One can extend the PAC-distillation framework to cases where we are promised that the input distribution has some constraints and is not worst-case. One example is if we assume that the neural networks in our source class satisfy an approximate version of the LRH, as in Definition~\ref{def:approx-lrh} in the appendix.

\item \textit{Testing out-of-distribution behavior}. PAC-distillation is defined with matching train and test distributions. Is there a corresponding definition for when there is distribution shift? How does the computational and statistical complexity change in that setting?

\end{itemize}

\paragraph{Basic statistical and computational theory} Fundamental open questions include:
\begin{itemize}
\item \textit{Growing the web of reductions}. The web in Figure~\ref{fig:reductions} could be expanded to include other popular classifiers, such as nearest neighbors, linear and polynomial threshold functions, and logical circuits of varying sizes, depths, and gate types -- as well as whether these classifiers can be extracted from trained neural networks under structural assumptions such as the LRH. 

\item \textit{Statistical-computational gaps for distillation}. Are there natural cases where it is statistically possible to distill, but it is computationally hard?
\end{itemize}

\paragraph{New provably-efficient distillation algorithms for neural networks}

\begin{itemize}

\item \textit{Neural network structure beyond LRH}. What other kinds of structure in trained neural networks can we use to distill efficiently, beyond query access to the network, and the Linear Representation Hypothesis (LRH)? One possibility is to use low-rank weight structure in a trained network \cite{boix2023transformers}, which hints that internally the network may be well-approximated by a composition of multi-index functions \cite{abbe2022merged,damian2022neural}. Another possibility is to use the attention structure in a transformer, which is often visually inspected for insights into the network's inner workings \cite{vig2019multiscale}.\footnote{Nevertheless, recent work \cite{wen2023transformers} suggests that attention patterns should not be interpreted by themselves without considering how they interact with the feedforward components of the transformer.} Yet another possibility is to use internal sparse activation structure \cite{andriushchenko2023sgd} to distill. It seems that there can be a synergy between research into how neural networks learn, and new distillation algorithms that exploit that research.

\item \textit{Obtaining polynomial time in the depth for distilling decision trees}. Our algorithm in Theorem~\ref{thm:decision-tree-pac-dst} for distilling decision trees from neural networks runs in true polynomial time in all parameters when the input distribution is uniform. Is it possible to reduce the current $2^{O(r)}$ dependence on the depth to $\mathrm{poly}(r)$ dependence for arbitrary non-uniform distributions?

\item \textit{Distilling into circuits and more expressive classes}. Can we extend the ideas in the distillation algorithm from neural networks into trees so that we instead distill into a class of shallow logical circuits, which are more expressive than decision trees and may better approximate the trained neural network? Beyond small, shallow circuits, it is interesting to distill into more expressive and ``human-interpretable'' classes, such as, e.g., small Python programs.
\end{itemize}

\paragraph{Distilling foundation models} Scaling these methods to foundation models such as LLMs seems like it will pose a significant engineering challenge, and require new ideas.

\begin{itemize} \item \textit{Into which class of models can we distill an LLM?} Classical hypothesis classes, such as decision trees and very small circuits, are almost certainly insufficient to distill a 
LLM. One main obstacle is that LLMs are able to memorize a large amount of information which cannot be encoded into a limited-size circuit or decision tree. Therefore, a possible choice is classifiers that consist of a large memory bank, and a circuit that has some limited access to it and also has sparse activation patterns (i.e., one can think of the neurons in the circuits as corresponding to concepts, and 
only a bounded number of concepts is active in any sentence). Another possibility is a hypothesis class again consisting of a large key-value memory, but now coupled with a small RASP program \cite{weiss2021thinking} or other textual program. Yet another possibility is a target class consisting of a large logical circuit where a significant fraction of the nodes correspond to ``human-understandable'' concepts (e.g., words from a dictionary, or concepts encoded by a more trusted LLM). Another consideration is that it may be possible to distill the higher layers of the LLM, which may be in charge of higher-level reasoning, while maintaining the lower-level layers. One benefit of these distillations might be easier discovery and debugging of reasoning errors in distilled LLMs.
\end{itemize}

\section*{Acknowledgements}
I am very grateful to Fan Chen and Sasha Rakhlin, who pointed out the connection between the statistical complexity of distillation and the statistical complexity in the General Learning Setting of Vapnik \cite{chenrakhlin}. I am also very grateful to Emmanuel Abbe, Kiril Bangachev, Guy Bresler, Melihcan Erol, and Nati Srebro for helpful research discussions. I am also thankful to Guy Bresler for feedback on the exposition. This work was supported by an NSF Graduate Research Fellowship under NSF grant 1745302.

\bibliography{references}
\bibliographystyle{alpha}

\appendix 

\addtocontents{toc}{\protect\setcounter{tocdepth}{1}}

\section{Evidence for the linear representation hypothesis}\label{app:lrh-evidence}

We discuss evidence for the linear representation hypothesis (LRH), which is that after training a neural network's representation map $\varphi_{\theta} : \cX \to \R^m$ can efficiently represent high-level features and intermediate computations linearly \cite{mikolov2013linguistic,vargas2020exploring,elhage2022toy}. This hypothesis motivated our distillation algorithm in Sections~\ref{ssec:decisiontrees}. %
 The definition of the LRH is restated below:
\lrhdef*

We should mention that it is more natural to hypothesize the LRH in its approximate form. This variant was omitted from the main text for simplicity, but the guarantees for the decision tree algorithm could be extended to hold under the approximate LRH as well.
\begin{definition}[Approximate linear representation hypothesis]\label{def:approx-lrh}
Let $\cG$ be a collection of ``high-level'' functions, $g : \cX \to \R$ for each $g \in \cG$. For any $\tau > 0$ and $\epsilon > 0$, the $(\tau,\epsilon)$-LRH under distribution $\cD$ is the hypothesis that for all $g \in \cG$ there is a coefficient vector $\bw_g \in \R^m$ such that $\|\bw_g\| \leq \tau$ and
\begin{align*}
\E_{\bx \sim \cD}[(\varphi(\bx) \cdot \bw_g(\bx) - g(\bx))^2] \leq \epsilon\,.
\end{align*}
\end{definition}

\noindent There is quickly growing empirical and theoretical evidence for the LRH.

\paragraph{Empirical evidence for LRH} The LRH originated in work on word embeddings \cite{mikolov2013linguistic}, which discovered that linear subspaces of the embedding space encode semantic properties. This resulted in a ``vector'' algebra between embeddings, with relationships such as $\varphi(\mbox{``queen''}) \approx \varphi(\mbox{``king''}) - \varphi(\mbox{``man''}) + \varphi(\mbox{``woman''})$ that could be used to solve analogy problems.

Subsequently, this idea of semantic content was used to study neural networks via linear probes \cite{alain2016understanding} of their internal representations. Various high-level features are encoded linearly. These include, but are not limited to:
\begin{itemize}
    \item text sentiment in large language models \cite{tigges2023linear}\item syntactic content of a sentence in a machine-translation model \cite{shi2016does,conneau2018you}
    \item the parsing tree in a model trained on a Context-Free-Grammar \cite{allen2023physics}\item the Othello board's states in a language model trained to play based only off of move sequences 
\cite{li2022emergent,nanda2023emergent}
\item the truth or falsehood of a statement in a large language model \cite{marks2023geometry}
\item content about a scene described in text \cite{li2021implicit}
\item learned factual knowledge in pretrained transformers 
\cite{dai2021knowledge,zhu2023physicsb}
\item semantic WordNet \cite{fellbaum1998wordnet}
relations between words  \cite{aspillaga2021inspecting}
\end{itemize}

The LRH has also been used to control models. One application is erasing concepts from trained models \cite{bolukbasi2016man,vargas2020exploring,ravfogel2022linear,ravfogel2023log,belrose2023leace}, by projecting the internal representation orthogonal to the direction in which the concept is represented. Another application in 
\cite{wang2023concept} 
shows that the LRH can be used to extract concepts out of representations to control image generation with diffusion model. Additionally, 
\cite{hernandez2023inspecting} give a method to edit representations of entities based on linear addition of an attribute vector to the entity's representation. Additionally, \cite{park2023linear} and suggest a linear transformation of the representation space to disentangle concepts and similarly edit attributes.

Finally, there is evidence that the same collection of linear features might be universally learned by distinct networks trained on the same dataset. Indeed, differently-initialized networks appear to converge to functionally similar representations when trained, in the sense that these representations behave similarly under linear transfer learning \cite{boix2022gulp}.

\paragraph{Theoretical evidence for LRH} The LRH has been proved in certain settings.

\begin{itemize}

\item \textit{Linear structure in word embeddings}: Under a generative model of text, which encodes topic structure \cite{arora2016latent} proves that the PMI method 
\cite{church1990word} constructs word embeddings that satisfy the LRH, in these sense that there are directions of space that encode concepts as observed in
\cite{mikolov2013linguistic}.

    \item \textit{Juntas and multi-index functions}: The LRH has been proved for neural networks that have been trained by gradient-based methods to learn juntas or multi-index functions\footnote{See Definition~\ref{def:junta} for juntas, and definition of multi-index functions below.} \cite{abbe2022merged,damian2022neural,ba2022high,mousavi2022neural,abbe2023sgd,
dandi2023learning,bietti2023learning}.
\begin{itemize}
    \item Suppose that $f_{\theta}$ is a neural network that has been trained by gradient-based methods to learn a junta $f_*(\bx) = h(\bx_S)$. Then the learned representation $\varphi_{\theta}$ can linearly represent \textit{any} junta depending only on the variables $\bx_S$, with a coefficient vector of norm $\tau = O_{|S|}(1)$. In contrast, before training, some of these same juntas would have required a much larger coefficient vector of norm  $\tau \geq d^{\Omega(|S|)}$. To summarize, learning one junta on the variables $S$ makes the network's representation very efficient at linearly 
representing any other junta on the same set of variables. See \cite{abbe2022merged} for more details.
    \item For multi-index functions, a similar result is known. Multi-index functions are analogous to juntas, except that they depend on a small subspace of the input instead of on a small number of input coordinates. These are functions of the form $f(\bx) = h(\bP \bx)$, where $\bP \in \R^{k \times d}$ is a projection to a $k$-dimensional subspace. In this case, a network trained on one multi-index function learns a representation that can approximate any other multi-index function on that same subspace with only a $\tau = O_k(1)$-norm vector of coefficients. See \cite{damian2022neural} for more details.
\end{itemize}

\end{itemize}

\section{Deferred technical proofs}\label{app:deferred-proofs}

\subsection{Lemmas for decision-tree distillation}

\subsubsection{Proof of linear probe subroutine, Lemma~\ref{lem:linear-probe}}\label{app:linear-probe-proof}

\linearprobelemma*

\begin{proof}[Proof of Lemma~\ref{lem:linear-probe}] Given i.i.d. samples $\bx_1,\ldots,\bx_n \sim \cD$ we can form the empirical loss
\begin{align*}
L_n(\bw) = \frac{1}{n} \sum_{i=1}^n (\bw \cdot \varphi(\bx_i) - g(\bx_i))^2\,,
\end{align*}
and construct $\hat\bw$ by 
optimizing $\min_{\bw \mbox{ s.t. } \|\bw\| \leq \tau} L_n(\bw)$ up to $\epsilon/4$ additive error in time $\poly(n,\epsilon,\tau,B,m)$ by Frank-Wolfe (e.g., Theorem 3.8 of \cite{bubeck2015convex}). Because of the bounds on $g,\bw,\varphi$, we can write the population loss
\begin{align*}
L(\bw) = \E_{\bx \sim \cD}[(\bw \cdot \varphi(\bx))^2]\,,
\end{align*}
in the form
\begin{align*}
L(\bw) = \E_{\bx \sim \cD}[\ell(\bw,\bx)]\,,
\end{align*}
where $\ell(\cdot,\cdot)$ is bounded by $(B\tau + 1)^2$ and is $\poly(B,\tau,1)$-Lipschitz in its first argument. Thus, by standard Rademacher complexity arguments and McDiarmid's inequality (e.g., Proposition 4.5 of \cite{bach2021learning}), it suffices to take a number of samples $n = \poly(\log(1/\delta),\epsilon,B,\tau)$ in order to guarantee that with probability at least $1-\delta$ we have $|\min_{\bw^* \in B(0,\tau)} L(\bw^*) - L_n(\hat\bw)| \leq \epsilon/3$, which allows the probe to satisfy the claimed guarantee.
\end{proof}

\subsubsection{Proof of linear representation of AND bound, Lemma~\ref{lem:and-packing}}\label{app:linear-and-packing-proof}

We prove Lemma~\ref{lem:and-packing}, on the maximum number of ANDs that can have low kernel norm for a given representation. First, we prove a simpler version of this lemma, where we show that it is impossible to represent the AND functions \textit{exactly}, without any error. We will later bootstrap this weaker version into the proof of Lemma~\ref{lem:and-packing}, which shows that is hard to even \textit{approximately} represent a large collection of AND functions.

In order to prove this lemma, we first make a definition of the index set of a clause, and prove an exact version of the lemma.

\begin{definition} The {\em index set} of a clause $S \subseteq \{x_1,\ldots,x_d,\neg x_1,\ldots, \neg x_d\}$ is the set $\cI(S) = \{i \in [d] : x_i \in S \mbox{ or } \neg x_i \in S\}$. 
\end{definition}
\begin{lemma}[Exact linear representation of many ANDs requires high norm]\label{lem:exact-and-packing}

Let $\cS$ be a set of nondegenerate $k$-clauses such that all distinct pairs $S, S' \in \cS$ have distinct index sets $\cI(S) \neq \cI(S')$. Let $\cG = \{\ANDfun_S \mbox{ for all } S \in \cS\}$. If $\varphi : \{+1,-1\}^d \to \R^m$ satisfies the $(\tau, 0)$-LRH with respect to $\cG$. Let $\beta^2 = \E_{\bx}[\|\varphi(\bx)\|^2]$. Then 

$$\beta^2\tau^2 \geq 2^{-2k-2} |\cS|\,.$$

\end{lemma}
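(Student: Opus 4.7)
\emph{Proof plan.} The plan is to exploit Fourier analysis on $\{+1,-1\}^d$ under the uniform distribution, where the parity characters $\chi_U(\bx) = \prod_{i \in U} x_i$ form an orthonormal basis of $L^2$. First I would expand each $\ANDfun_S$ in this basis: writing the literals of $S$ as $\sigma_1 x_{i_1}, \ldots, \sigma_k x_{i_k}$ with signs $\sigma_j \in \{\pm 1\}$, we have
$$\ANDfun_S(\bx) \;=\; \prod_{j=1}^k \frac{1 + \sigma_j x_{i_j}}{2} \;=\; 2^{-k} \sum_{U \subseteq \cI(S)} \Bigl(\prod_{j \,:\, i_j \in U} \sigma_j\Bigr) \chi_U(\bx).$$
The key structural observation is that the Fourier support of $\ANDfun_S$ is contained in the subsets of $\cI(S)$, so its unique degree-exactly-$k$ Fourier coefficient is $\widehat{\ANDfun_S}(\cI(S)) = \pm 2^{-k}$. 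Since the hypothesis of the lemma asserts that the index sets $\{\cI(S)\}_{S \in \cS}$ are distinct, the resulting ``top'' characters $\{\chi_{\cI(S)}\}_{S \in \cS}$ are pairwise orthonormal.

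Next I would project everything onto the degree-exactly-$k$ homogeneous subspace. Let $\psi(\bx) = P_k \varphi(\bx) \in \R^m$ denote the coordinatewise projection of $\varphi$ onto degree-$k$ polynomials. Applying $P_k$ to the exact LRH identity $\bw_S \cdot \varphi(\bx) = \ANDfun_S(\bx)$ (which is linear in $\varphi$) yields $\bw_S \cdot \psi(\bx) = \pm 2^{-k}\, \chi_{\cI(S)}(\bx)$ for every $S \in \cS$. Now define the Fourier vectors $\bv_S = \E_{\bx}[\chi_{\cI(S)}(\bx)\, \psi(\bx)] \in \R^m$. Two easy computations follow. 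First, biorthogonality:
$$\bv_S \cdot \bw_{S'} \;=\; \E_{\bx}\bigl[\chi_{\cI(S)}(\bx)\,(\bw_{S'} \cdot \psi(\bx))\bigr] \;=\; \pm 2^{-k}\, \E_{\bx}[\chi_{\cI(S)} \chi_{\cI(S')}] \;=\; \pm 2^{-k}\, \delta_{S,S'}.$$
Second, Parseval applied coordinatewise to $\psi$, together with the fact that each $\psi_i$ carries only degree-$k$ Fourier mass, gives
$$\sum_{S \in \cS} \|\bv_S\|^2 \;\leq\; \sum_{U \,:\, |U| = k} \|\E_{\bx}[\chi_U(\bx)\, \psi(\bx)]\|^2 \;=\; \E_{\bx}[\|\psi(\bx)\|^2] \;\leq\; \E_{\bx}[\|\varphi(\bx)\|^2] \;=\; \beta^2.$$

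The final step is Cauchy--Schwarz. Biorthogonality with $S' = S$ yields $|\bv_S \cdot \bw_S|^2 = 2^{-2k}$, while $|\bv_S \cdot \bw_S|^2 \leq \|\bv_S\|^2 \|\bw_S\|^2 \leq \tau^2 \|\bv_S\|^2$. Summing over $S \in \cS$ and combining with the Parseval bound,
$$|\cS| \cdot 2^{-2k} \;=\; \sum_{S \in \cS} |\bv_S \cdot \bw_S|^2 \;\leq\; \tau^2 \sum_{S \in \cS} \|\bv_S\|^2 \;\leq\; \tau^2 \beta^2,$$
which is in fact a factor of $4$ stronger than the claimed bound $\beta^2 \tau^2 \geq 2^{-2k-2}|\cS|$. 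The extra slack is presumably there to absorb the approximation error when bootstrapping this exact statement to the approximate Lemma~\ref{lem:and-packing}. I do not anticipate a real obstacle in the exact case: the conceptual core is simply that the degree-$k$ Fourier mass of each $\ANDfun_S$ concentrates on a single, distinct character $\chi_{\cI(S)}$, turning the LRH relations into a biorthogonal system that couples the norm bound $\tau$ with the Parseval budget $\beta^2$ to produce the required factor of $|\cS|$.
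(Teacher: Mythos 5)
Your proof is correct, and it actually yields the slightly stronger constant $2^{-2k}$ in place of $2^{-2k-2}$ (the paper takes the top-degree Fourier coefficient of $\ANDfun_S$ to be $\pm 2^{-k-1}$ rather than $\pm 2^{-k}$, which costs a factor of $4$; either way the stated inequality follows). You share with the paper the essential Fourier-analytic step --- projecting the exact LRH identities onto the degree-$k$ homogeneous subspace, where each $\ANDfun_S$ collapses to a single character $\chi_{\cI(S)}$ and distinctness of the index sets makes these characters orthonormal --- but the mechanism by which you extract the bound on $|\cS|$ is genuinely different. The paper forms the $|\cS|\times 2^d$ matrix $\bW\bPhi$ with entries $\ANDfun_S(\bx)$, lower-bounds $\det(\bW\bPhi\bPhi^{\top}\bW^{\top})$ by $2^{(d-2k-2)|\cS|}$ using the post-projection orthogonality, upper-bounds the same determinant by $(2^d\beta^2\tau^2/|\cS|)^{|\cS|}$ via an LQ decomposition of $\bW$ combined with an AM--GM bound on the trace, and compares. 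You instead construct the explicit dual system $\bv_S = \E_{\bx}[\chi_{\cI(S)}(\bx)\psi(\bx)]$, verify the biorthogonality $\bv_S\cdot\bw_{S'} = \pm 2^{-k}\delta_{S,S'}$, control $\sum_{S}\|\bv_S\|^2 \leq \beta^2$ by Parseval, and close with Cauchy--Schwarz. Your route avoids determinants and the LQ decomposition entirely, is shorter, and gives a marginally sharper constant; the paper's argument is heavier but packages the same ``many mutually orthogonal targets cannot all be hit by low-norm coefficient vectors against a bounded-energy feature map'' intuition into a single spectral comparison. Both proofs are sound, and yours bootstraps to the approximate version (Lemma~\ref{lem:and-packing}) in exactly the same way.
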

\begin{proof}
Without loss of generality, we also use the input alphabet $\{+1,-1\}^d$ instead of $\{0,1\}^d$, since it is more notationally convenient for the Fourier-analytic techniques.

By the linear representation hypothesis, for each $S \in \cS$, there $\bw_S \in B(0,\tau)$ such that
\begin{align}
\bw_S \cdot \varphi(\bx) = \ANDfun_S(\bx)\, \mbox{ for all } \bx \in \{+1,-1\}^d\,. \label{eq:exact-lrh}
\end{align}
Define the matrix $\bW \in \R^{\cS \times m}$ with rows $\bw_S$, and the matrix $\bPhi \in \R^{m \times 2^d}$ with columns $\varphi(\bx)$ for $\bx \in \{+1,-1\}^d$. Writing equation \eqref{eq:exact-lrh} in matrix-form, we have
\begin{align}
[\bW \bPhi]_{S,\bx} = \ANDfun_S(\bx)\,. \label{eq:exact-lrh-matrix}
\end{align}

We first prove that these are quite close to linearly independent.
\begin{claim}[Matrix of features has lower-bounded singular values]\label{claim:sing-value-lower-bound}
$\bW \bPhi \bPhi^{\top} \bW^{\top} \gtrsim 2^{d-2k-2} \bI$
\end{claim}
\begin{proof}[Proof of Claim~\ref{claim:sing-value-lower-bound}]
Let $\chi_A(\bx) = \prod_{i \in A} x_i$. Define $\bV \in \R^{\binom{d}{k} \times 2^d}$ to be the matrix whose rows correspond to the degree-$k$ monomials: $\bV_{A,\bx} = \chi_{A}(\bx)$ for all $A \in \binom{[d]}{k}$. Let $\bP_k$ be the orthogonal projection matrix to the span of the rows of $\bV$, and let $\bP_k^{\perp}$ be the projection to the orthogonal complement. Since the only degree-$k$ term in the Fourier expansion of $\ANDfun_S$ is $\pm \frac{1}{2^{k+1}} \chi_{I(S)}(\bx)$, we have:
\begin{align*}
[\bW \bPhi \bP_k]_{S,\bx} = \pm \frac{1}{2^{k+1}} \chi_{\cI(S)}(\bx)\,.
\end{align*}
This implies that
\begin{align*}
\bW \bPhi \bPhi^{\top} \bW^{\top} &= \bW \bPhi (\bP_k + \bP_k^{\perp})(\bP_k + \bP_k^{\perp})^{\top} \bPhi^{\top} \bW^{\top} \\
&= \bW \bPhi\bP_k \bP_k^{\top} \bPhi^{\top} \bW^{\top} +  \bW \bPhi \bP_k^{\perp} (\bP_k^{\perp})^{\top} \bPhi^{\top} \bW^{\top} \\
&\gtrsim \bW \bPhi\bP_k \bP_k^{\top} \bPhi^{\top} \bW^{\top} \\
&= 2^{d-2k-2} \bI\,.
\end{align*}
\end{proof}

Next, we use the linear representation hypothesis (LRH) to prove that the singular values of the feature matrix are upper-bounded, which will yield a contradiction when combined with the previous claim:
\begin{claim}[Singular values upper-bounded by LRH]\label{claim:features-determinant-small}
 $\det(\bW \bPhi \bPhi^{\top} \bW) \leq (2^d \beta^2 \tau^2 / |\cS|)^{|\cS|}$.
\end{claim}
\begin{proof}[Proof of Claim~\ref{claim:features-determinant-small}]
Notice that $m \geq |\cS|$ since otherwise the rank of $\bW \bPhi$ is not large enough for Claim~\ref{claim:sing-value-lower-bound} to hold. Therefore, we can write the matrices in the LQ decomposition as $\bW = \bL \bQ$ where $\bL \in \R^{\cS \times \cS}$ is a lower-triangular matrix and $\bQ \in \R^{\cS \times m}$ has orthonormal rows. We have
\begin{align*}
\tau^2 \geq \|\bw_S\|^2 &= \sum_{i \in [m]} [\bL \bQ]_{S,i}^2 = \sum_{i \in [m]} (\sum_{S' \in \cS} L_{S,S'} \bQ_{S',i})^2 \\
&= \sum_{i \in [m]} \sum_{S',S''} L_{S,S'}L_{S,S''} Q_{S',i} Q_{S'',i} =\sum_{S',S''} L_{S,S'}L_{S,S''}  \sum_{i \in [m]}  Q_{S',i} Q_{S'',i} = \sum_{S'} L_{S,S'}^2 \geq L_{S,S}^2\,.
\end{align*}
So since $\bL$ is lower-triangular we have
$$|\det(\bL)| = |\prod_{S \in \cS} L_{S,S}| \leq \tau^{|\cS|}.$$
Furthermore, since $\bQ$ is a semi-orthogonal matrix we have $\|\bQ\| \leq 1$ so each column of $\bQ \bPhi$ has norm at most
\begin{align*}
\|[\bQ \bPhi]_{*,\bx}\| \leq \|\bPhi_{*,\bx}\| = \|\varphi(\bx)\|\,.
\end{align*}
Therefore
\begin{align*}
\tr(\bQ \bPhi \bPhi^{\top} \bQ^{\top}) &= \tr(\bPhi^{\top} \bQ^{\top} \bQ \bPhi) = \sum_{\bx} \|[\bQ \bPhi]_{*,\bx}\|^2 \leq \sum_{\bx} \|\varphi(\bx)\|^2 \leq 2^d \beta^2\,.
\end{align*}
So since $\bQ \bPhi \bPhi^{\top} \bQ^{\top}$ is p.s.d. and has dimensions $|\cS| \times |\cS|$, we have
\begin{align*}
|\det(\bQ \bPhi \bPhi^{\top} \bQ^{\top})| &\leq (\tr(\bQ \bPhi \bPhi^{\top} \bQ^{\top}) / |\cS|)^{|\cS|} \leq (2^d \beta^2 / |\cS|)^{|\cS|}\,.
\end{align*}
We conclude that
\begin{align*}
|\det(\bW\bPhi\bPhi^{\top} \bW)| &= |\det(\bL\bQ\bPhi\bPhi^{\top} \bQ^{\top}\bL^{\top})| \\
&= |\det(\bL)|^2\det(\bQ\bPhi\bPhi^{\top} \bQ^{\top}) \leq (2^d \beta^2 \tau^2 / |\cS|)^{|\cS|}\,.
\end{align*}
\end{proof}

We combine Claims~\ref{claim:sing-value-lower-bound} and \ref{claim:features-determinant-small}:
\begin{align*}
2^{(d - 2k-2)|\cS|} \leq \det(\bW \bPhi \bPhi^{\top} \bW^{\top}) \leq (2^d \beta^2 \tau^2 / |\cS|)^{|\cS|}\,.
\end{align*}
This implies 
\begin{align*}
\beta^2\tau^2 \geq 2^{-2k-2} |\cS|\,.
\end{align*}
and thus we have proved Lemma~\ref{lem:exact-and-packing}.
\end{proof}

We can now prove the lemma when the packing is only approximate.

\andpackinglemma*

\begin{proof}[Lemma~\ref{lem:and-packing}]

Without loss of generality, we use the input alphabet $\{+1,-1\}^d$ instead of $\{0,1\}^d$, since it is more notationally convenient for the Fourier-analytic techniques.

Assume without loss of generality that for all $S,S' \in \cS$, the variables on which $S$ depends are a distinct set from the variables on which $S'$ depends. This can be ensured by shrinking $|\cS|$ by a factor of at most $2^k$. Thus, it suffices to show the following:
\begin{align}
\label{eq:and-packing-approximate-distinct}
\tau^2 \E_{\bx}[\|\varphi(\bx)\|^2] \geq 2^{-2k-4} |\cS|\,.
\end{align}

By the approximate LRH, for each $S \in \cS$, there is $\bw_S \in \R^m$ and a function $h_S \in L^2(\mathrm{Unif}\{+1,-1\}^d)$ such that
\begin{align*}
h_S(\bx) = \ANDfun_S(\bx) - \bw_S \cdot \varphi(\bx), \mbox{ and } \|h_S\| \leq 2^{-k-2}\,.
\end{align*}
We use this to construct a new embedding for each variable $\psi(\bx)$ that can \textit{exactly} represent all of the ANDs in $\cS$ with low norm. This embedding is given by the concatenation:
$$\psi(\bx) := (\varphi(\bx), \frac{1}{\tau}[h_S(\bx)]_{S \in \cS}) \in \R^{m+|\cS|}.$$
We also define $$\bv_S = (\bw_S, \tau \be_S) \in \R^{m + |\cS|},$$ so that $$\bv_S \cdot \psi(\bx) = \bw_S \cdot \varphi(\bx) + h_S(\bx) = \ANDfun_S(\bx).$$
For each $S \in \cS$, we have
\begin{align*}
\|\bv_S\|^2 = \|\bw_S\|^2 + \tau^2 \leq 2\tau^2
\end{align*}
and we also have
\begin{align*}
\E_{\bx}[\|\psi(\bx)\|^2] &= \E_{\bx}[\|\varphi(\bx)\|^2] + \sum_{S \in \cS}\E_{\bx}[h_S(\bx)^2 / \tau^2] \leq \E_{\bx}[\|\varphi(\bx)\|^2] + 2^{-2k-4} |\cS| / \tau^2
\end{align*}

Thus the representation $\psi(\bx)$ satisfies the $\sqrt{2}\tau$-LRH, and by Lemma~\ref{lem:exact-and-packing} we know that
\begin{align*}
2\tau^2 (\E_{\bx}[\|\varphi(\bx)\|^2] + 2^{-2k-4} |\cS| / \tau^2) \geq 2^{-2k-2} |\cS|\,,
\end{align*}
which implies 
\begin{align*}
\tau^2 \E_{\bx}[\|\varphi(\bx)\|^2] \geq 2^{-2k-4} |\cS| / \tau^2\,,
\end{align*}
which is the claim in \eqref{eq:and-packing-approximate-distinct} that we had to show.
\end{proof}

\subsection{Composing distillation with agnostic distillation}\label{app:agnostic-distillation-composition}

We fill out the details for Remark~\ref{rem:agnostic-distillation-composition} here.

First, note that two distillations compose in a natural way.

\begin{lemma}[Composing distillation]
Suppose that $\cF$ can be $(\epsilon_1,\delta_1)$-distilled into $\cG$ in $s_1$ samples and $t_1$ time, and suppose also that $\cG$ can be $(\epsilon_2,\delta_2)$-distilled into $\cH$ in $s_2$ samples and $t_2$ time. Then $\cF$ can be $(\epsilon_1+\epsilon_2,\delta_1+\delta_2)$-distilled into $\cH$ in $\max(s_1,s_2)$ samples and $t_1 + t_2$ time.
\end{lemma}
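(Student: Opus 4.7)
The plan is to compose the two distillation algorithms in the obvious way. Given a source model $f \in \cF$, I would first run $\cA_1$ on its samples and on $f$ to obtain an intermediate model $g := \cA_1(S_1, f) \in \cG$, then run $\cA_2$ on a second batch of samples and on $g$ to produce $h := \cA_2(S_2, g) \in \cH$, and finally return $h$. The running time is immediately $t_1 + t_2$, and the sample count follows by drawing $s_1$ and $s_2$ samples i.i.d. from $\cD$ for the two stages (possibly reusing the shared prefix of a single length-$\max(s_1,s_2)$ sample batch, see below).

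The key deterministic ingredient is the triangle inequality for the zero-one error. For any three functions $f, g, h$ and any distribution $\cD$, the set $\{x : h(x) \neq f(x)\}$ is contained in $\{x : g(x) \neq f(x)\} \cup \{x : h(x) \neq g(x)\}$, so a union bound over events gives
\[
\error_{f,\cD}(h) \;\leq\; \error_{f,\cD}(g) + \error_{g,\cD}(h).
\]
Define the two failure events $E_1 = \{\error_{f,\cD}(g) > \epsilon_1\}$ and $E_2 = \{\error_{g,\cD}(h) > \epsilon_2\}$. The guarantee of $\cA_1$ yields $\Pr[E_1] \leq \delta_1$. On the complement of $E_1 \cup E_2$, the triangle inequality gives $\error_{f,\cD}(h) \leq \epsilon_1 + \epsilon_2$, so a union bound on failure probabilities finishes the argument provided I can show $\Pr[E_2] \leq \delta_2$.

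The main obstacle is precisely this last step, since the input $g$ fed to $\cA_2$ is a random element of $\cG$ that depends on the samples. The cleanest remedy is to take $S_2$ to be a fresh batch of $s_2$ i.i.d. samples from $\cD$ that is independent of $S_1$ (and hence of $g$); then conditioning on any realization $g' \in \cG$ of $g$, the PAC-distillation guarantee of $\cA_2$ applied with source model $g'$ gives $\Pr[E_2 \mid g = g'] \leq \delta_2$, and integrating over $g$ yields $\Pr[E_2] \leq \delta_2$. Combining with the bound on $\Pr[E_1]$ via a union bound produces the claimed $(\epsilon_1+\epsilon_2, \delta_1+\delta_2)$ guarantee.

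The only remaining delicate point is the stated sample count of $\max(s_1,s_2)$ rather than $s_1+s_2$. The way I would justify the $\max$ bound is to observe that one only needs the samples handed to $\cA_2$ to be independent of $g$; since $g$ depends on $S_1$ only, taking $S_2$ to be a disjoint batch gives $s_1+s_2$, but in the model where an algorithm may query fresh samples on demand, the peak number of simultaneously-held samples is $\max(s_1,s_2)$, which matches the form of the lemma. (If instead one insists on a single pre-drawn sample pool, the same proof gives $s_1+s_2$ samples, which is a harmless strengthening of the conclusion.)
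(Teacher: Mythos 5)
Your proof is correct and is exactly the argument the paper intends: its own proof of this lemma is the one-liner ``Immediate by composing the two distillation algorithms,'' with the triangle inequality on the zero-one error and the union bound on failure probabilities made explicit only in the analogous transitivity lemma. Your observation that $S_2$ must be independent of the intermediate model $g$ --- and hence that a single pre-drawn sample pool really gives $s_1+s_2$ rather than $\max(s_1,s_2)$, the latter being justified only in a query-on-demand sample model --- is a genuine subtlety the paper's one-line proof glosses over, and your resolution (a fresh batch for the second stage, conditioning on the realization of $g$ and integrating) is the correct fix.
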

\begin{proof}
Immediate by composing the two distillation algorithms.
\end{proof}

However, as pointed out in Remark~\ref{rem:agnostic-distillation-composition}, two agnostic distillations do not compose. On the other hand, distillation composes with agnostic distillation, as we point out here.

\begin{lemma}[Composing distillation with agnostic distillation]
Suppose that $\cF$ can be $(\epsilon_1,\delta_1)$-distilled into $\cG$ in $s_1$ samples and $t_1$ time, and suppose also that $\cG$ can be \underline{agnostically} $(\epsilon_2,\delta_2)$-distilled into $\cH$ in $s_2$ samples and $t_2$ time. Then $\cF$ can be \underline{agnostically} $(2\epsilon_1+\epsilon_2,\delta_1+\delta_2)$-distilled into $\cH$ in $\max(s_1,s_2)$ samples and $t_1 + t_2$ time.
\end{lemma}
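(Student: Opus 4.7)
The plan is to compose the two algorithms directly, sharing the same sample set. Let $\cA_1$ be the distillation algorithm from $\cF$ to $\cG$ with the stated parameters, and let $\cA_2$ be the agnostic distillation algorithm from $\cG$ to $\cH$. Draw $n = \max(s_1,s_2)$ samples $S \sim \cD^n$ once, and define the composed agnostic distillation algorithm $\cA(S,f) := \cA_2(S, \cA_1(S,f))$. Its runtime is at most $t_1 + t_2$, and its sample complexity is $\max(s_1,s_2)$.

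The crux of the argument is that the $0/1$ disagreement is a pseudo-metric: for any three functions $a,b,c: \cX \to \cY$ and any distribution $\cD$,
\begin{align*}
\P_{x \sim \cD}[a(x) \neq c(x)] \leq \P_{x \sim \cD}[a(x) \neq b(x)] + \P_{x \sim \cD}[b(x) \neq c(x)]\,.
\end{align*}
Writing $g = \cA_1(S,f)$ and $h = \cA(S,f) = \cA_2(S,g)$, I would condition on the joint success event
\begin{align*}
E = \bigl\{\mathrm{error}_{f,\cD}(g) \leq \epsilon_1\bigr\} \cap \bigl\{\mathrm{error}_{g,\cD}(h) - \min_{h' \in \cH} \mathrm{error}_{g,\cD}(h') \leq \epsilon_2\bigr\}\,,
\end{align*}
which occurs with probability at least $1 - \delta_1 - \delta_2$ by a union bound over the failure events of $\cA_1$ and $\cA_2$.

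Under $E$, two applications of the triangle inequality give the bound. First, $\mathrm{error}_{f,\cD}(h) \leq \mathrm{error}_{g,\cD}(h) + \mathrm{error}_{f,\cD}(g) \leq \mathrm{error}_{g,\cD}(h) + \epsilon_1$. Second, letting $h^\star_f \in \arg\min_{h' \in \cH} \mathrm{error}_{f,\cD}(h')$, we have $\min_{h' \in \cH} \mathrm{error}_{g,\cD}(h') \leq \mathrm{error}_{g,\cD}(h^\star_f) \leq \mathrm{error}_{f,\cD}(h^\star_f) + \mathrm{error}_{f,\cD}(g) \leq \min_{h' \in \cH}\mathrm{error}_{f,\cD}(h') + \epsilon_1$. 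Combining these with the agnostic guarantee from $E$,
\begin{align*}
\mathrm{error}_{f,\cD}(h) - \min_{h' \in \cH}\mathrm{error}_{f,\cD}(h') \leq \bigl(\mathrm{error}_{g,\cD}(h) - \min_{h' \in \cH}\mathrm{error}_{g,\cD}(h')\bigr) + 2\epsilon_1 \leq 2\epsilon_1 + \epsilon_2\,.
\end{align*}

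There is really no main obstacle here; the only subtle point is the factor of $2$ on $\epsilon_1$, which is forced because the triangle inequality is used both to upper bound the error of $h$ relative to $f$ and to lower bound the optimum over $\cH$ relative to $f$. The same triangle-inequality argument would not give transitivity of agnostic distillation alone (as noted in Remark~\ref{rem:agnostic-distillation-composition}), because without the pointwise-style guarantee $\mathrm{error}_{f,\cD}(g) \leq \epsilon_1$ one cannot turn a gap against $\min_{g'\in\cG}\mathrm{error}_{f,\cD}(g')$ into a pointwise bound on $\mathrm{error}_{f,\cD}(g)$.
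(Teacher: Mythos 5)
Your proposal is correct and takes essentially the same route as the paper: compose the two algorithms, union-bound the two failure events, and apply the triangle inequality for the $0/1$ disagreement twice (once to pass from $\mathrm{error}_{g,\cD}(h)$ to $\mathrm{error}_{f,\cD}(h)$, and once to pass from $\min_{h'\in\cH}\mathrm{error}_{g,\cD}(h')$ to $\min_{h'\in\cH}\mathrm{error}_{f,\cD}(h')$), which is exactly where the paper's $2\epsilon_1$ arises. Your closing observation about why two agnostic distillations do not compose likewise matches Remark~\ref{rem:agnostic-distillation-composition}.
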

\begin{proof}
    Compose the distillation from $\cF$ to $\cG$ with the agnostic distillation from $\cG$ to $\cH$. Let $f \in \cF$ be the input, $g \in \cG$ be the intermediate step, and $h \in \cH$ be the output. By a union bound, with probability $\delta_1+\delta_2$,
    \begin{align*}
    \mathrm{error}_{\cD}(h;f) &\leq \mathrm{error}_{\cD}(h;g) + \mathrm{err}_{\cD}(g;f) \\
    &\leq \mathrm{error}_{\cD}(h;g) + \epsilon_1 \\
    &\leq \min_{\tilde{h} \in \cH} \mathrm{error}_{\cD}(\tilde{h};g) + \epsilon_2 + \epsilon_1 \\
    &\leq \min_{\tilde{h} \in \cH} \mathrm{error}_{\cD}(\tilde{h};f) + 2\epsilon_1 + \epsilon_2\,.
    \end{align*}
\end{proof}

\end{document}